\newcommand{\PARAMETERS}{\STATE \hspace{-3.5mm}{\bf Params:}\xspace}
\newtheorem{theorem}{Theorem}[section]
\newtheorem{lemma}[theorem]{Lemma}
\newtheorem{proposition}[theorem]{Proposition}
\newtheorem{fact}[theorem]{Fact}
\newtheorem{remark}[theorem]{Remark}
\theoremstyle{definition}
\newtheorem{definition}[theorem]{Definition}
\def\ddefloop#1{\ifx\ddefloop#1\else\ddef{#1}\expandafter\ddefloop\fi}
\def\ddef#1{\expandafter\def\csname #1\endcsname{\ensuremath{\mathbb{#1}}}}
\DeclareMathOperator*{\E}{\mathbb{E}}
\def\ddef#1{\expandafter\def\csname c#1\endcsname{\ensuremath{\mathcal{#1}}}}
\def\ddef#1{\expandafter\def\csname b#1\endcsname{\ensuremath{\bm #1}}}
\definecolor{Ggreen}{RGB}{60, 186, 84}
\DeclareMathOperator*{\Ex}{\mathbb{E}}
\newcommand{\eps}{\varepsilon}
\newcommand{\prn}[1]{\left ( #1 \right )}
\newcommand{\sq}[1]{\left [ #1 \right ]}
\newcommand{\Deps}{D_{e^{\eps}}}
\newcommand{\KL}{\mathsf{KL}}
\newcommand{\BnB}{Balls-and-Bins\xspace}
\newcommand{\DP}{\mathsf{DP}}
\newcommand{\SGD}{\mathsf{SGD}}
\newcommand{\DPSGD}{\mathsf{DP}\text{-}\mathsf{SGD}}
\newcommand{\ABLQ}{\mathsf{ABLQ}}
\newcommand{\DPSGDG}{\DPSGD_{\cG}}
\newcommand{\DPSGDP}{\DPSGD_{\cP}}
\newcommand{\DPSGDS}{\DPSGD_{\cS}}
\newcommand{\DPSGDB}{\DPSGD_{\cB}}
\newcommand{\ABLQG}{\ABLQ_{\cG}}
\newcommand{\ABLQD}{\ABLQ_{\cD}}
\newcommand{\ABLQP}{\ABLQ_{\cP}}
\newcommand{\ABLQS}{\ABLQ_{\cS}}
\newcommand{\ABLQB}{\ABLQ_{\cB}}
\newcommand{\epsG}{\eps_{\cG}}
\newcommand{\deltaG}{\delta_{\cG}}
\newcommand{\deltaD}{\delta_{\cD}}
\newcommand{\deltaP}{\delta_{\cP}}
\newcommand{\deltaS}{\delta_{\cS}}
\newcommand{\deltaB}{\delta_{\cB}}
\newcommand{\PD}{P_{\cD}}
\newcommand{\QD}{Q_{\cD}}
\newcommand{\PS}{P_{\cS}}
\newcommand{\QS}{Q_{\cS}}
\newcommand{\PB}{P_{\cB}}
\newcommand{\QB}{Q_{\cB}}
\newcommand{\PP}{P_{\cP}}
\newcommand{\QP}{Q_{\cP}}
\newcommand{\event}{\Gamma}
\newcommand{\dominates}{\succcurlyeq}%
\newcommand{\CDF}{\mathsf{CDF}}
\newcommand{\Beta}{\mathsf{Beta}}
\newcommand{\Unif}{\mathsf{Unif}}
\newcommand{\Ber}{\mathsf{Ber}}
\begin{document}

\runningauthor{Chua, Ghazi, Harrison, Leeman, Kamath, Kumar, Manurangsi, Sinha, Zhang}

\twocolumn[

\aistatstitle{Balls-and-Bins Sampling for DP-SGD}

\aistatsauthor{
	Lynn Chua, Badih Ghazi, Charlie Harrison, Ethan Leeman, \\[.5mm]\bf
	Pritish Kamath, Ravi Kumar, Pasin Manurangsi, Amer Sinha, Chiyuan Zhang\\[-3mm]\mbox{}
}
\aistatsaddress{\large Google}
]

\begin{abstract}
We introduce the {\em Balls-and-Bins} sampling for differentially private (DP) optimization methods such as DP-SGD.
While it has been common practice to use some form of shuffling in DP-SGD implementations, privacy accounting algorithms have typically assumed that Poisson subsampling is used instead.
Recent work by \citet{chua24private}, however, pointed out that shuffling based DP-SGD can have a much larger privacy cost in practical regimes of parameters.
In this work we show that the \BnB sampling achieves the ``best-of-both'' samplers, namely, the implementation of \BnB sampling is similar to that of Shuffling and models trained using DP-SGD with \BnB sampling achieve utility comparable to those trained using DP-SGD with Shuffling at the same noise multiplier, and yet, \BnB sampling enjoys similar-or-better privacy amplification as compared to Poisson subsampling in practical regimes.

\end{abstract}

\section{INTRODUCTION} \label{sec:intro}

Training differentiable models, e.g.,  neural networks, with noisy gradients via first-order methods such as stochastic gradient descent (SGD), has become a common approach for making the training pipelines satisfy differential privacy. Since its introduction by \citet{abadi16deep}, this approach of $\DPSGD$ has been the basis of open source implementations in \cite{tf_privacy}, PyTorch Opacus~\citep{yousefpour21opacus} and JAX Privacy~\citep{jax-privacy2022github}. $\DPSGD$ has been widely applied across various domains~\citep[e.g.,][]
{de22unlocking,dockhorn2022differentially,anil22dpbert,he2022exploring,igamberdiev2023dp,tang2024private}.%

$\DPSGD$ processes the training data in a sequence of steps, where at each step, a noisy estimate of the average gradient over a mini-batch is computed and used to perform a first-order update over the differentiable model; a formal description is provided in \Cref{alg:dpsgd}.  In summary, the noisy (average) gradient is obtained by {\em clipping} the gradient $g$ for each example in the mini-batch to have norm at most $C$ (a preset bound), namely $[g]_C := g \cdot \min\{1, C / \|g\|_2\}$, computing the sum over the batch, and then adding independent zero-mean noise drawn from the Gaussian distribution of scale $\sigma C$ to each coordinate of the gradient sum.
This could then be scaled by the mini-batch size\footnote{When the mini-batch size is a random variable, the scaling has to be done with a fixed value, e.g., the expected mini-batch size, and not the realized mini-batch size.} to obtain a noisy average gradient.
The privacy guarantee of the mechanism depends on the following: the noise scale $\sigma$, the number of examples in the training dataset, the size of mini-batches, the number of training steps, and the {\em mini-batch generation process}.

Most deep learning systems in recent years process mini-batches of fixed-size by sequentially iterating over the dataset, perhaps after applying a global or some other form of {\em shuffling} to the dataset.
The privacy analysis for such a mechanism however has been technically challenging due to the correlated nature of the mini-batches. To simplify the privacy analysis, \cite{abadi16deep} considered {\em Poisson subsampling}, wherein each mini-batch is sampled independently by including each example independently with a fixed probability. However, Poisson subsampling is rarely implemented and instead it has become common practice to use some form of shuffling in model training, but to report privacy parameters as if Poisson subsampling was used \citep[\S4.3]{ponomareva23dpfy}; a notable exception is the PyTorch Opacus library~\citep{yousefpour21opacus} that supports Poisson subsampling for $\DPSGD$ and provides privacy accounting methods for it.

{\bf Adaptive Batch Linear Queries~\citep{chua24private}.}
For any batch generation process, the privacy analysis of $\DPSGD$, particularly in the case of non-convex models such as deep neural networks, is typically done by viewing it as a post-processing of an {\em Adaptive Batch Linear Queries ($\ABLQ$)} mechanism (\Cref{alg:ablq}) that releases estimates of a sequence of adaptively chosen linear queries as produced by an {\em adaptive query method $\cA$}, on the mini-batches obtained using a {\em batch generator} $\cG$.

Given a dataset of $n$ examples, the batch generator $\cG$ can be any algorithm that generates a sequence $S_1, \ldots, S_T \subseteq [n]$ of mini-batches. We use $\cG_{b,T}$ to emphasize the number of batches $T$ and the (expected) batch size $b$, but often omit the subscript when it is clear from context.
$\ABLQG$ processes the batches generated by $\cG$ in a sequential order to produce a sequence $(g_1, \ldots, g_T)$.  Here, each $g_t \in \R^d$ is the average value of $\psi_t(x)$ over the batch $S_t$ with added zero-mean Gaussian noise of scale $\sigma$ to all coordinates, where the query $\psi_t : \cX \to \B^d$ (for $\B^d := \{ v \in \R^d : \|v\|_2 \le 1\}$) is produced by the adaptive query method $\cA$, based on the previous responses $g_1, \ldots, g_{t-1}$.
$\DPSGD$ with batch generator $\cG$ (denoted $\DPSGDG$) can be obtained as a post-processing of the output of $\ABLQG$, with the adaptive query method that maps examples to the clipped gradient at the current iterate, namely $\psi_t(x) := [\nabla_{\bw} \ell(\bw_{t-1}, x)]_1$ (the clipping norm $C$ is considered to be $1$ w.l.o.g.), and where only the last iterate $\bw_T$ is revealed.

We consider the following batch generators (see formal description in \Cref{app:batch-gen}):
\begin{itemize}[topsep=-2pt,itemsep=-2pt]
    \item Deterministic $\cD_{b, T}$:  generates $T$ batches each of size $b$ in the given sequential order of the dataset,
    \item Shuffle $\cS_{b,T}$: similar to $\cD_{b, T}$, but first applies a random permutation to the dataset, and
    \item Poisson $\cP_{b,T}$: each batch independently includes each example with probability $\frac bn$.
\end{itemize}
We drop the subscripts of each generator whenever it is clear from context.
For any batch generator $\cG$, e.g.,  $\cD, \cP, \cS$, we use $\deltaG(\eps)$ to denote the {\em privacy loss curve} of $\ABLQG$. Namely, for all $\eps > 0$, let $\deltaG(\eps)$ be the smallest $\delta \ge 0$ such that $\ABLQG$ satisfies $(\eps, \delta)$-$\DP$ for {\em all} choices of the underlying adaptive query method $\cA$, and $\epsG(\delta)$ is defined similarly.

It might appear that the privacy analysis of $\ABLQG$ would be worse than that of $\DPSGDG$ in general, since $\ABLQG$ releases estimates to {\em all} intermediate linear queries, whereas, $\DPSGDG$ only releases the {\em final} iterate. However, a recent interesting work of \cite{annamalai2024loss} shows that for general non-convex losses, the privacy analysis of the {\em last-iterate} of $\DPSGDP$ is no better than that of $\ABLQP$. This suggests that at least without any further assumptions on the loss functions, it might not be possible to improve the privacy analysis of $\DPSGDG$ beyond that provided by $\ABLQG$.

\cite{chua24private} showed that the privacy guarantee of $\ABLQS$ can be significantly worse than that of $\ABLQP$ in practical regimes of privacy parameters, especially for small noise scale $\sigma$; subsequently such gaps were also observed in empirical privacy by \cite{annamalai24shuffle}.
This seriously challenges the common practice of using shuffling in $\DPSGD$  while claiming privacy guarantees based on Poisson subsampling.
Concurrently, \cite{lebeda2024avoiding} consider the batch generator $\cW$ where each batch is generated  independently by sampling examples from the dataset without replacement, until the desired batch size is met, and show that $\ABLQ_{\cW}$ also exhibits worse privacy guarantees compared to $\ABLQP$.
The central question we consider is: 

{\sl
Is there a batch generator that is similar to Shuffling in terms of implementation simplicity and model utility, but has favorable privacy analysis, namely similar to or better than Poisson subsampling?
}

{
\begin{algorithm}[t]
\caption{$\DPSGD_\cG$~\citep{abadi16deep}}
\label{alg:dpsgd}
\begin{algorithmic}
\PARAMETERS
Differentiable loss $\ell : \R^d \times \cX \to \R^d$,
\STATE\phantom{{\bf Param}} Batch generator $\cG_{b,T}$,
initial state $w_0$,
\STATE\phantom{{\bf Param}} clipping norm $C$, noise scale $\sigma$.
\REQUIRE Dataset $\bx = (x_1, \ldots, x_n)$.
\ENSURE Final model state $\bw_T \in \R^d$.
\STATE $(S_1, \ldots, S_T) \gets \cG(n)$ \hfill 
\FOR{$t = 1, \ldots, T$}
\STATE $g_t \gets \frac{1}{b} \prn{\sum_{x \in S_t} [\nabla_{\bw} \ell(\bw; x)]_C + \cN(0, \sigma^2 C^2 I_d)}$
\STATE $\bw_t \gets \bw_{t-1} - \eta_t g_t$ \hfill \textcolor{black!50!Ggreen}{$\triangleright$ or any other optimizer step}
\ENDFOR
\RETURN $\bw_T$
\end{algorithmic}
\end{algorithm}
}

{
\begin{algorithm}[t]
\caption{$\ABLQG$: Adaptive Batch Linear Queries (as formalized in \cite{chua24private})}
\label{alg:ablq}
\begin{algorithmic}
\PARAMETERS Batch generator $\cG$, noise scale $\sigma$, and (adaptive) query method $\cA: (\R^d)^* \times \cX \to \B^d$.
\REQUIRE Dataset $\bx = (x_1, \ldots, x_n)$.
\ENSURE Query estimates $g_1, \ldots, g_T \in \R^d$
\STATE $(S_1, \ldots, S_T) \gets \cG(n)$ \hfill
\FOR{$t = 1, \ldots, T$}
    \STATE $\psi_t(\cdot) := \cA(g_1, \ldots, g_{t-1}; \cdot)$
    \STATE $g_t \gets \sum_{i \in S_t} \psi_t(x_i) + e_t$ for $e_t \sim \cN(0, \sigma^2 I_d)$
\ENDFOR
\RETURN $(g_1, \ldots, g_T)$
\end{algorithmic}
\end{algorithm}
}

\begin{algorithm}[t]
\caption{$\cB_{T}$: \BnB Sampler}
\label{alg:bnb-batch}
\begin{algorithmic}
\PARAMETERS Number of batches $T$.
\REQUIRE Number of datapoints $n$.
\ENSURE Seq. of batches $S_1, \ldots, S_T \subseteq [n]$.
\FOR{$t = 1, \ldots, T$}
    \STATE $S_{t} \gets \emptyset$
\ENDFOR
\FOR{$i = 1, \ldots, n$}
    \STATE $S_{t} \gets S_t \cup \{i\}$ for uniformly random $t \in [T]$
\ENDFOR
\RETURN $S_1, \ldots, S_T$
\end{algorithmic}
\end{algorithm}

{\bf Our Contributions.}
Towards answering this question, we introduce the {\em \BnB} generator $\cB_{T}$~(\Cref{alg:bnb-batch}), which operates by placing each example in a random batch. This sampler exhibits behavior similar to Shuffle, with each example appearing in precisely one batch, and is also similar to implement. On the other hand, the marginal distribution over each batch is exactly the same as that of Poisson subsampling. This overcomes the privacy lower bound of \cite{chua24private, lebeda2024avoiding} by making the positions of the different examples to be independent, thereby preventing non-differing examples from leaking information about the presence or absence of the differing example in any given batch. 

We identify a tightly dominating pair~(\Cref{def:dominating-pair}) for $\ABLQB$, thereby allowing a tight privacy analysis. We show that $\ABLQB$ enjoys better privacy guarantees compared to $\ABLQD$ and $\ABLQS$ in all regime of parameters. This is in sharp contrast to $\ABLQP$, which can have worse privacy guarantees than even $\ABLQD$ at large $\eps$~\citep{chua24private}.

We use a Monte Carlo method for estimating  $\deltaB(\eps)$. However, naive Monte Carlo methods are inefficient when estimating small values of $\delta$, or when the number of steps $T$ is large.
Our key contributions here are to develop the techniques of {\em importance sampling}, to handle small $\delta$ values, and {\em order statistics sampling}, a new technique to handle a large number of steps. We believe the latter is of independent interest beyond DP.

Finally, we evaluate $\DPSGD$ on some practical datasets and observe that the model utility of $\DPSGDB$ is comparable to that of $\DPSGDS$
at the same noise scale $\sigma$.
On the other hand, for each setting of parameters used, we use our Monte Carlo estimation method to show that the privacy guarantees of $\ABLQB$ are similar/better relative to $\ABLQP$, whereas, the privacy guarantees of $\ABLQS$ are much worse.

{\bf Related Work.}
\cite{balle20checkin} considered a model of {\em random check-ins} in the context of distributed (a.k.a. federated) learning, which is essentially the same as balls-and-bins sampling. Their analysis however relies on the amplification properties of shuffling and does not lead to better privacy guarantees than those known for shuffling.

An independent and concurrent work of \cite{choquettechoo24near} also considered the Balls-and-Bins sampling method, although in the context of the so-called DP-FTRL algorithm, the variant of DP-SGD that adds correlated noise at each step~\citep{kairouz21practical,mcmahan22dpmf}. They also use a Monte Carlo method to compute the privacy parameters.
Since DP-SGD is a special case of DP-FTRL (with independent noise), the dominating pair we identify is a special case of the dominating pair for DP-FTRL, which depends on the specific correlation matrix. \cite{choquettechoo24near} mention ``a more careful sampler and concentration analysis'' as an avenue for improving the sample complexity of the Monte Carlo estimation. In our work, we develop techniques based on importance sampling and order statistics sampling to improve the cost of the Monte Carlo method.  Extending these techniques to the  general DP-FTRL setting is an interesting future direction.

\section{PRELIMINARIES}\label{sec:prelims}
\vspace{-2mm}

A {\em mechanism} $\cM$ maps input datasets to distributions over an output space.  Namely, for $\cM : \cX^* \to \Delta_{\cO}$, on input {\em dataset} $\bx = (x_1, \ldots, x_n)$ where each {\em record} $x_i \in \cX$, $\cM(\bx) \in \Delta_{\cO}$ is a probability distribution over the output space $\cO$; we often use $\cM(\bx)$ to denote the underlying random variable as well.
Two datasets $\bx$ and $\bx'$ are said to be {\em adjacent}, denoted $\bx \sim \bx'$, if, loosely speaking, they ``differ in one record''; in particular, we use the ``zeroing-out'' adjacency as defined shortly. We consider the following notion of \emph{$(\eps, \delta)$-differential privacy (DP)}.
\begin{definition}[DP]\label{def:dp}
For $\eps, \delta \ge 0$, a mechanism $\cM$ satisfies $(\eps, \delta)$-$\DP$ if for all adjacent datasets $\bx \sim \bx'$, and for any (measurable) event $E$ it holds that
$
\Pr[\cM(\bx) \in E] ~\le~ e^{\eps} \Pr[\cM(\bx') \in E] + \delta.
$
\end{definition}
Following \cite{kairouz21practical,chua24private}, we use the ``zeroing-out'' adjacency. Consider the augmented input space $\cX_{\bot} := \cX \cup \{\bot\}$ and extend any adaptive query method $\cA$ as $\cA(g_1, \ldots, g_t; \bot) = \mathbf{0}$ for all $g_1, \ldots, g_t \in \R^d$. Datasets $\bx, \bx' \in \cX_{\bot}^n$ are said to be {\em zero-out} adjacent if there exists $i$ such that $\bx_{-i} = \bx'_{-i}$, and exactly one of $\{x_i, x_i'\}$ is in $\cX$ and the other is $\bot$. We use $\bx \to_z \bx'$ to specifically emphasize that $x_i \in \cX$ and $x'_i = \bot$. Thus, $\bx \sim \bx'$ if $\bx \to_z \bx'$ or $\bx' \to_z \bx$.

\paragraph{Hockey Stick Divergence \& Dominating Pairs.}%
For probability densities $P$ and $Q$, we use $\alpha P + \beta Q$ to denote the weighted sum of the corresponding densities.
$P \otimes Q$ denotes the product distribution sampled as $(u, v)$ for $u \sim P$, $v \sim Q$, and, $P^{\otimes T}$ denotes the $T$-fold product distribution $P \otimes \cdots \otimes P$.

For all $\eps \in \R$, the {\em $e^\eps$-hockey stick divergence} between $P$ and $Q$ is
$\Deps(P ~\|~ Q) := \sup_{\event} P(\event) - e^{\eps} Q(\event)$.

It is immediate to see that a mechanism $\cM$ satisfies $(\eps, \delta)$-$\DP$ iff for all adjacent $\bx \sim \bx'$, it holds that $\Deps(\cM(\bx) ~\|~ \cM(\bx')) \le \delta$.

\begin{definition}[Dominating Pair~\citep{zhu22optimal}]\label{def:dominating-pair}
The pair $(P, Q)$ {\em dominates} the pair $(A, B)$ (denoted $(P, Q) \dominates (A, B)$) if $\Deps(P ~\|~ Q) ~\ge~ \Deps(A ~\|~ B)$ holds for all $\eps \in \R$.\footnote{Note: this includes $\eps < 0$.} For any mechanism $\cM$,
\begin{itemize}
\item $(P, Q)$ {\em dominates} a mechanism $\cM$ (denoted $(P, Q) \dominates \cM$) if $(P, Q) \dominates (\cM(\bx), \cM(\bx'))$ for all adjacent $\bx \to_z \bx'$.
\item Conversely, $(P, Q)$ is {\em dominated by} $\cM$ (denoted $\cM \dominates (P, Q)$) if there exists $\bx \to_z \bx'$ such that $(\cM(\bx), \cM(\bx')) \dominates (P, Q)$.
\item $(P, Q)$ {\em tightly dominates} $\cM$ (denoted $(P, Q) \equiv \cM$) if $(P, Q) \dominates \cM$ and $\cM \dominates (P, Q)$.
\end{itemize}
\end{definition}

If $(P, Q) \dominates \cM$, then for all $\eps \ge 0$, it holds that $\delta_{\cM}(\eps) \le \max \{ D_{e^\eps}(P ~\|~ Q), \Deps(Q ~\|~ P) \}$,
and conversely, if $\cM \dominates (P, Q)$, then for all $\eps \ge 0$, it holds that $\delta_{\cM}(\eps) \ge \max \{ D_{e^\eps}(P ~\|~ Q), \Deps(Q ~\|~ P) \}$.\footnote{This uses that if $(P, Q) \dominates (A, B)$ then $(Q, P) \dominates (B, A)$, which follows e.g. from \citet[Lemma 46]{zhu22optimal}.}
Consequently, if $(P, Q) \equiv \cM$, then $\delta_{\cM}(\eps) = \max\{ D_{e^\eps}(P ~\|~ Q), \Deps(Q ~\|~ P) \}$.
Thus, tightly dominating pairs completely characterize the privacy loss of a mechanism (although they are not guaranteed to exist for all mechanisms).\footnote{Our terminology is slightly different from \cite{zhu22optimal} in that they refer to $(P, Q)$ as a {\em tightly dominating pair} if $\delta_{\cM}(\eps) = D_{e^{\eps}}(P \| Q)$ for all $\eps \in \R$. Such a notion of a tightly dominating pair always exists, but need not correspond to a worst case pair of adjacent datasets. This is true, e.g. for even one step of $\ABLQP$. Our notation is asymmetric in defining a dominating pair, allowing a tight characterization of $\ABLQP$.}

For a distribution $P$ over $\Omega$, and a randomized function $f : \Omega \to \Gamma$, let $f(P)$ denote the distribution of $f(x)$ for $x \sim P$. The post-processing property of DP implies the following.

\begin{lemma}\label{lem:post-process-and-domination}
For distributions $P, Q$ over $\Omega$, and distributions $A, B$ over $\Gamma$, if there exists $f : \Omega \to \Gamma$ such that simultaneously $f(P) = A$ and $f(Q) = B$ then $(P, Q) \dominates (A, B)$.\footnote{More strongly, the converse is also true (\Cref{lem:converse-post-process-and-domination}).}
\end{lemma}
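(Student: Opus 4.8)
The plan is to unwind the definition of the hockey stick divergence and reduce the inequality $\Deps(P\|Q) \ge \Deps(A\|B)$ (for every $\eps \in \R$) to a short computation expressing post-processing as an averaging operation. First I would fix an arbitrary $\eps \in \R$ and recall that, by definition, $\Deps(A\|B) = \sup_E \prn{A(E) - e^\eps B(E)}$ with the supremum over all measurable $E \subseteq \Gamma$; it thus suffices to bound $A(E) - e^\eps B(E)$ by $\Deps(P\|Q)$ for each fixed such $E$.

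Since $f$ is allowed to be randomized, I would represent it by its Markov kernel $\kappa : \Omega \to \Delta_\Gamma$, so that for any measurable $E \subseteq \Gamma$ the function $h_E : \Omega \to [0,1]$ defined by $h_E(\omega) := \kappa(\omega, E) = \Pr[f(\omega) \in E]$ satisfies $A(E) = \int h_E \, dP$ and $B(E) = \int h_E \, dQ$, using the hypotheses $f(P) = A$ and $f(Q) = B$. Therefore $A(E) - e^\eps B(E) = \int h_E \, d\mu$ where $\mu := P - e^\eps Q$ is a finite signed measure on $\Omega$.

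The key step is to bound $\int h_E \, d\mu$ by the total positive mass of $\mu$. Writing $\mu = \mu^+ - \mu^-$ for its Hahn--Jordan decomposition and using $0 \le h_E \le 1$, we have $\int h_E \, d\mu \le \int h_E \, d\mu^+ \le \mu^+(\Omega)$. Since $\mu^+(\Omega) = \sup_F \mu(F) = \sup_F \prn{P(F) - e^\eps Q(F)} = \Deps(P\|Q)$ (the supremum being attained on the positive set of the Hahn decomposition), we conclude $A(E) - e^\eps B(E) \le \Deps(P\|Q)$. Taking the supremum over $E$ gives $\Deps(A\|B) \le \Deps(P\|Q)$, and since $\eps$ was arbitrary this is precisely $(P, Q) \dominates (A, B)$.

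I do not expect a genuine obstacle: for deterministic $f$ one just takes $F = f^{-1}(E)$ and the identity $A(E) - e^\eps B(E) = P(F) - e^\eps Q(F) \le \Deps(P\|Q)$ is immediate, and the only extra care needed in general is the measure-theoretic bookkeeping around the kernel representation. An alternative one-line route is to note that $\Deps(\cdot\|\cdot)$ is an $f$-divergence with convex generator $t \mapsto (t - e^\eps)_+$ and invoke the data-processing inequality; I would nonetheless prefer the elementary argument above as it is fully self-contained.
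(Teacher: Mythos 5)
Your proof is correct. The paper itself offers no argument for this lemma: it simply asserts that it follows from ``the post-processing property of DP,'' i.e., the data-processing inequality for the hockey stick divergence, which is exactly the one-line alternative you mention at the end. Your main argument is a self-contained, elementary version of that inequality: representing the (possibly randomized) map $f$ by its Markov kernel, writing $A(E) - e^{\eps}B(E) = \int h_E \, d\mu$ for the signed measure $\mu = P - e^{\eps}Q$ with $0 \le h_E \le 1$, and bounding this by $\mu^+(\Omega) = \sup_F \prn{P(F) - e^{\eps}Q(F)} = \Deps(P\|Q)$ via the Hahn--Jordan decomposition is a clean and complete derivation, valid for every $\eps \in \R$ (including $\eps < 0$, as the definition of domination requires). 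One point in favor of spelling out the kernel version as you do: the paper actually applies this lemma with a \emph{randomized} post-processing $f$ in the proof of \Cref{thm:bnb-dominating-pair}, so covering randomized maps explicitly, rather than only deterministic preimages $F = f^{-1}(E)$, is not a mere technicality. In short, your argument buys self-containedness and explicit treatment of randomized $f$, while the paper's route buys brevity by appealing to a standard known fact.
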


{\bf \boldmath Dominating Pairs for $\ABLQG$.}
Recall again, that we use $\cD$, $\cP$, $\cS$ to refer to the deterministic, Poisson, and shuffle batch generators respectively.
Tightly dominating pairs are known for $\ABLQD$ and $\ABLQP$:

\begin{proposition}[{\citet[Theorem 8]{balle18improving}}]\label{prop:D-hockey}
For all $\sigma > 0$ and $T \ge 1$, it holds that $(\PD, \QD) \equiv \ABLQD$ for
$\PD \textstyle~:=~ \cN(1, \sigma^2)$ and $\QD \textstyle~:=~ \cN(0, \sigma^2)$.
\end{proposition}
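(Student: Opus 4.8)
The plan is to prove the two halves of $(\PD, \QD) \tdominates \ABLQD$ separately, using two elementary facts about hockey-stick divergence. \emph{Fact 1:} for a common ``tail'' distribution $C$ one has $\Deps(A \otimes C \,\|\, B \otimes C) = \Deps(A \| B)$ for every $\eps$, since the likelihood ratio ignores the $C$-coordinate; in particular, by rotational invariance of the Gaussian, $\Deps\big(\cN(\bm\mu + \bm u, \sigma^2 I_d) \,\|\, \cN(\bm\mu, \sigma^2 I_d)\big)$ depends only on $\|\bm u\|_2$ and equals the one-dimensional quantity $\Deps\big(\cN(\|\bm u\|_2, \sigma^2) \,\|\, \cN(0, \sigma^2)\big)$ (rotate the mean gap onto the first axis and peel off the $d-1$ identical noise coordinates). \emph{Fact 2:} the map $s \mapsto \Deps(\cN(s,\sigma^2)\,\|\,\cN(0,\sigma^2))$ is non-decreasing on $s \ge 0$ for every $\eps \in \R$; this is the monotonicity of the Gaussian mechanism's privacy profile from \citet{balle18improving} (it also follows from a one-line derivative computation using that the two Gaussian densities appearing in the profile satisfy $\phi(a) = e^\eps\phi(b)$ at the relevant arguments, together with $\Deps(P\|Q)\ge 1-e^\eps$ for $\eps<0$). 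Consequently $\Deps(\cN(s,\sigma^2)\|\cN(0,\sigma^2)) \le \Deps(\PD\|\QD)$ for all $s\in[0,1]$ and all $\eps$.

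For the lower bound $\ABLQD \dominates (\PD,\QD)$, I would exhibit a single hard instance: let $\bx \in \cX_{\bot}^n$ have first record $x_1\in\cX$ and every other record equal to $\bot$, let $\bx' = (\bot,\dots,\bot)$, and take the non-adaptive query method with $\psi_1(x_1)$ a fixed unit vector. Since $\cD_{b,T}$ places index $1$ in exactly one batch and every $\bot$ contributes the zero vector, $\ABLQD(\bx)$ is distributed as $\cN(\psi_1(x_1),\sigma^2 I_d)$ on that batch's block and as $\cN(0,\sigma^2 I_d)$ on each of the other $T-1$ blocks, while $\ABLQD(\bx')$ is $\cN(0,\sigma^2 I_d)$ on every block. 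Stripping the $T-1$ common noise blocks via Fact 1 gives $(\ABLQD(\bx), \ABLQD(\bx')) \tdominates (\cN(1,\sigma^2),\cN(0,\sigma^2)) = (\PD,\QD)$, hence $\ABLQD \dominates (\PD,\QD)$.

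For the upper bound $(\PD,\QD) \dominates \ABLQD$, fix an arbitrary adaptive query method $\cA$ and an arbitrary $\bx \to_z \bx'$ differing in index $i$; since $\cD_{b,T}$ produces disjoint batches, $i$ lies in at most one batch $S_{t^*}$ (if in none, the two outputs coincide and there is nothing to prove). Three observations: (a) the prefix $(g_1,\dots,g_{t^*-1})$ has the \emph{same} law $\pi$ under $\bx$ and under $\bx'$, because those batches exclude $i$ and $\bx,\bx'$ agree on all other records; (b) conditioned on a prefix value $p$, the query $\psi_{t^*}$ is a fixed function of $p$, and $g_{t^*}$ is $\cN(c(p)+\psi_{t^*}(x_i),\sigma^2 I_d)$ under $\bx$ versus $\cN(c(p),\sigma^2 I_d)$ under $\bx'$, where $c(p) := \sum_{j\in S_{t^*}\setminus\{i\}}\psi_{t^*}(x_j)$ (using $\psi_{t^*}(\bot)=0$) and $\|\psi_{t^*}(x_i)\|_2 \le 1$; and (c) the suffix $(g_{t^*+1},\dots,g_T)$ is, in both cases, the \emph{same} randomized post-processing of $(g_1,\dots,g_{t^*})$ (it uses only records in later batches, common to $\bx,\bx'$, and $\cA$). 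By (c) and \Cref{lem:post-process-and-domination} applied to the suffix-regeneration map, $\Deps(\ABLQD(\bx)\|\ABLQD(\bx')) \le \Deps(P\|Q)$ for all $\eps$, where $P,Q$ are the joint laws of $(g_1,\dots,g_{t^*})$ under $\bx,\bx'$; by (a)--(b) these are mixtures over the common $\pi$ of $\delta_p\otimes\cN(c(p)+\psi_{t^*}(x_i),\sigma^2 I_d)$ and of $\delta_p\otimes\cN(c(p),\sigma^2 I_d)$ respectively. Joint convexity of $\Deps$ moves the mixture outside, Fact 1 erases the $\delta_p$ factor, and Facts 1--2 bound each conditional term by $\Deps(\cN(1,\sigma^2)\|\cN(0,\sigma^2)) = \Deps(\PD\|\QD)$ since $\|\psi_{t^*}(x_i)\|_2\le 1$. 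This holds for all $\eps$, giving $(\PD,\QD)\dominates\ABLQD$, and together with the lower bound, $(\PD,\QD)\tdominates\ABLQD$. (Equivalently, the upper bound is adaptive composition of the trivial dominating pair $(\cN(0,\sigma^2),\cN(0,\sigma^2))$ for the $T-1$ unaffected steps with $(\PD,\QD)$ for step $t^*$.)

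The only real obstacle is the adaptivity bookkeeping in the upper bound: one must verify that the prefix law is \emph{genuinely identical} under $\bx$ and $\bx'$ — which is precisely what allows both outputs to be written as mixtures against the \emph{same} mixing measure $\pi$, so that joint convexity of $\Deps$ applies — and that the suffix-regeneration post-processing is a single map not depending on which of $\bx,\bx'$ we started from. The remaining ingredients (the rotation/tensorization of Fact 1 and the Gaussian-mechanism monotonicity of Fact 2) are either immediate or available off the shelf from \citet{balle18improving}.
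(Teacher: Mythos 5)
Your proposal is correct, but note that the paper does not actually prove \Cref{prop:D-hockey} at all: it is imported wholesale as \citet[Theorem 8]{balle18improving}, i.e., the tight privacy profile of the one-dimensional Gaussian mechanism, with the reduction of $\ABLQD$ to a single Gaussian-mechanism step treated as known (following \citealp{chua24private}). What you have done is reconstruct that reduction explicitly: the hard-instance direction (one real record, all others $\bot$, strip the $T-1$ pure-noise blocks) and the upper-bound direction (identical prefix law under $\bx$ and $\bx'$, conditioning to reduce step $t^*$ to a shifted Gaussian of shift norm at most $1$, joint convexity over the common mixing measure, and suffix regeneration as a post-processing map independent of which dataset was used). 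This is exactly the toolkit the paper itself deploys for \Cref{thm:bnb-dominating-pair} (\Cref{lem:post-process-and-domination} plus \Cref{prop:joint-convexity}), so your argument is consistent with, and in effect a self-contained substitute for, the citation; what the citation buys is brevity and the closed-form expression for $\deltaD(\eps)$ in terms of Gaussian CDFs, while your route makes the adaptivity bookkeeping explicit.

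Two small points. First, your ``Fact 2'' must hold for all $\eps\in\R$ (including $\eps<0$) for domination in the sense of \Cref{def:dominating-pair}; the cleanest justification is not a derivative computation but a post-processing one: for $0\le s_1\le s_2$, the map $f(x)=\tfrac{s_1}{s_2}x+\cN\bigl(0,\sigma^2(1-\tfrac{s_1^2}{s_2^2})\bigr)$ sends $\cN(0,\sigma^2)\mapsto\cN(0,\sigma^2)$ and $\cN(s_2,\sigma^2)\mapsto\cN(s_1,\sigma^2)$, so \Cref{lem:post-process-and-domination} gives $\bigl(\cN(s_2,\sigma^2),\cN(0,\sigma^2)\bigr)\dominates\bigl(\cN(s_1,\sigma^2),\cN(0,\sigma^2)\bigr)$ for every $\eps$. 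Second, you invoke \Cref{prop:joint-convexity} for a continuous mixture over the prefix value $p$; this is fine (the sup defining $\Deps$ passes through the integral) but deserves a sentence, since the paper states it only for finite mixtures. Neither point is a gap in substance.
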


\begin{proposition}[\cite{koskela2020computing,zhu22optimal}]\label{prop:P-dominating}
For all $\sigma > 0$ and $T \ge 1$, it holds that $(\PP, \QP) \equiv \ABLQP$ for
\begin{align*}
\PP &\textstyle~:=~ \prn{\prn{1 - \frac1T} \cN(0, \sigma^2) + \frac1T \cN(1, \sigma^2)}^{\otimes T}\,,\\
\QP &\textstyle~:=~ \cN(0, \sigma^2)^{\otimes T}\,.
\end{align*}
\end{proposition}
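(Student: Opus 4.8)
The plan is to establish the two directions of $(\PP,\QP)\equiv\ABLQP$ separately: for $(\PP,\QP)\dominates\ABLQP$ I would reduce to a single step and then invoke adaptive composition of dominating pairs, and for $\ABLQP\dominates(\PP,\QP)$ I would exhibit one explicit worst-case instance.

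\emph{Reduction to a single step.} Fix adjacent $\bx\to_z\bx'$ differing at coordinate $i$ and an arbitrary adaptive query method $\cA$, and condition on a fixed transcript prefix $g_1,\dots,g_{t-1}$. This determines the query $\psi_t:=\cA(g_1,\dots,g_{t-1};\cdot)$ and hence the vector $v:=\psi_t(x_i)\in\B^d$, while on $\bx'$ the same prefix yields the same $\psi_t$ but $\psi_t(x'_i)=\psi_t(\bot)=\mathbf{0}$. Writing $c:=\sum_{j\in S_t\setminus\{i\}}\psi_t(x_j)$ --- which has the same law under $\bx$ and $\bx'$ since $\bx_{-i}=\bx'_{-i}$, and is independent of $\mathbf{1}[i\in S_t]\sim\Ber(\tfrac1T)$ because distinct examples are included independently --- the conditional law of $g_t$ equals $\E_c[(1-\tfrac1T)\cN(c,\sigma^2 I_d)+\tfrac1T\cN(c+v,\sigma^2 I_d)]$ under $\bx$ and $\E_c[\cN(c,\sigma^2 I_d)]$ under $\bx'$. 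By joint convexity of $\Deps$ and translation invariance of the Gaussian, for every $\eps\in\R$ the hockey-stick divergence of this conditional pair is at most $\Deps\bigl((1-\tfrac1T)\cN(0,\sigma^2 I_d)+\tfrac1T\cN(v,\sigma^2 I_d)\,\|\,\cN(0,\sigma^2 I_d)\bigr)$.

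\emph{Worst-case contribution, and composition.} I would then show that over all $v$ with $\|v\|\le 1$ this quantity is largest when $v$ is a unit vector, and there equals $\Deps(P_1\|Q_1)$ for $P_1:=(1-\tfrac1T)\cN(0,\sigma^2)+\tfrac1T\cN(1,\sigma^2)$ and $Q_1:=\cN(0,\sigma^2)$. By rotational invariance one may take $v=(\|v\|,0,\dots,0)$; factoring out the common $\cN(0,\sigma^2 I_{d-1})$ on the last $d-1$ coordinates leaves $\Deps$ unchanged and reduces the statement to the one-dimensional claim that $\mu\mapsto\Deps\bigl((1-\tfrac1T)\cN(0,\sigma^2)+\tfrac1T\cN(\mu,\sigma^2)\,\|\,\cN(0,\sigma^2)\bigr)$ is nondecreasing on $[0,\infty)$. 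This is a short computation: the likelihood ratio $x\mapsto(1-\tfrac1T)+\tfrac1T\exp\bigl(\tfrac{\mu}{\sigma^2}(x-\tfrac\mu2)\bigr)$ is increasing in $x$, so its $e^\eps$-superlevel set is a half-line $[x_0(\mu),\infty)$, and differentiating the resulting closed form for $\Deps$ in $\mu$ gives a nonnegative derivative. Hence, for every prefix, the conditional per-step pair under $(\bx,\bx')$ is dominated by $(P_1,Q_1)$, uniformly; applying the adaptive composition theorem for dominating pairs of \citet{zhu22optimal} then yields $(P_1^{\otimes T},Q_1^{\otimes T})=(\PP,\QP)\dominates\ABLQP$.

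\emph{Tightness.} For the converse $\ABLQP\dominates(\PP,\QP)$, I would exhibit one hard instance: take $d=1$, a dataset $\bx$ whose only non-$\bot$ record is some $x_1\in\cX$ (any remaining records set to $\bot$), its zero-out neighbor $\bx'$ obtained by replacing $x_1$ with $\bot$, and the constant query method $\psi_t\equiv\psi$ with $\psi(x_1)=1$ (and $\psi(\bot)=0$ by convention). Since each example lands in $S_t$ independently with probability $\tfrac1T$, the outputs satisfy $g_t=\mathbf{1}[x_1\in S_t]+e_t$ under $\bx$ and $g_t=e_t$ under $\bx'$, mutually independent over $t=1,\dots,T$; thus $(g_1,\dots,g_T)$ is distributed exactly as $\PP$ under $\bx$ and as $\QP$ under $\bx'$, so $(\ABLQP(\bx),\ABLQP(\bx'))\dominates(\PP,\QP)$, i.e.\ $\ABLQP\dominates(\PP,\QP)$. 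Combined with the domination direction this gives $(\PP,\QP)\equiv\ABLQP$.

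\emph{Main obstacle.} Everything except the worst-case-$v$ step is formal bookkeeping (joint convexity, translation and rotation invariance, the common-factor identity $\Deps(P\otimes N\|Q\otimes N)=\Deps(P\|Q)$, and the composition theorem). The one point needing a genuine --- if short --- argument is the monotonicity in $\|v\|$, equivalently in $\mu$ above: that the hardest contribution of the differing example is a full unit vector. This is precisely the ingredient abstracted away by the analyses of \citet{koskela2020computing} and \citet{zhu22optimal}.
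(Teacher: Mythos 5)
Your proposal is correct, but note that the paper itself does not prove this proposition: it is quoted from \citet{koskela2020computing} and \citet{zhu22optimal}, so there is no in-paper proof to match. What you have written is essentially a reconstruction of the standard argument behind those references: the tightness direction (single non-$\bot$ record, constant unit query, independence of the Poisson batches across $t$) is exactly the same kind of explicit worst-case instance the paper uses in its own proof of \Cref{thm:bnb-dominating-pair}, while your domination direction proceeds per step --- conditioning on the transcript prefix, averaging out the non-differing examples' contribution $c$ via joint convexity and translation invariance, reducing to the one-dimensional subsampled-Gaussian pair, and then invoking the adaptive composition theorem for dominating pairs of \citet{zhu22optimal} --- whereas the paper's analogous argument for the \BnB pair instead builds one explicit global post-processing map (composition is unavailable there since the batches are correlated). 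The structural points you rely on are sound: $S_t$ is independent of the prefix under Poisson sampling, the law of $c$ is identical under $\bx$ and $\bx'$, and the common-factor and rotation reductions are valid.

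The one step you flag --- monotonicity in $\mu$ of $\Deps\bigl((1-\tfrac1T)\cN(0,\sigma^2)+\tfrac1T\cN(\mu,\sigma^2)\,\|\,\cN(0,\sigma^2)\bigr)$ --- is true, but make sure you establish it for \emph{all} $\eps\in\R$ (including $\eps<0$), since the definition of a dominating pair and the composition theorem require this; your superlevel-set computation does extend (for $e^\eps\le 1-\tfrac1T$ the divergence is constantly $1-e^\eps$), but a cleaner route avoids the closed-form differentiation entirely: for $0\le\mu'\le\mu$, the map $x\mapsto \tfrac{\mu'}{\mu}x + \cN\bigl(0,\sigma^2(1-\tfrac{\mu'^2}{\mu^2})\bigr)$ simultaneously sends $\cN(0,\sigma^2)\to\cN(0,\sigma^2)$ and $\cN(\mu,\sigma^2)\to\cN(\mu',\sigma^2)$, hence sends the mixture with shift $\mu$ to the mixture with shift $\mu'$, and \Cref{lem:post-process-and-domination} gives the desired domination for every $\eps\in\R$ at once.
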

These tightly dominating pairs enable numerical computation of the privacy parameters. Namely, $\deltaD(\eps)$ can be computed easily using Gaussian CDFs, whereas $\deltaP(\eps)$ can be computed using numerical methods with the Fast-Fourier transform, as provided in multiple open source libraries~\citep{DPBayes,MicrosoftDP,GoogleDP}.

On the other hand, while it is unclear whether a tightly dominating pair for $\ABLQS$ even exists, \cite{chua24private} studied a pair that is dominated by $\ABLQS$, while conjecturing it to be tightly dominating.
\begin{proposition}[\cite{chua24private}]\label{prop:S-dominated}
For all $\sigma > 0$ and $T \ge 1$, it holds that $\ABLQS \dominates (\PS, \QS)$ for
\begin{align*}
\PS &\textstyle~:=~ \sum_{i=1}^T \frac1T \cdot \cN(2e_i, \sigma^2 I_T)\,,\\
\QS &\textstyle~:=~ \sum_{i=1}^T \frac1T \cdot \cN(e_i, \sigma^2 I_T)\,.
\end{align*}
where $e_i \in \R^T$ is the $i$th standard basis vector, and $I_T$ is the $T \times T$ identity matrix.
\end{proposition}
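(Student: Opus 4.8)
By \Cref{def:dominating-pair}, to show $\ABLQS \dominates (\PS,\QS)$ it suffices to produce one pair of zero-out adjacent datasets $\bx \to_z \bx'$ and one adaptive query method $\cA$ for which $(\ABLQS(\bx),\ABLQS(\bx')) \dominates (\PS,\QS)$, and by \Cref{lem:post-process-and-domination} this domination follows once we exhibit a single post-processing $f$ with $f(\ABLQS(\bx)) = \PS$ and $f(\ABLQS(\bx')) = \QS$ simultaneously. The plan is to choose this instance so that Shuffle's $T$ batches line up exactly with the $T$ mixture components of $(\PS,\QS)$.

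\noindent\textbf{The instance.} I would use one-dimensional queries, $d = 1$, so that the output $(g_1, \dots, g_T)$ already lives in $\R^T$, and the smallest dataset with $T$ nonempty batches, $n = T$ with batch size $b = 1$ (any $n = bT$ works identically), so that $\cS$ just splits a uniformly random permutation of the records into $T$ singletons. Let $\bx$ hold one ``real'' record $x^\star$ together with $T-1$ interchangeable ``decoy'' records, let $\bx'$ zero out $x^\star$ (so $\bx \to_z \bx'$), and let $\cA$ be the non-adaptive method mapping $x^\star \mapsto -1$, every decoy $\mapsto +1$, and (by the zeroing-out convention) $\bot \mapsto 0$. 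Throughout, $e_\tau$ denotes the $\tau$-th standard basis vector of $\R^T$ and $\mathbf{1}$ the all-ones vector.

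\noindent\textbf{The computation.} By symmetry of the random permutation, the slot carrying $x^\star$ lands in a uniformly random batch $\tau \in [T]$, and the $T-1$ decoys fill the others. Conditioned on $\tau$: under $\bx$, coordinate $g_\tau$ has mean $-1$ and every other $g_t$ has mean $+1$, so $\ABLQS(\bx) \mid \tau \sim \cN(\mathbf{1} - 2e_\tau,\ \sigma^2 I_T)$; under $\bx'$, coordinate $g_\tau$ has mean $0$ and the rest have mean $+1$, so $\ABLQS(\bx') \mid \tau \sim \cN(\mathbf{1} - e_\tau,\ \sigma^2 I_T)$. Averaging over $\tau$ and then applying the affine bijection $f(v) := \mathbf{1} - v$, which sends $\cN(\mu, \sigma^2 I_T)$ to $\cN(\mathbf{1} - \mu, \sigma^2 I_T)$, yields $f(\ABLQS(\bx)) = \tfrac{1}{T}\sum_\tau \cN(2e_\tau, \sigma^2 I_T) = \PS$ and $f(\ABLQS(\bx')) = \tfrac{1}{T}\sum_\tau \cN(e_\tau, \sigma^2 I_T) = \QS$; \Cref{lem:post-process-and-domination} then finishes the argument.

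\noindent\textbf{Where the work is.} After the instance is fixed, everything above is routine: the uniformity of $\tau$ is permutation symmetry, the conditional distributions are direct reads of the definition of $\ABLQS$, and $f$ being a bijection means it leaves every hockey-stick divergence unchanged. The only real step is finding the instance, and in particular the two ideas behind it: taking $d = 1$ so the $T$ batches match the $T$ mixture components, and mapping the decoys to a \emph{nonzero} value so they supply a uniform baseline that the presence of $x^\star$ perturbs — this baseline is precisely what turns the ``$1$ versus $0$'' signal along coordinate $\tau$ into the ``$2$ versus $1$'' signal of $(\PS,\QS)$ after the shift $f$ (without it, e.g.\ mapping decoys to $0$, one lands on a different pair, not $(\PS,\QS)$). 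I expect identifying this baseline trick, and recognizing $(\PS,\QS)$ as the right target, to be the only nontrivial part. The argument shows only that $(\PS,\QS)$ is dominated by $\ABLQS$; whether it tightly dominates $\ABLQS$ is left open, and conjectured by \citet{chua24private}.
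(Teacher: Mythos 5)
Your proof is correct, and since the paper states this proposition without proof (importing it from \citet{chua24private}), the right comparison is to that cited construction, which yours essentially reproduces: non-differing records supply a common nonzero baseline, the differing record shifts its batch's mean by one unit, and an affine recentering turns the conditional-on-$\tau$ Gaussians into the $2e_\tau$-vs-$e_\tau$ mixtures, exactly the "baseline trick" that distinguishes $(\PS,\QS)$ from the pair $(\PB,\QB)$ obtained when decoys map to $0$. The only cosmetic point is that for general batch size $b$ (with $n = bT$) the recentering map should be $v \mapsto b\mathbf{1} - v$ rather than $\mathbf{1} - v$, which is the trivial adjustment implicit in your parenthetical "any $n = bT$ works identically."
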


\section{BALLS-AND-BINS SAMPLER}\label{sec:bnb}
\vspace{-2mm}

We identify a tightly dominating pair for $\ABLQB$.

\begin{theorem}\label{thm:bnb-dominating-pair}
For all $\sigma > 0$ and $T \ge 1$, it holds that $(\PB, \QB) \equiv \ABLQB$ for\footnote{Incidentally, the same pair $(\PB, \QB)$ was studied in the context of shuffling by \cite{koskela23numerical}, who also discuss the difficulty of numerically approximating its hockey stick divergence.}
\[\textstyle
\PB ~:=~ \sum_{t=1}^T \frac1T \cdot \cN(e_t, \sigma^2 I_T)\,,\quad 
\QB ~:=~ \cN(0, \sigma^2 I_T)\,.
\]
\end{theorem}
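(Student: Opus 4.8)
The plan is to prove the two halves of $(\PB,\QB)\equiv\ABLQB$ separately, reducing each to \Cref{lem:post-process-and-domination}: for the domination $(\PB,\QB)\dominates\ABLQB$ I will build a single randomized post-processing carrying the pair $(\PB,\QB)$ to $(\ABLQB(\bx),\ABLQB(\bx'))$ for an arbitrary adjacent pair and arbitrary adaptive query method, and for the reverse domination $\ABLQB\dominates(\PB,\QB)$ I will exhibit one concrete instance on which $\ABLQB$ reproduces $(\PB,\QB)$ exactly.

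\textbf{Domination.} Fix an adaptive query method $\cA$ and zero-out adjacent $\bx\to_z\bx'$ differing in record $i^*$. I want a single randomized map $f:\R^T\to(\R^d)^T$ with $f(\QB)=\ABLQB(\bx')$ and $f(\PB)=\ABLQB(\bx)$; \Cref{lem:post-process-and-domination} then gives $\Deps(\ABLQB(\bx)\|\ABLQB(\bx'))\le\Deps(\PB\|\QB)$ for every $\eps\in\R$, which is exactly $(\PB,\QB)\dominates\ABLQB$. The map $f$ reads $v=(v_1,\dots,v_T)$ and draws its own randomness: i.i.d.\ uniform bin assignments $Z_i\in[T]$ for $i\ne i^*$ (reconstructing the \BnB placements of the common records) together with fresh auxiliary Gaussians. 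It then produces $g_1,\dots,g_T$ \emph{causally}: having output $g_1,\dots,g_{t-1}$, it forms $\psi_t:=\cA(g_1,\dots,g_{t-1};\cdot)$, $\mu_t:=\sum_{i\ne i^*:\,Z_i=t}\psi_t(x_i)$, $u_t:=\psi_t(x_{i^*})$, $c_t:=\|u_t\|_2\le1$, and, writing $\widehat u_t:=u_t/c_t$ when $c_t>0$, sets
\[
g_t ~:=~ \mu_t \;+\; (c_t v_t + w_t)\,\widehat u_t \;+\; z_t, \qquad w_t\sim\cN(0,(1-c_t^2)\sigma^2), \quad z_t\sim\cN(0,\sigma^2(I_d-\widehat u_t\widehat u_t^{\top})),
\]
all auxiliary variables fresh; if $c_t=0$ it instead sets $g_t:=\mu_t+z_t$ with $z_t\sim\cN(0,\sigma^2 I_d)$, ignoring $v_t$. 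For $v\sim\QB=\cN(0,\sigma^2 I_T)$ each $v_t$ is an independent $\cN(0,\sigma^2)$, so $(c_t v_t+w_t)\widehat u_t+z_t$ is a fresh $\cN(0,\sigma^2 I_d)$ conditionally on the history, and one checks $f(\QB)$ equals the law of $\ABLQB(\bx')$ (where $x'_{i^*}=\bot$ contributes nothing). For $v\sim\PB$, one uniformly random coordinate $U$ has $v_U\sim\cN(1,\sigma^2)$ and the rest $\cN(0,\sigma^2)$; identifying $U$ with the bin of $i^*$, coordinate $U$ acquires extra mean $c_U\widehat u_U=u_U=\psi_U(x_{i^*})$, the others stay centered, and causality ensures the adaptive recomputation of $\mu_{U+1},\dots,\mu_T$ propagates exactly as in $\ABLQB(\bx)$; hence $f(\PB)=\ABLQB(\bx)$.

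\textbf{Reverse domination.} It suffices to exhibit one $\bx\to_z\bx'$ on which $\ABLQB$ realizes the pair. Take $n=1$ and $d=1$, the dataset with a single record $x_1$ against $\bx'=(\bot)$, and the constant query method $\psi_t(x_1)=1$ (with $\psi_t(\bot)=0$ by the zeroing-out convention). The sampler places example $1$ into a uniformly random bin $Z_1\in[T]$, so $g_t=\mathbf{1}[Z_1=t]+e_t$ with $e_t\sim\cN(0,\sigma^2)$; therefore $\ABLQB(\bx)=\sum_{t=1}^T\frac1T\,\cN(e_t,\sigma^2 I_T)=\PB$ and $\ABLQB(\bx')=\cN(0,\sigma^2 I_T)=\QB$ verbatim. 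Thus $(\ABLQB(\bx),\ABLQB(\bx'))\dominates(\PB,\QB)$, i.e.\ $\ABLQB\dominates(\PB,\QB)$; combined with the first half this yields $(\PB,\QB)\equiv\ABLQB$.

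\textbf{Main obstacle.} The crux is the domination direction. First, $f$ must be \emph{causal}, so that the adaptively chosen query $\psi_t$ — hence the sensitivity direction $u_t=\psi_t(x_{i^*})$ — is fixed before the scalar $v_t$ is consumed; this is why $f$ regenerates the whole $\ABLQ$ run internally rather than post-processing coordinatewise. Second, a draw from $\PB$ always shifts one coordinate by a full unit vector, whereas the actual shift in $\ABLQB$ is $\psi_t(x_{i^*})$, which may be shorter and is history-dependent; reconciling the two is exactly the role of splitting the noise into the parallel part $w_t$ (absorbing the length deficit $1-c_t$) and the orthogonal part $z_t$. The remaining points — measurability of $f$, the $c_t=0$ degeneracy, and the fact that \Cref{lem:post-process-and-domination} already delivers the hockey-stick inequality for every $\eps\in\R$ (including $\eps<0$, as \Cref{def:dominating-pair} demands) — are routine.
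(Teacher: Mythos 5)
Your proof is correct and takes essentially the same route as the paper: the reverse direction exhibits a one-hot instance on which $\ABLQB$ realizes $(\PB,\QB)$ exactly, and the domination direction builds a causal randomized post-processing that injects the scalar $v_t$ along $u_t=\psi_t(x_{i^*})$ and tops up the variance with Gaussian noise of covariance $\sigma^2(I-u_t u_t^\top)$, which coincides with the paper's map once your parallel component $w_t$ and orthogonal component $z_t$ are recombined. The only cosmetic difference is that you absorb the random bin assignments of the non-differing records into the randomized post-processing itself, whereas the paper conditions on a fixed assignment $\Gamma$ and then averages via joint convexity of the hockey-stick divergence (\Cref{prop:joint-convexity}); the two bookkeeping choices are equivalent.
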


The proof relies on the well-known ``joint convexity'' property of the hockey stick divergence.

\begin{proposition}[Joint Convexity of Hockey Stick Divergence; see, e.g., Lemma B.1 in \cite{chua24private}]\label{prop:joint-convexity}
Given two mixture distributions $P = \sum_{i=1}^m \alpha_i P_i$ and $Q = \sum_{i=1}^m \alpha_i Q_i$, it holds for all $\eps \in \R$ that
$D_{e^\eps}(P ~\|~ Q) ~\le~ \sum_i \alpha_i D_{e^\eps}(P_i ~\|~ Q_i)\,.$
\end{proposition}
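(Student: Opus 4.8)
The plan is to work directly from the definition of the hockey stick divergence as a supremum over events. Recall that $D_{e^\eps}(P\|Q) = \sup_{\event} \prn{P(\event) - e^\eps Q(\event)}$, where the supremum ranges over all measurable events $\event$. Fix an arbitrary $\eps \in \R$ and an arbitrary measurable event $\event$. Since $P = \sum_{i=1}^m \alpha_i P_i$ and $Q = \sum_{i=1}^m \alpha_i Q_i$ as (weighted sums of) measures, we have $P(\event) = \sum_i \alpha_i P_i(\event)$ and $Q(\event) = \sum_i \alpha_i Q_i(\event)$, so that
\[
P(\event) - e^\eps Q(\event) ~=~ \sum_{i=1}^m \alpha_i \prn{P_i(\event) - e^\eps Q_i(\event)}\,.
\]
Now for each $i$, the quantity $P_i(\event) - e^\eps Q_i(\event)$ is at most $\sup_{\event'} \prn{P_i(\event') - e^\eps Q_i(\event')} = D_{e^\eps}(P_i \| Q_i)$, since $\event$ is a particular event. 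As the $\alpha_i$ are nonnegative (they are mixture weights), we conclude $P(\event) - e^\eps Q(\event) \le \sum_i \alpha_i D_{e^\eps}(P_i\|Q_i)$. Since this bound holds for every $\event$ and does not depend on $\event$, taking the supremum over $\event$ on the left-hand side yields $D_{e^\eps}(P\|Q) \le \sum_i \alpha_i D_{e^\eps}(P_i\|Q_i)$, as claimed.

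An alternative route, which I would mention for robustness, uses the integral form $D_{e^\eps}(P\|Q) = \int \max\set{0,\, p(x) - e^\eps q(x)}\, dx$ in terms of densities $p, q$ (with respect to a common dominating measure), together with the convexity of the map $u \mapsto \max\set{0, u}$: since $p = \sum_i \alpha_i p_i$ and $q = \sum_i \alpha_i q_i$ pointwise, Jensen's inequality (equivalently, subadditivity of the positive-part function applied to nonnegative combinations) gives $\max\set{0, \sum_i \alpha_i(p_i - e^\eps q_i)} \le \sum_i \alpha_i \max\set{0, p_i - e^\eps q_i}$ pointwise, and integrating finishes the argument. The first argument is cleaner because it avoids any measure-theoretic bookkeeping about common dominating measures, so I would present that one as the main proof.

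Honestly, there is no substantive obstacle here: the result is an immediate consequence of the fact that the supremum of a sum is bounded by the sum of the suprema (with nonnegative weights), or equivalently the convexity of $u\mapsto \max\set{0,u}$. The only thing to be careful about is to fix $\eps$ and the event $\event$ first, bound the $\event$-dependent expression by an $\event$-independent quantity for each mixture component, and only then take the supremum over $\event$; swapping the order of the supremum and the sum in the wrong direction would be the one place a careless writeup could go astray. No earlier result from the excerpt is needed beyond the stated definition of $D_{e^\eps}$.
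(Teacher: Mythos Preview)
Your proof is correct. The paper does not actually supply its own proof of this proposition; it simply cites it as a known fact (Lemma~B.1 in \cite{chua24private}), so there is no in-paper argument to compare against. Your primary argument via the supremum definition is exactly the standard one-line justification, and your alternative via pointwise convexity of $u \mapsto [u]_+$ is equally valid.
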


\begin{proof}[Proof of \Cref{thm:bnb-dominating-pair}]
To show $\ABLQB \dominates (\PB, \QB)$, consider $\cX = [-1, 1]$ and let $\bx = (0, \ldots, 0, 1)$ and $\bx' = (0, \ldots, 0, \bot)$. Consider
$\cA$ that always generates the query $\psi_t(x) = x$ (and by definition $\psi_t(\bot) = 0$). In this case, it is immediate that $\PB = \ABLQB(\bx)$ and $\QB = \ABLQB(\bx')$.

To show that $(\PB, \QB) \dominates \ABLQB$, consider any adjacent datasets $\bx \to_z \bx'$ that differ on say example $x_n$ and $x_n' = \bot$.
For $\Gamma = (S_1', \ldots, S_T')$ for $S_i' \subseteq [n-1]$ being an assignment of batches for all other examples, let $\cB_{\Gamma}$ refer to the batch generator that samples $t$ uniformly in $[T]$ and returns $S_t = S_t' \cup \{n\}$ and $S_{r} = S_r'$ for all $r \ne t$.
We provide a post-processing function $f$ such that $f(\PB) = \ABLQ_{\cB_{\Gamma}}(\bx)$ and $f(\QB) = \ABLQ_{\cB_{\Gamma}}(\bx')$ for any $\Gamma$.
Since $\ABLQB(\cdot) = \frac{1}{T^{n-1}}\sum_{\Gamma} \ABLQ_{\cB_\Gamma}(\cdot)$, it follows from \Cref{lem:post-process-and-domination,prop:joint-convexity} that $(\PB, \QB) \dominates \ABLQB$.

Consider the randomized post-processing function $f$ that maps vectors $v \in \R^T$ to $(\R^d)^T$ (output space of $\ABLQB$) as follows: Given vector $(v_1, \ldots, v_T) \in \R^T$, let $g_t ~=~ \sum_{i \in S_t'} \psi_t(x_i) + \psi_t(x_n) \cdot v_t + e_t$
for $e_t \sim \cN(0, \sigma^2 (I - \psi_t(x_n) \psi_t(x_n)^\top))$; this is inductively defined since $\psi_t$ potentially depends on $g_1, \ldots, g_{t-1}$.

First let us consider the case when $v \sim \QB = \cN(0, \sigma^2 I)$. In this case, $\psi_t(x_n) v_t + e_t$ is distributed precisely as $\cN(0, \sigma^2 I)$, and thus, $g_t$ is distributed as $\sum_{i \in S_t'} \psi_t(x_i) + e'_t$ for $e'_t \sim \cN(0, \sigma^2 I)$ precisely as in $\ABLQB(\bx')$, since $\psi_t(x_n') = 0$ so it does not matter which batch $x_n'$ lands in. Thus, $f(\QB) = \ABLQB(\bx')$.

On the other hand, $v \sim \PB$, is equivalent to sampling $t_*$ uniformly at random in $[T]$, and sampling $v \sim \cN(e_{t_*}, \sigma^2 I)$. For a fixed $t_*$ and sampling $v \sim \cN(e_{t_*}, \sigma^2 I)$, $g_t$ is distributed as $\sum_{i \in S_t'} \psi_t(x_i) + e'_t$ for $e'_t \sim \cN(0, \sigma^2 I)$ for $t \ne t_*$ and distributed as $\sum_{i \in S_t'} \psi_t(x_i) + \psi_t(x_n) + e'_t$ for $e'_t \sim \cN(0, \sigma^2 I)$ for $t = t_*$, which is precisely as in $\ABLQB(\bx)$ when $x_n$ lands in batch $S_{t_*}$. Since $t_*$ is uniformly random in $[T]$,
we get that $f(\PB) = \ABLQB(\bx)$.
\end{proof}
We now compare the privacy of $\ABLQB$ against $\ABLQG$ for $\cG \in \{\cD, \cS, \cP\}$.

{\bf\boldmath $\deltaB$ vs. $\deltaD$ and $\deltaS$.}
A simple consequence of \Cref{prop:joint-convexity} is that $\ABLQB$ and $\ABLQS$ have better privacy guarantees than $\ABLQD$, since any fixed assignment of examples to batches results in the analysis being equivalent to $\ABLQD$.
In fact, more strongly, we observe that in fact $\ABLQB$ always has better privacy guarantees than $\ABLQS$.
\begin{proposition}\label{prop:B-vs-S-and-D}
For all $\eps > 0$, $\sigma > 0$, and $T \ge 1$, it holds that $\deltaB(\eps) \le \deltaS(\eps) \le \deltaD(\eps)$.
\end{proposition}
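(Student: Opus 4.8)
The plan is to prove the two inequalities separately, in each case writing the less private mechanism as a mixture that collapses onto the more private one via the joint convexity of the hockey-stick divergence (\Cref{prop:joint-convexity}).

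\emph{Step 1: $\deltaS(\eps) \le \deltaD(\eps)$.} I would fix any adaptive query method $\cA$ and any adjacent $\bx \sim \bx'$, and condition on the random permutation $\pi$ drawn inside $\cS$. Since $\cA$ inspects only the previous responses and never the batch labels, the conditional law of $\ABLQS(\bx)$ given $\pi$ is exactly that of $\ABLQD$ run with the same $\cA$ on the reordering $\pi(\bx)$ of $\bx$, and likewise for $\bx'$; moreover $\pi(\bx) \sim \pi(\bx')$ is still a valid (zero-out) adjacent pair. Hence by the definition of $\deltaD$, $\Deps(\ABLQD(\pi(\bx)) \| \ABLQD(\pi(\bx'))) \le \deltaD(\eps)$ for every $\pi$. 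Now $\ABLQS(\bx) = \frac{1}{n!}\sum_{\pi} \ABLQD(\pi(\bx))$ and $\ABLQS(\bx') = \frac{1}{n!}\sum_{\pi} \ABLQD(\pi(\bx'))$ are mixtures with identical weights, so \Cref{prop:joint-convexity} gives $\Deps(\ABLQS(\bx)\|\ABLQS(\bx')) \le \frac{1}{n!}\sum_{\pi}\Deps(\ABLQD(\pi(\bx))\|\ABLQD(\pi(\bx'))) \le \deltaD(\eps)$. Taking the supremum over $\cA$ and $\bx \sim \bx'$ yields $\deltaS(\eps) \le \deltaD(\eps)$; this is exactly the ``any fixed assignment reduces to $\ABLQD$'' remark from the text, specialized to the equal-size assignments induced by permutations.

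\emph{Step 2: $\deltaB(\eps) \le \deltaS(\eps)$.} Here I would reuse the hard instance from the proof of \Cref{thm:bnb-dominating-pair}: let $\cX = [-1,1]$, take scalar queries ($d = 1$) and the non-adaptive query $\psi_t(x) = x$, and set $\bx = (0,\dots,0,1)$ and $\bx' = (0,\dots,0,\bot)$, so $\bx \to_z \bx'$. With every record but the last zeroed out, those records are inert --- they add $0$ to each batch sum --- so the output depends only on which batch the last record lands in, and under $\cS$ (with $n = bT$, so no record is dropped) that batch is uniform over $[T]$ by symmetry of the permutation, exactly as under $\cB$. Conditioned on the last record landing in batch $t$, the output $(g_1,\dots,g_T) \in \R^T$ is $\cN(e_t,\sigma^2 I_T)$ under $\bx$ and $\cN(0,\sigma^2 I_T)$ under $\bx'$; averaging over $t$ gives $\ABLQS(\bx) = \PB$ and $\ABLQS(\bx') = \QB$. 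Thus $\ABLQS \dominates (\PB, \QB)$, so $\deltaS(\eps) \ge \max\{\Deps(\PB\|\QB), \Deps(\QB\|\PB)\}$, which equals $\deltaB(\eps)$ by \Cref{thm:bnb-dominating-pair}.

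\emph{Main obstacle.} I do not expect any analytic difficulty; the care lies entirely in the bookkeeping. In Step 1 one must confirm that conditioning on $\pi$ genuinely turns $\ABLQS$ into an instance of $\ABLQD$ --- this uses that the query method is oblivious to batch labels and that zero-out adjacency is invariant under reordering. In Step 2 the point to verify is that zeroing out the non-differing records makes $\cS$ and $\cB$ behave identically on this instance, so that the Shuffle output \emph{equals}, rather than merely dominates, the pair $(\PB, \QB)$; one should also note that the statement, like the dominating pairs used throughout the paper, is implicitly in the regime $n = bT$.
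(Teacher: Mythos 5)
Your proof is correct and follows essentially the same route as the paper: your Step 2 is exactly the paper's argument (the all-zeros instance makes $\ABLQS(\bx) = \PB$ and $\ABLQS(\bx') = \QB$, so $\ABLQS \dominates (\PB,\QB)$ and \Cref{thm:bnb-dominating-pair} gives $\deltaB(\eps) \le \deltaS(\eps)$), while your Step 1 just spells out the joint-convexity/conditioning-on-$\pi$ remark the paper states immediately before the proposition to get $\deltaS(\eps) \le \deltaD(\eps)$.
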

\begin{proof}
Consider the same case as in the proof of \Cref{thm:bnb-dominating-pair} where $\cX = [-1, 1]$ and let $\bx = (0, \ldots, 0, 1)$ and $\bx' = (0, \ldots, 0, \bot)$. Consider
$\cA$ that always generates the query $\psi_t(x) = x$ (and by definition $\psi_t(\bot) = 0$). In this case, it is immediate to see that $\PB = \ABLQB(\bx) = \ABLQS(\bx)$ and $\QB = \ABLQB(\bx') = \ABLQS(\bx')$. Thus, we get that $\ABLQS \dominates (\PB, \QB)$ and hence $\deltaB(\eps) \le \deltaS(\eps)$.
\end{proof}

{\bf\boldmath $\deltaB$ vs. $\deltaP$.}
Finally, we observe that $\ABLQB$ has better privacy guarantees than $\ABLQP$ at large $\eps$.

\begin{theorem}\label{thm:bnb-vs-poisson}
For all $\sigma > 0$ and $T > 1$, there exist $\eps_0 > 0$ such that, $\forall \eps > \eps_0$, it holds that $\deltaB(\eps) < \deltaP(\eps)$.
\end{theorem}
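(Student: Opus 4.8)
The plan is to separate the two privacy loss curves in the large-$\eps$ (heavy-tail) regime by working with the tightly dominating pairs. By \Cref{prop:P-dominating} and the consequences of \Cref{def:dominating-pair} we have $\deltaP(\eps) \ge \Deps(\PP\|\QP)$, so it suffices to exhibit $\eps_0$ with $\Deps(\PP\|\QP) > \deltaB(\eps)$ for all $\eps > \eps_0$. The intuition is that the \emph{product} structure of $(\PP,\QP)$ lets a privacy loss of magnitude $\eps$ be realized by spreading a moderate shift over all $T$ coordinates, which costs only $\exp(-\Theta(\eps^2/T))$ in $\QP$-measure, whereas for $(\PB,\QB)$ the entire excess mass of $\PB$ over $\QB$ sits in a single coordinate, giving tail decay $\exp(-\Theta(\eps^2))$; since $T > 1$ the Poisson tail is strictly heavier, which is exactly the separation we need.

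Step 1 (upper bound on $\deltaB$). I would simply invoke \Cref{prop:B-vs-S-and-D}, which already gives $\deltaB(\eps) \le \deltaD(\eps)$, together with \Cref{prop:D-hockey}, the closed-form hockey-stick divergence of two unit-shifted univariate Gaussians, and the Gaussian tail bound $\Ncdf(-x) \le e^{-x^2/2}$. This yields $\deltaB(\eps) \le \exp(-\tfrac{\sigma^2}{2}\eps^2 + \tfrac{\eps}{2})$ for all sufficiently large $\eps$.

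Step 2 (lower bound on $\deltaP$). Here I would use that $\tfrac{\PP(v)}{\QP(v)} = \prod_{t=1}^{T}\big((1-\tfrac1T) + \tfrac1T e^{(v_t-1/2)/\sigma^2}\big) \ge T^{-T}e^{T(\min_t v_t - 1/2)/\sigma^2}$, pick the threshold $s_\eps := \tfrac12 + \tfrac{\sigma^2}{T}(\eps + 1 + T\log T)$, and take the event $E := \{v : v_t > s_\eps \text{ for all } t \in [T]\}$, chosen so that $\PP \ge e^{\eps+1}\QP$ pointwise on $E$. Then $\deltaP(\eps) \ge \Deps(\PP\|\QP) \ge \PP(E) - e^\eps\QP(E) \ge (e-1)e^\eps\QP(E)$, and bounding $\QP(E) = \Ncdf(-s_\eps/\sigma)^T$ below by Mills' ratio ($\Ncdf(-x) \ge \tfrac{1}{2x}\cdot\tfrac{1}{\sqrt{2\pi}}e^{-x^2/2}$ for $x \ge 1$) and substituting $s_\eps = \tfrac{\sigma^2}{T}\eps + O(1)$ leaves $\deltaP(\eps) \ge \exp(-\tfrac{\sigma^2}{2T}\eps^2 - C\eps)$ for some $C = C(\sigma,T)$ and all sufficiently large $\eps$; the polynomial-in-$\eps$ prefactor $\Theta(\eps^{-T})$ coming from Mills' ratio is absorbed into the $e^{-C\eps}$ term.

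Step 3 (conclusion). Dividing the two bounds, $\deltaB(\eps)/\deltaP(\eps) \le \exp\!\big(-\tfrac{\sigma^2}{2}(1-\tfrac1T)\eps^2 + (C+\tfrac12)\eps\big)$, whose exponent tends to $-\infty$ as $\eps \to \infty$ precisely because $T > 1$; taking $\eps_0$ past the point where this ratio drops below $1$ finishes the proof. The main obstacle is Step 2: obtaining the exponent $\tfrac{\sigma^2}{2T}$ rather than $\tfrac{\sigma^2}{2}$ requires choosing the large-deviation event $E$ so that every one of the $T$ coordinates contributes equally to the privacy loss — the least-costly way, in $\QP$-measure, to force a loss of size $\eps$ in a product mechanism — whereas the naive single-large-coordinate event gives only the $\tfrac{\sigma^2}{2}$ rate and fails to separate $\deltaP$ from $\deltaB$.
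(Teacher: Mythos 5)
Your proposal is correct, and its skeleton is the same as the paper's: both arguments reduce the claim via \Cref{prop:B-vs-S-and-D} to showing $\deltaD(\eps) < \deltaP(\eps)$ for all large $\eps$. The difference is in how that separation is obtained. The paper's proof is a two-line reduction that simply cites \citet[Theorem 4.2]{chua24private} for the statement $\deltaD(\eps) < \deltaP(\eps)$ beyond some $\eps_0$, whereas you re-derive it from scratch: the upper bound $\deltaB(\eps) \le \deltaD(\eps) \le \exp(-\tfrac{\sigma^2}{2}\eps^2 + \tfrac{\eps}{2})$ via \Cref{prop:D-hockey} and a Gaussian tail bound is fine, and your lower bound on $\deltaP(\eps)$ is also sound: on the event $E = \{v : \min_t v_t > s_\eps\}$ with $s_\eps = \tfrac12 + \tfrac{\sigma^2}{T}(\eps + 1 + T\log T)$, the pointwise bound $\tfrac{\PP(v)}{\QP(v)} \ge T^{-T} e^{T(\min_t v_t - 1/2)/\sigma^2} \ge e^{\eps+1}$ holds, so $\Deps(\PP\|\QP) \ge (e-1)e^\eps \QP(E)$ with $\QP(E) = \Ncdf(-s_\eps/\sigma)^T \ge \exp(-\tfrac{\sigma^2}{2T}\eps^2 - O(\eps))$ by Mills' ratio, and the tight domination of \Cref{prop:P-dominating} converts this into a lower bound on $\deltaP(\eps)$. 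Your closing observation is also the right one: the ``spread the shift over all $T$ coordinates'' event is what produces the $\tfrac{\sigma^2}{2T}\eps^2$ rate, and since $T>1$ this beats the $\tfrac{\sigma^2}{2}\eps^2$ rate of $\deltaD$, giving strict separation for all $\eps$ past a finite $\eps_0$. In short, you prove (a quantitative version of) the cited ingredient rather than invoking it; this makes the argument self-contained and even yields an explicit rate $\deltaB(\eps)/\deltaP(\eps) \le \exp(-\tfrac{\sigma^2}{2}(1-\tfrac1T)\eps^2 + O(\eps))$, at the cost of length compared to the paper's citation-based proof.
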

\begin{proof}%
\citet[Theorem 4.2]{chua24private} showed that there exists an $\eps_0$ such that for all $\eps > \eps_0$, it holds that $\deltaD(\eps) < \deltaP(\eps)$. Combining this with \Cref{prop:B-vs-S-and-D} gives us that $\deltaB(\eps) < \deltaP(\eps)$ for all $\eps \ge \eps_0$.
\end{proof}

\begin{remark}\label{rem:bnb-poisson-incomparable}
In \Cref{app:bnb-vs-poisson-incomparable}, we show that in fact for all $\sigma > 0$ and $T > 1$, $(\PB, \QB) \not\dominates (\PP, \QP)$; the reverse direction $(\PP, \QP) \not\dominates (\PB, \QB)$ is already implied by \Cref{thm:bnb-vs-poisson}. Thus, the privacy guarantees of $\ABLQP$ and $\ABLQB$ are in general incomparable.
\end{remark}

\section{\boldmath ESTIMATING \texorpdfstring{$\deltaB(\eps)$}{deltaB(eps)}}\label{sec:Monte Carlo}
\vspace{-2mm}

Hockey stick divergence $D_{\eps}(P ~\|~ Q)$ can be expressed in terms of the {\em privacy loss function} $L_{P ~\|~ Q}(x)$~\citep{dwork16concentrated}\footnote{The distribution of $L_{P ~\|~ Q}(x)$ for $x \sim P$ is also known as the {\em privacy loss random variable}. However for later convenience we use the {\em loss function} terminology.} as
\begin{align}\textstyle
    \Deps(P ~\|~ Q) = \Ex_{x \sim P}~[1 - e^{\eps - L_{P ~\|~ Q}(x)}]_+,\label{eq:hsd-as-expectation}
\end{align}
where $L_{P ~\|~ Q}(x) := \log \frac{P(x)}{Q(x)}$,\footnote{Technically speaking, $\frac{P(x)}{Q(x)}$ should be replaced by the Radon--Nikodym derivative of $\frac{dP}{dQ}(x)$, but for purposes of this work, it suffices to consider the case of densities.} where $P(x)$ and $Q(x)$ refers to the density of $P$ and $Q$ at $x$ and $[z]_+ := \max\{0, z\}$. The privacy loss function $L_{\PB ~\|~ \QB}(x)$ for the pair $(\PB, \QB)$ and $(\QB, \PB)$ at $x \in \R^T$ are as follows:
\begin{align}
L_{\PB ~\|~ \QB}(x) &\textstyle~:=~ \log \frac{\PB(x)}{\QB(x)}
~=~ \log \frac{\sum_{t=1}^T e^{-\| x - e_t\|^2 / (2\sigma^2)}}{T \cdot e^{-\|x\|^2 / (2\sigma^2)}} \nonumber\\
&\textstyle~=~ \log \prn{\sum_{t=1}^T e^{x_t / \sigma^2}} - \log T - \frac{1}{2\sigma^2},\label{eq:loss_pb_qb}\\
L_{\QB ~\|~ \PB}(x)
&\textstyle~=~ \log T + \frac{1}{2\sigma^2} - \log \prn{\sum_{t=1}^T e^{x_t / \sigma^2}}.\label{eq:loss_qb_pb} 
\end{align}

{\bf Monte Carlo Estimation.} \eqref{eq:hsd-as-expectation} suggests a natural approach for estimating $\Deps(P ~\|~ Q)$, via drawing multiple samples $x \sim P$ and returning the average value of $L_{P ~\|~ Q}(x)$; such an approach was previously studied by \cite{wang23randomized}. We can obtain high probability upper bounds via the Chernoff--Hoeffding bound as described in \Cref{alg:mce}, wherein $\KL(q ~\|~ p) := q \log \frac{q}{p} + (1 - q) \log \frac{1-q}{1-p}$ is the KL divergence for Bernoulli random variables $\Ber(p)$ and $\Ber(q)$.

\begin{theorem}\label{thm:high-prob-delta}
For all distributions $P$ and $Q$, \Cref{alg:mce} returns an upper bound on $\Deps(P ~\|~ Q)$ with probability $1 - \beta$.
\end{theorem}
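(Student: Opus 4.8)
The plan is to recognize \Cref{alg:mce} as nothing more than the standard one-sided confidence bound for the mean of a $[0,1]$-valued random variable obtained from the relative-entropy form of the Chernoff--Hoeffding inequality, and then to check that this bound is valid.

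The one ingredient that makes everything work is boundedness. By \Cref{eq:hsd-as-expectation}, $\Deps(P\|Q) = \Ex_{x\sim P}[h(x)]$ with $h(x) := [1 - e^{\eps - L_{P\|Q}(x)}]_+$, and since $e^{\eps - L_{P\|Q}(x)} > 0$ we have $h(x)\in[0,1)$. So, writing $\mu := \Deps(P\|Q)$, the quantity $\mu$ is the mean of the i.i.d.\ $[0,1]$-valued samples $Y_i := h(x_i)$ that the algorithm forms from $x_1,\dots,x_n \sim P$; let $\hat\mu := \tfrac1n\sum_{i=1}^n Y_i$ be their empirical mean. \Cref{alg:mce} returns $\delta^\star$, the largest $\delta\in[\hat\mu,1]$ with $\KL(\hat\mu\,\|\,\delta)\le \tfrac{\log(1/\beta)}{n}$ (and $\delta^\star=1$ when this constraint never binds); equivalently, $\delta^\star$ is the unique root in $[\hat\mu,1)$ of $\KL(\hat\mu\,\|\,\delta) = \tfrac{\log(1/\beta)}{n}$ in the generic case $\hat\mu<1$.

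Next I would invoke the relative-entropy Chernoff--Hoeffding lower-tail bound: for i.i.d.\ $[0,1]$-valued $Y_1,\dots,Y_n$ with mean $\mu$ and any $q\in[0,\mu]$, $\Pr[\hat\mu\le q]\le e^{-n\,\KL(q\|\mu)}$. (This is the classical Cram\'er--Chernoff optimization; the Bernoulli rate function $\KL(\cdot\|\mu)$ appears because a $[0,1]$-valued variable with mean $\mu$ has moment generating function at most that of $\Ber(\mu)$.) Then I would control the failure event $\{\delta^\star < \mu\}$ by monotonicity: $\delta\mapsto\KL(\hat\mu\|\delta)$ is nondecreasing on $[\hat\mu,1]$ and $q\mapsto\KL(q\|\mu)$ is nonincreasing on $[0,\mu]$. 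Hence $\delta^\star<\mu$ forces $\hat\mu\le\delta^\star<\mu$ together with $\KL(\hat\mu\|\mu)\ge\KL(\hat\mu\|\delta^\star)=\tfrac{\log(1/\beta)}{n}$; letting $q^\star\in[0,\mu]$ be the point with $\KL(q^\star\|\mu)=\tfrac{\log(1/\beta)}{n}$, this in turn forces $\hat\mu\le q^\star$. Therefore $\Pr[\delta^\star<\mu]\le\Pr[\hat\mu\le q^\star]\le e^{-n\,\KL(q^\star\|\mu)}=\beta$, i.e.\ $\Pr[\delta^\star\ge\Deps(P\|Q)]\ge 1-\beta$, as claimed.

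The only place that needs care is the bookkeeping in the last step — matching the algorithm's definition of $\delta^\star$ to the confidence-interval endpoint and disposing of the degenerate cases ($\hat\mu=1$, $\mu=0$, or $\tfrac{\log(1/\beta)}{n}$ exceeding $\sup_{q\in[0,\mu]}\KL(q\|\mu)$, where $\delta^\star$ is clipped to $1\ge\mu$ and the bound is trivial). The analytic content is entirely the Chernoff--Hoeffding bound in its $\KL$ form, so no new estimates are required; the main (minor) obstacle is simply being precise about these monotonicity and edge-case arguments.
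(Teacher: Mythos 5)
Your proposal is correct and follows essentially the same route as the paper's proof: bound the summands in $[0,1]$, apply the KL-form Chernoff--Hoeffding lower-tail bound (the paper's \Cref{fact:hoeffding}), and use monotonicity of $\KL(q\|\cdot)$ on $[q,1]$ together with the trivial case of returning $1$ to bound the failure event $\{p<\mu\}$ by $\beta$. Your write-up just makes the final comparison step explicit via the point $q^\star$ with $\KL(q^\star\|\mu)=\log(1/\beta)/m$, which the paper leaves implicit.
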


\begin{fact}[\cite{hoeffding63probability}]\label{fact:hoeffding}
For $Z_1, \ldots, Z_m$ drawn i.i.d. from distribution $P$ over $[0, 1]$ with mean $\mu$, %
\[\textstyle
\Pr\sq{\frac1m \sum_{i=1}^m Z_i < q} \le e^{-\KL(q ~\|~ \mu) m}.
\]
\end{fact}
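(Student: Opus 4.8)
The final statement, \Cref{fact:hoeffding}, is the relative‑entropy (Chernoff) form of the lower‑tail bound for bounded i.i.d.\ variables; nothing here is specific to privacy, so I would prove the general inequality by the exponential‑moment method. Write $\bar Z := \frac1m\sum_{i=1}^m Z_i$, so $\E[\bar Z] = \mu$. The inequality is meant for — and is only valid in — the regime $q \le \mu$: if $q \le 0$ the left‑hand side is $0$ since $\bar Z \ge 0$; if $q = \mu$ the right‑hand side is $1$; and the endpoints $\mu \in \{0,1\}$ (which force $q$ into the trivial regime) are immediate by inspection. So I may assume $0 < q \le \mu < 1$.

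\textbf{Steps 1--2 (Chernoff + moment bound).} Fix any $\lambda > 0$. Since $t\mapsto e^{-\lambda t}$ is decreasing and $e^{-\lambda\sum_i Z_i}$ is nonnegative, Markov's inequality and independence give
\[
\Pr[\bar Z < q] = \Pr\bigl[e^{-\lambda \sum_i Z_i} > e^{-\lambda q m}\bigr] \le e^{\lambda q m}\,\E\bigl[e^{-\lambda \sum_i Z_i}\bigr] = e^{\lambda q m}\bigl(\E[e^{-\lambda Z_1}]\bigr)^m .
\]
Because $Z_1 \in [0,1]$ and $z \mapsto e^{-\lambda z}$ is convex, on $[0,1]$ it lies below the chord through $(0,1)$ and $(1,e^{-\lambda})$, i.e.\ $e^{-\lambda z} \le 1 - z(1-e^{-\lambda})$, so taking expectations $\E[e^{-\lambda Z_1}] \le 1 - \mu + \mu e^{-\lambda}$. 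Substituting,
\[
\Pr[\bar Z < q] \le \Bigl(e^{\lambda q}\bigl(1 - \mu + \mu e^{-\lambda}\bigr)\Bigr)^m .
\]

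\textbf{Step 3 (optimize the Chernoff parameter).} With $v = e^{-\lambda} \in (0,1)$, minimize $\phi(v) := v^{-q}(1-\mu+\mu v)$ over $v \in (0,1]$. The numerator of $(\log\phi)'(v) = \frac{\mu(1-q)v - q(1-\mu)}{v(1-\mu+\mu v)}$ is linear and strictly increasing in $v$, so $\phi$ strictly decreases then strictly increases with unique minimizer $v^\star = \frac{q(1-\mu)}{\mu(1-q)}$, which lies in $(0,1]$ precisely because $q \le \mu$. A direct substitution gives $1-\mu+\mu v^\star = \frac{1-\mu}{1-q}$, hence
\[
\phi(v^\star) = \Bigl(\frac{\mu(1-q)}{q(1-\mu)}\Bigr)^{q}\cdot\frac{1-\mu}{1-q} = \Bigl(\frac{\mu}{q}\Bigr)^{q}\Bigl(\frac{1-\mu}{1-q}\Bigr)^{1-q} = e^{-\KL(q\|\mu)} ,
\]
the last step being just the definition $\KL(q\|\mu) = q\log\frac{q}{\mu} + (1-q)\log\frac{1-q}{1-\mu}$. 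Combining with Steps 1--2 yields $\Pr[\bar Z < q] \le \phi(v^\star)^m = e^{-\KL(q\|\mu)m}$, as claimed.

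The computation is textbook; the only point needing care is the bookkeeping around the regime of validity — noting that the claim requires $q \le \mu$ (for $q > \mu$ it is false in general, e.g.\ $\mu$ tiny and $q = \tfrac12$), checking that this makes the optimizer $v^\star \in (0,1]$ so that the admissible $\lambda = -\log v^\star \ge 0$ is indeed used in Step 1, and disposing of the degenerate endpoints $q\in\{0,\mu\}$ and $\mu \in \{0,1\}$ by direct inspection. Once that is settled, Steps 1--3 are the standard Chernoff argument and involve no further subtlety.
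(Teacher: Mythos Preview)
Your proof is correct and is the standard Chernoff--Hoeffding argument. The paper does not actually prove this statement: it is stated as a cited fact from \cite{hoeffding63probability} and used as a black box in the proof of \Cref{thm:high-prob-delta}. So there is no ``paper's proof'' to compare against; you have simply supplied the textbook derivation that the paper omits. Your care about the regime of validity ($q \le \mu$) is appropriate and matches how the fact is invoked in \Cref{thm:high-prob-delta}, where one argues that if the true mean $\mu$ exceeds the returned upper bound $p \ge q$, then the empirical mean $q$ must have fallen below $\mu$.
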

\begin{proof}[Proof of \Cref{thm:high-prob-delta}]
Suppose $\Deps(P, Q) = \mu$. If the returned value $p$ by \Cref{alg:mce} is $1$, then $\mu \le 1$ holds trivially. Otherwise if $p$ is smaller than $\mu$, then we have that the empirical average $q$ is such that $\KL(q ~\|~ \mu) > \KL(q ~\|~ p) \ge \log(1/\beta) / m$. From \Cref{fact:hoeffding}, this can happen with probability at most $\beta$.
\end{proof}
\Cref{alg:mce} in principle allows us to obtain an upper bound on the hockey stick divergence to arbitrary accuracy and with arbitrarily high probability through the guarantees in \Cref{thm:high-prob-delta} as the number of samples $m\to\infty$.
However, there are two challenges when implementing it in practice. First, the sample size $m$ needed can be quite large if we want a small multiplicative approximation in a regime where $\Deps(P ~\|~ Q)$ is very small. Furthermore in the case of $(\PB, \QB)$, sampling each $x^{(i)}$, which is $T$ dimensional, is computationally intensive for large $T$. We tackle each of these challenges using importance sampling and a new method of ``order statistics sampling'' respectively; we believe the latter could be of independent interest.

Before describing these methods, we first note a simplification when estimating $\Deps(\PB ~\|~ \QB)$ using \Cref{alg:mce}. For $P_t := \cN(e_t, \sigma^2 I)$, we have that $\PB = \sum_{t=1}^T \frac1T P_t$. By symmetry of $L_{\PB ~\|~ \QB}(x)$, it follows that
\begin{align*}
\Deps(\PB ~\|~ \QB)
&\textstyle~=~ \frac1T \sum_{t=1}^T \E_{x \sim P_t} [1 - e^{\eps - L_{\PB ~\|~ \QB}(x)}]_+\\
&\textstyle~=~ \E_{x \sim P_1} [1 - e^{\eps - L_{\PB ~\|~ \QB}(x)}]_+
\end{align*}
Thus, it suffices to sample $x^{(i)} \sim P_1$ instead of $\PB$ in \Cref{alg:mce}.

\begin{algorithm}[t]
\caption{Monte Carlo Estimation of $\Deps(P ~\|~ Q)$.}
\label{alg:mce}
\begin{algorithmic}
\REQUIRE Distributions $P$ and $Q$; sample access to $P$
\STATE\phantom{{\bf Inpu}} Sample size $m$, Error probability $\beta$.
\ENSURE An upper confidence bound on $\Deps(P ~\|~ Q)$.
\STATE Sample $x^{(1)}, \ldots, x^{(m)} \sim P$
\STATE $q \gets \frac1m \sum_{i=1}^m \max\{ 0, 1 - e^{\eps - L_{P ~\|~ Q}(x^{(i)})}\}$
\STATE $p \gets$ smallest value in $[q, 1]$ such that $\KL(q ~\|~ p) \ge \log(1/\beta) / m$, or $1$ if no such value exists
\RETURN $p$
\end{algorithmic}
\end{algorithm}

{\bf Importance Sampling.} We provide a generic approach for improving the sample complexity of \Cref{alg:mce} via importance sampling. For any $P$ and $Q$, let $E_{\eps}$ be any event such that $L_{P ~\|~ Q}(x) < \eps$ for all $x \notin E_{\eps}$. Then, it suffices to ``zoom in'' on $E_{\eps}$ by sampling from the conditional distribution $P|_{x \in E_{\eps}}$, when estimating the expectation in \eqref{eq:hsd-as-expectation}, as explained in \Cref{alg:importance-mce}. Since this requires sample access to $P|_{x \in E_{\eps}}$, the set $E_{\eps}$ has to be chosen carefully so that it will be efficient to sample from $P|_{x \in E_{\eps}}$.
We now define the specific sets $E_{\eps}$ we use for estimating $\Deps(\PB ~\|~ \QB)$ and $\Deps(\QB ~\|~ \PB)$,  starting with the latter.

For estimating $\Deps(\QB ~\|~ \PB)$, let $E_\eps := \{ x \in \R^T : \max_{t \in [T]} x_t \le C_\eps \}$ where $C_{\eps} = \frac12 - \eps \sigma^2$.
The choice of $C_\eps$ is such that for all $x \notin E_{\eps}$,
\begin{align*}
\textstyle L_{\QB ~\|~ \PB}(x) &\textstyle~=~ \log T + \frac{1}{2\sigma^2} - \log\prn{\sum_{t=1}^T e^{x_t / \sigma^2}}\\
&\textstyle~\le~ \log T + \frac{1}{2\sigma^2} - \log\prn{T e^{C_\eps / \sigma^2}}\\
&\textstyle~=~ \frac1{2\sigma^2} - \frac{C_\eps}{\sigma^2} ~=~ \eps.
\end{align*}
We show how to efficiently sample from $\QB|_{x \in E_{\eps}}$ in \Cref{subapp:importance}.

For estimating $\Deps(\PB ~\|~ \QB)$, we consider $E_\eps := \{ x : \max\{ x_1 - 1, \max_{t > 1} x_t\} \ge C_\eps \}$ such that
\[\textstyle
C_{\eps} ~=~ \frac{1}{2} + \sigma^2 \cdot \prn{\eps - \log\prn{1 + \frac{e^{1/\sigma^2} - 1}{T}}}.
\]
The choice of $C_{\eps}$ implies that for all $x \notin E_{\eps}$, it holds that $x_1 \le C_{\eps} + 1$ and $x_t \le C_{\eps}$ for all $t > 1$, and hence
\begin{align*}
&\textstyle L_{\PB ~\|~ \QB}(x) ~=~ \log\prn{\sum_{t=1}^T e^{x_t / \sigma^2}} - \log T - \frac{1}{2\sigma^2}\\
&\textstyle~\le~ \log(e^{(C_{\eps}+1) / \sigma^2} + (T-1) e^{C_{\eps} / \sigma^2}) - \log T - \frac{1}{2\sigma^2}\\
&\textstyle~=~ \frac{C_\eps}{\sigma^2} + \log(e^{1 / \sigma^2} + T-1) - \log T - \frac{1}{2\sigma^2} ~\le~ \eps.
\end{align*}
Note that as before we sample from $P_1|_{x \in E_{\eps}}$ instead of $\PB|_{x \in E_{\eps}}$. Again, it is efficient to sample from $P_1|_{x \in E_{\eps}}$, and we defer the details to \Cref{subapp:importance}.

In general, this approach for importance sampling reduces the sample complexity by a factor of $1 / P(E_\eps)$, and we numerically demonstrate the improved sample complexity for specific examples in \Cref{subapp:importance}.

\begin{algorithm}[t]
\caption{Monte Carlo Estimation of $\Deps(P ~\|~ Q)$ with Importance Sampling.}
\label{alg:importance-mce}
\begin{algorithmic}
\REQUIRE Distributions $P$ and $Q$,
\STATE\phantom{{\bf Inpu}} Event $E$ such that $L_{P ~\|~ Q}(x) < \eps$ for all $x \notin E$,
\phantom{{\bf Inpu}} Sample access to $P|_{x \in E}$,
\STATE\phantom{{\bf Inpu}} Sample size $m$, Error probability $\beta$.
\ENSURE An upper confidence bound on $\Deps(P ~\|~ Q)$.
\STATE Sample $x^{(1)}, \ldots, x^{(m)} \sim P|_{x \in E}$
\STATE $q \gets \frac1m \sum_{i=1}^m \max\{ 0, 1 - e^{\eps - L_{P ~\|~ Q}(x^{(i)})}\}$
\STATE $p \gets$ smallest value in $[q, 1]$ such that $\KL(q ~\|~ p) \ge \log(1/\beta) / m$, or $1$ if no such value exists
\RETURN $p \cdot P(E)$
\end{algorithmic}
\end{algorithm}

{\bf Order Statistics Sampling.}
As mentioned before, sampling $x \sim P_1$ or $x \sim \QB$ can be slow when the number of steps $T$ is large, especially since we also need to draw a large sample of size $m$. To make this efficient, at a slight cost of obtaining pessimistic bounds on $\Deps(\PB ~\|~ \QB)$ and $\Deps(\QB ~\|~ \PB)$, we use the following approximation: For any list $k_1, \ldots, k_r \in \{1, \ldots, R\}$  of increasing indices, the following holds, where we use $k_{r+1} := R$ and $k_0 := 0$ for convenience. Given $x_1, \ldots, x_R \in \R$, let $y^{(1)}, \ldots, y^{(R)}$ denote the same values as $x_i$'s but in sorted order $y^{(1)} \ge \dots \ge y^{(R)}$. Then,
\begin{align}
\textstyle\sum_{t=1}^R e^{x_t / \sigma^2}
&\textstyle~\le~ \sum_{i=1}^r (k_{i+1} - k_{i}) \cdot e^{y^{(k_i)} / \sigma^2}\label{eq:slimane-ub}\\
\textstyle\sum_{t=1}^R e^{x_t / \sigma^2}
&\textstyle~\ge~ \sum_{i=1}^r (k_{i} - k_{i-1}) \cdot e^{y^{(k_i)} / \sigma^2},\label{eq:slimane-lb}
\end{align}
where \eqref{eq:slimane-ub} additionally requires that $k_1 = 1$. These approximations are inspired by and are a generalization of the bounds introduced by \cite{slimane01bounds}, which correspond to $k_1 = 1$ and $k_2 = 2$ for \eqref{eq:slimane-ub} and $k_1 = 1$ and $k_2 = R$ for \eqref{eq:slimane-lb}.

Our key idea for efficient sampling is to directly sample from the joint distribution of order statistics $y^{(k_1)}, \ldots, y^{(k_r)}$. In particular, for any distribution $P$ with efficiently computable inverse of the cumulative density function $\CDF_P^{-1}(\cdot)$, \Cref{alg:order-sampling} efficiently samples order statistics of $P$, wherein $\Beta(a, b)$ is the Beta distribution over $[0, 1]$. \Cref{thm:order-sampling} (\Cref{subapp:order-stats}) establishes the correctness of this algorithm.

When estimating $\Deps(\QB ~\|~ \PB)$ we sample an upper bound on $L_{\QB ~\|~ \PB}(x)$ as
\[\textstyle
\log T + \frac1{2\sigma^2} - \log\prn{\sum_{i=1}^r (k_{i} - k_{i-1}) \cdot e^{y^{(k_i)} / \sigma^2}},
\]
for $R = T$, and when estimating $\Deps(\PB ~\|~ \QB)$ we sample an upper bound on $L_{\PB ~\|~ \QB}(x)$ as
\[\textstyle
\log\prn{e^{x_1 / \sigma^2} + \sum_{i=1}^r (k_{i+1} - k_{i}) \cdot e^{y^{(k_i)} / \sigma^2}} - \log T - \frac1{2\sigma^2},
\]
where we sample $x_1 \sim \cN(1, \sigma^2)$ and $y^{(k_1)}, \ldots, y^{(k_r)}$ using \Cref{alg:order-sampling} for $R = T-1$, and plug them into \Cref{alg:mce}. While not immediate, this can also be used in conjunction with importance sampling as in \Cref{alg:importance-mce}. 
We defer the details to \Cref{subapp:importance-order-combined}.

\begin{algorithm}[t]
\caption{Sampling Order Statistics of $P$}
\label{alg:order-sampling}
\begin{algorithmic}
\REQUIRE Number $R$ of random variables $\sim P$
\STATE\phantom{{\bf Inpu}} Orders $k_1, \ldots, k_r \in \{1, \ldots, R\}$
\ENSURE $(y^{(k_1)}, \ldots, y^{(k_r)})$ jointly distributed as $(k_1, \ldots, k_r)$ order statistics of $R$ draws from $P$ 
\STATE Let $k_0 \gets 0$
\FOR{$i = 1, \ldots, r$}
    \STATE $z_i \sim \Beta(R-k_i+1, k_i - k_{i-1})$
    \STATE $y^{(k_i)} \gets \CDF_P^{-1}(\prod_{j=1}^i z_j)$
\ENDFOR
\RETURN $(y^{(k_1)}, \ldots, y^{(k_r)})$
\end{algorithmic}
\end{algorithm}

{\bf \boldmath Lower Bounds on $\deltaB(\eps)$.}
Finally, in addition to Monte Carlo estimation, we can obtain a lower bound on $\deltaB(\eps)$ that is efficient to compute, inspired by \cite{chua24private}. The idea is to use the following lower bound\vspace{-2mm}
\begin{align}
\deltaB(\eps) \ge \Deps(\PB ~\|~ \QB) \ge \sup_{C} \PB(S_C) - e^{\eps} \QB(S_C)\label{eq:bnb-lb},%
\end{align}
where $S_C := \{x : \max_t x_t \ge C\}$; the main reason being that $\PB(S_C)$ and $\QB(S_C)$ are efficiently computable. We numerically observe that the lower bounds computed by this method are in fact quite close to the Monte Carlo estimates of $\deltaB(\eps)$ found via \Cref{alg:mce}; see, e.g., \Cref{fig:privacy}.

Finally, we note that Monte Carlo estimation can be easily parallelized by having different machines generate samples and then combining the estimates, which can reduce the wall clock time. This allows scaling the Monte Carlo estimation to a large number of samples. Our implementation of privacy accounting is available at \href{https://github.com/google-research/google-research/tree/master/dpsgd_batch_sampler_accounting}{\small github.com/google-research/google-research/tree/master/dpsgd\_batch\_sampler\_accounting}.

\section{EXPERIMENTS}\label{sec:experiments}
\vspace{-2mm}
We compare the utility of $\DPSGD$ using $\cS$, $\cP$, and $\cB$ batch generators.
We compare all algorithms with the same noise scale $\sigma$ to isolate the impact of using different batch samplers from the privacy accounting.

{\bf Implementation Details.}
We use the scalable batch sampling approach proposed by \cite{chua24scalable} using massively parallel computation~\citep{dean04mapreduce} for sampling batches using each of the batch generators.
We use JAX~\citep{jax2018github} for training neural networks.
Since it is more efficient to have fixed batch sizes, we follow \cite{chua24scalable} and fix a certain maximum batch size $B$ when using \BnB and Poisson subsampling, and for batches that exceed size $B$, we randomly subsample without replacement to get a batch of size $B$. This can be done by incurring a small penalty in the privacy parameters, as described in \Cref{app:training}.
For batches that are smaller than size $B$, we pad with  examples with a weight of $0$ so that the batch size is exactly $B$. We use a weighted loss function so that the mini-batch loss is unaffected by the padding.
We note that other optimizations could also be possible, such as using accumulation of gradients computed in small physical batches~\citep{abadi16deep,beltran24towards}.

Finally, we note \BnB can be implemented trivially given any implementation of Shuffling. Namely, given examples $x_1, \ldots, x_n$ in a randomly shuffled order, we construct batches of sizes $b_1, \ldots, b_{T-1}, b_T$ in sequential order where each $b_t$ is inductively sampled from the binomial distribution $\mathrm{Bin}(n - \sum_{i=1}^{t-1} b_i, 1/(T-t+1))$. An alternative approach could be to combinatorially simulate throwing $n$ balls into $T$ bins, to generate the sequence of batch sizes, which while less efficient, potentially avoids floating point errors in the binomial probabilities. Thus, \BnB is similar to implement as Shuffling.

{\bf Datasets.}
The first dataset we use is the Criteo Display Ads pCTR Dataset~\citep{tien14criteokaggle}, which contains around 46M examples. We split the labeled training set from the dataset chronologically into a 80\%/10\%/10\% partition of train/validation/test sets. We consider the task of predicting the probability of a click on an ad from the remaining features.

The second dataset we consider is the Criteo Sponsored Search Conversion Log Dataset~\citep{tallis2018reacting}, which contains 16M examples. We randomly split the dataset into a 80\%/20\% partition of train/test sets. We consider a conversion prediction task, where we predict the binary feature \texttt{Sale}. We omit the features denoted \textit{Outcome/Labels} in~\cite{tallis2018reacting}, and \texttt{product\_price}, which is highly correlated with the label.

For both datasets, we use the binary cross entropy loss for training and report the AUC on the labeled test split, averaged over three runs with each run using independently generated batches. We plot the results with error bars indicating a single standard deviation. For more details about the model architectures and training, see \Cref{app:training}.

{\bf Results.}
We train with $\DPSGD$ with various values of $\sigma$, and for reference, we also train with regular $\SGD$ without any clipping or noise for different batch sizes. The model utilities in terms of AUC are in \Cref{fig:utility}.  The $\sigma$ values we chose were motivated as follows:

(i) First, we consider ``large'' values of $\sigma$, namely in $\{0.1, 0.2, 0.3, 0.4\}$, such that the privacy parameters when using Poisson subsampling / Ball-and-Bins sampling are in a regime that is common in practice (as seen from examples in \cite{desfontainesblog2021list}),%

(ii) We also consider tiny values of $\sigma$ in $\{10^{-2}, 10^{-3}, 10^{-4}, 10^{-5}\}$ to understand how the different samplers behave in the regime interpolating between ``no privacy'' and ``commonly used regimes of privacy''.

We observe that for non-private $\SGD$, \BnB and Shuffling have similar utility and improve significantly over Poisson subsampling. For $\DPSGD$, we observe similar trends for noise multipliers at most $0.001$, but for higher noise multipliers that could be deemed more relevant in practice, the different batch generators all have similar utility.

Next, we plot bounds on $\deltaG(\eps)$ for $\cG \in \{\cS, \cP, \cB\}$ for different combinations of $\sigma$ and (expected) batch size in \Cref{fig:privacy}.
For $\deltaP$, we plot both upper and lower bounds as computed using the \texttt{dp\_accounting} library~\citep{GoogleDP}. For $\deltaS$ we plot a lower bound as shown by \cite{chua24private}. For $\deltaB$, we plot a lower bound from \eqref{eq:bnb-lb}, the mean of the Monte Carlo estimate (value $q$ in \Cref{alg:mce}) and the upper confidence bound (value $p$ in \Cref{alg:mce}) for error probability $\beta = 10^{-3}$. We find even the upper confidence bounds on $\deltaB$ to be lower than $\deltaP$ for the most part, with the exceptional cases when $\deltaP$ is smaller than $10^{-7}$, as this is the region where the concentration bounds are not strong enough. We believe that $\deltaB(\eps) < \deltaP(\eps)$ even in this regime.

\def\figheight{0.25}
\begin{figure*}
  \centering
  \begin{tabular}{cc}
  \includegraphics[height=\figheight\linewidth]{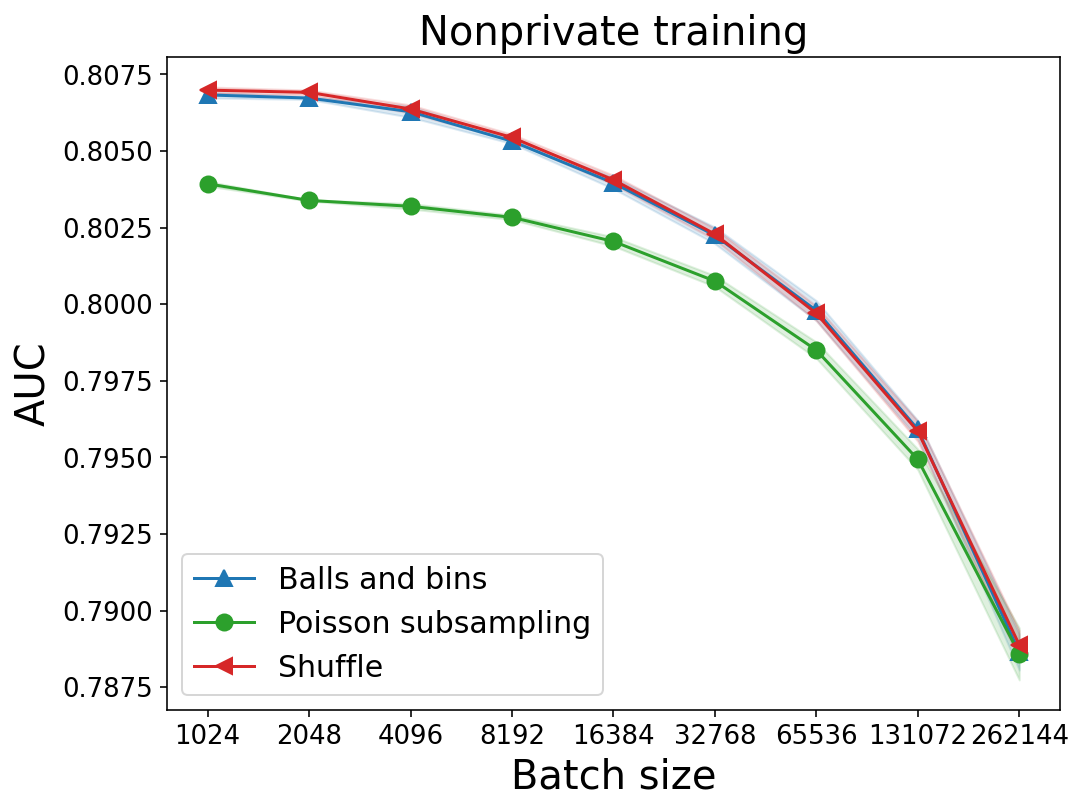} & 
  \includegraphics[height=\figheight\linewidth]{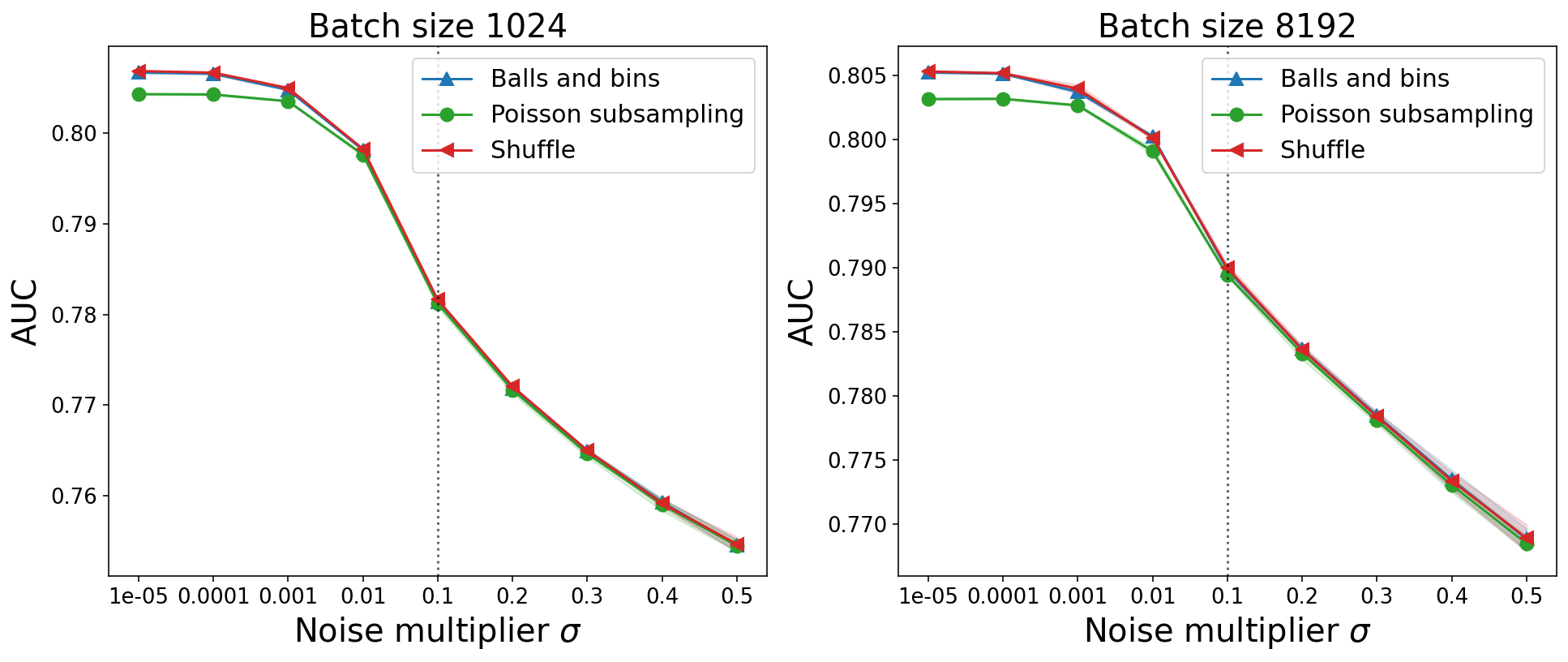} \\
  \includegraphics[height=\figheight\linewidth]{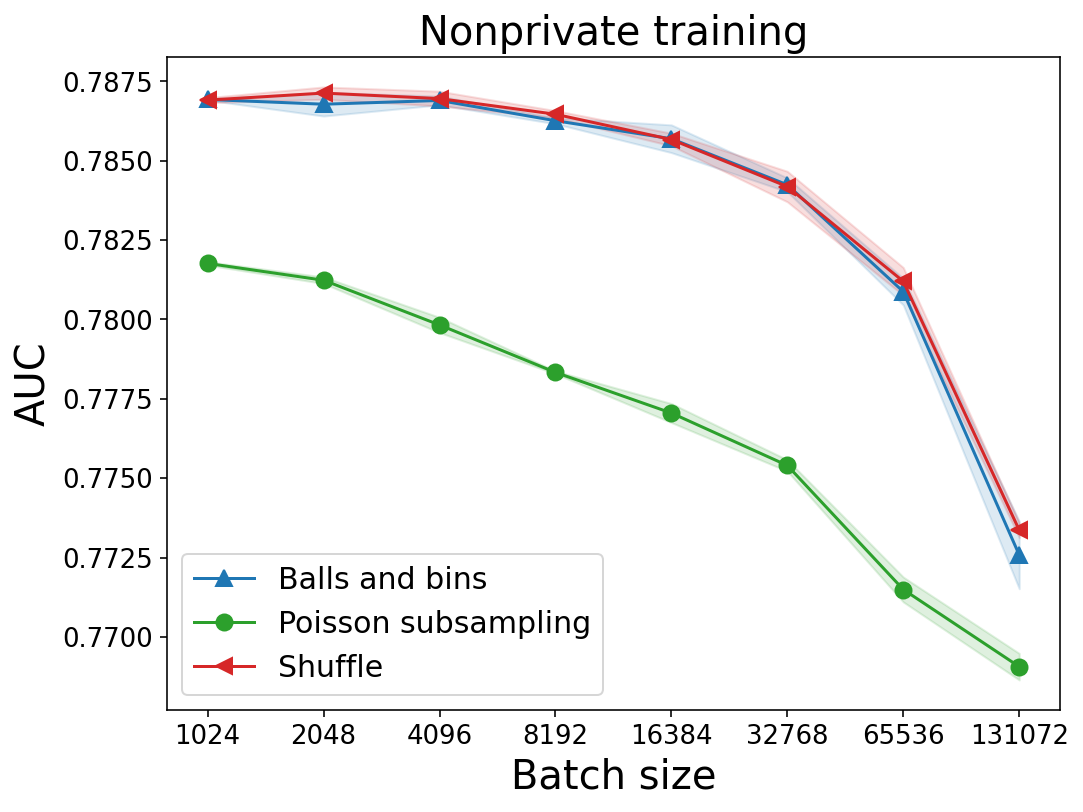} & 
  \includegraphics[height=\figheight\linewidth]{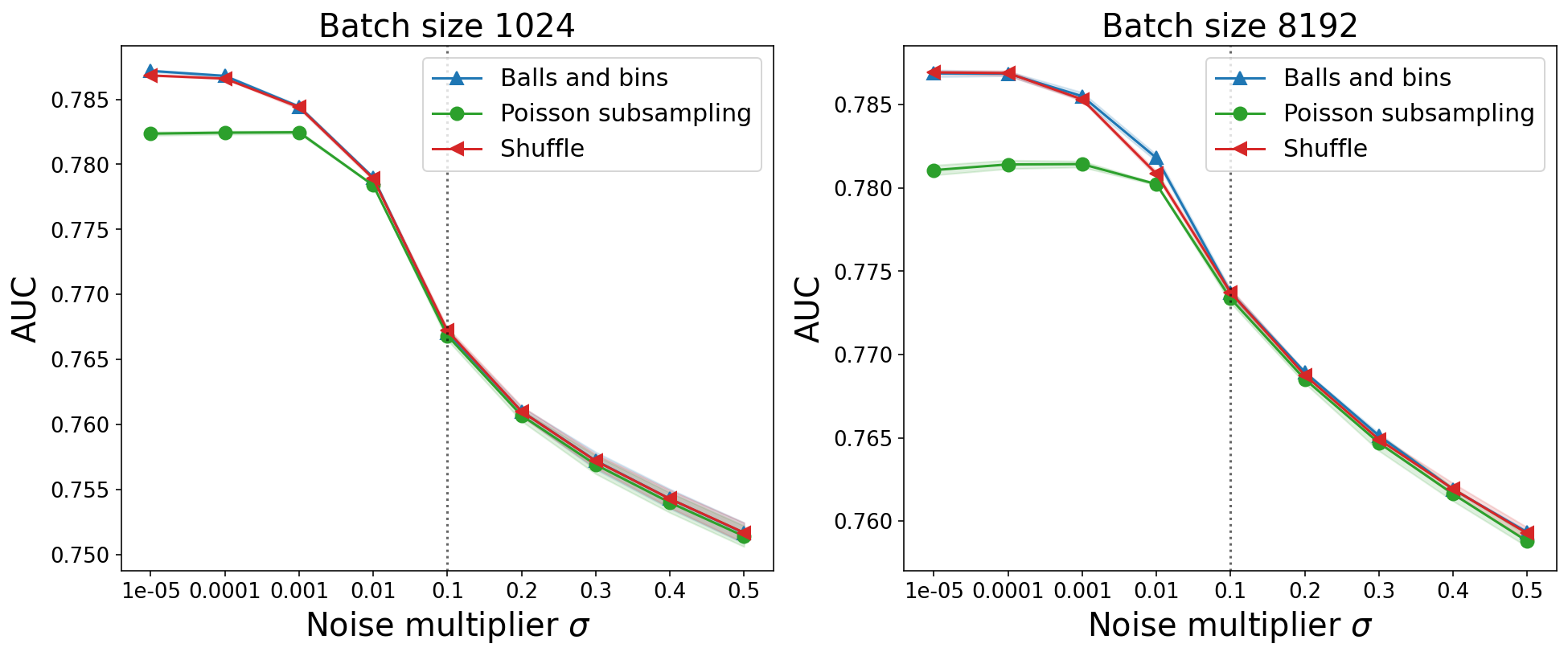} \\
  \end{tabular}
  \caption{AUC values for 1 epoch of training with the Criteo Display Ads pCTR dataset (top) and the Criteo Sponsored Search Conversion Log dataset (bottom). On the left, we train without privacy and vary the batch size. In the middle and right, we train privately with varying $\sigma$, using (expected) batch sizes 1024 (middle) and 8192 (right). We use a log scale to the left of the vertical dotted line at $\sigma = 0.1$, and a linear scale to the right.}
  \label{fig:utility}
\end{figure*}

\def\figheight{0.27}
\begin{figure*}
\centering
\begin{tabular}{cc}
\includegraphics[height=\figheight\linewidth]{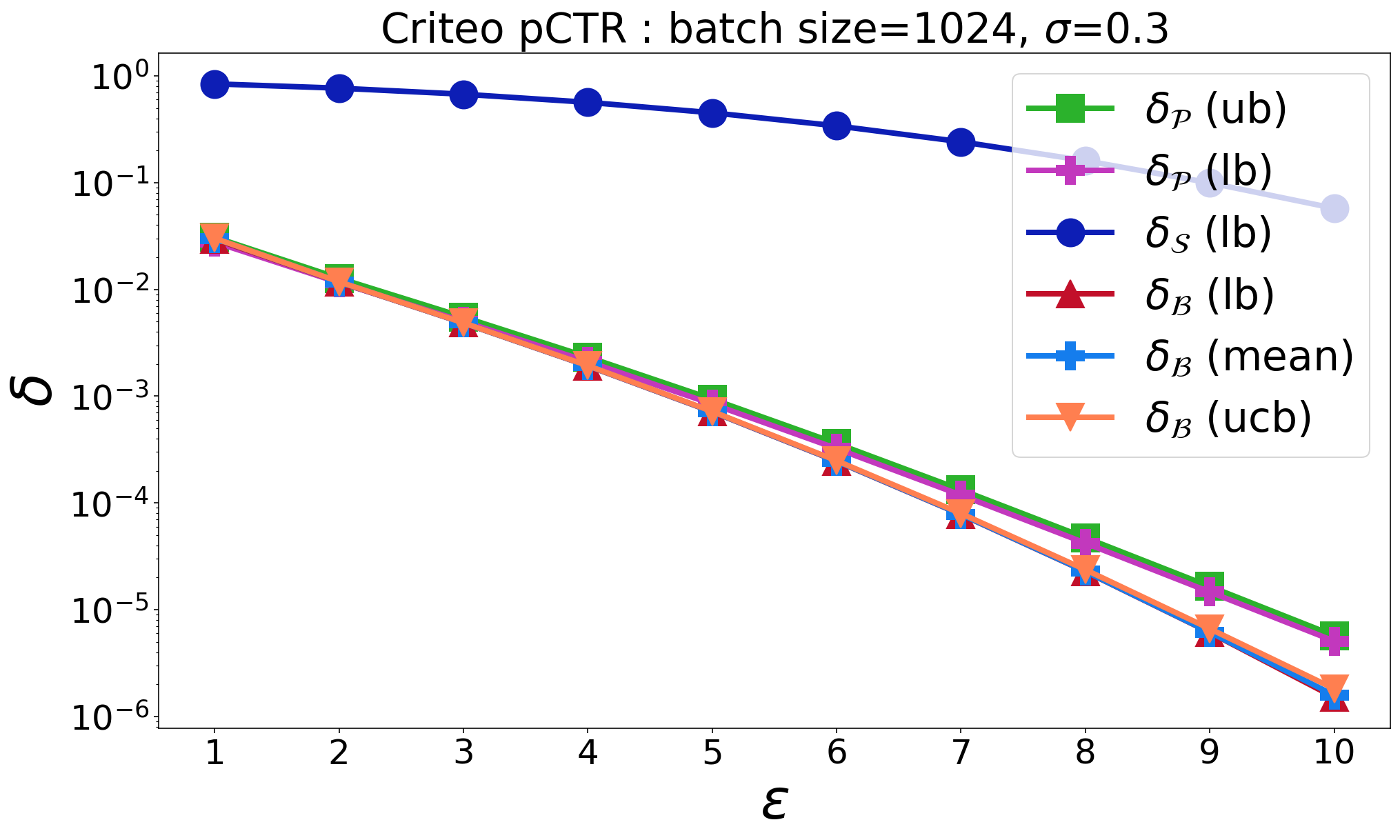} & 
\includegraphics[height=\figheight\linewidth]{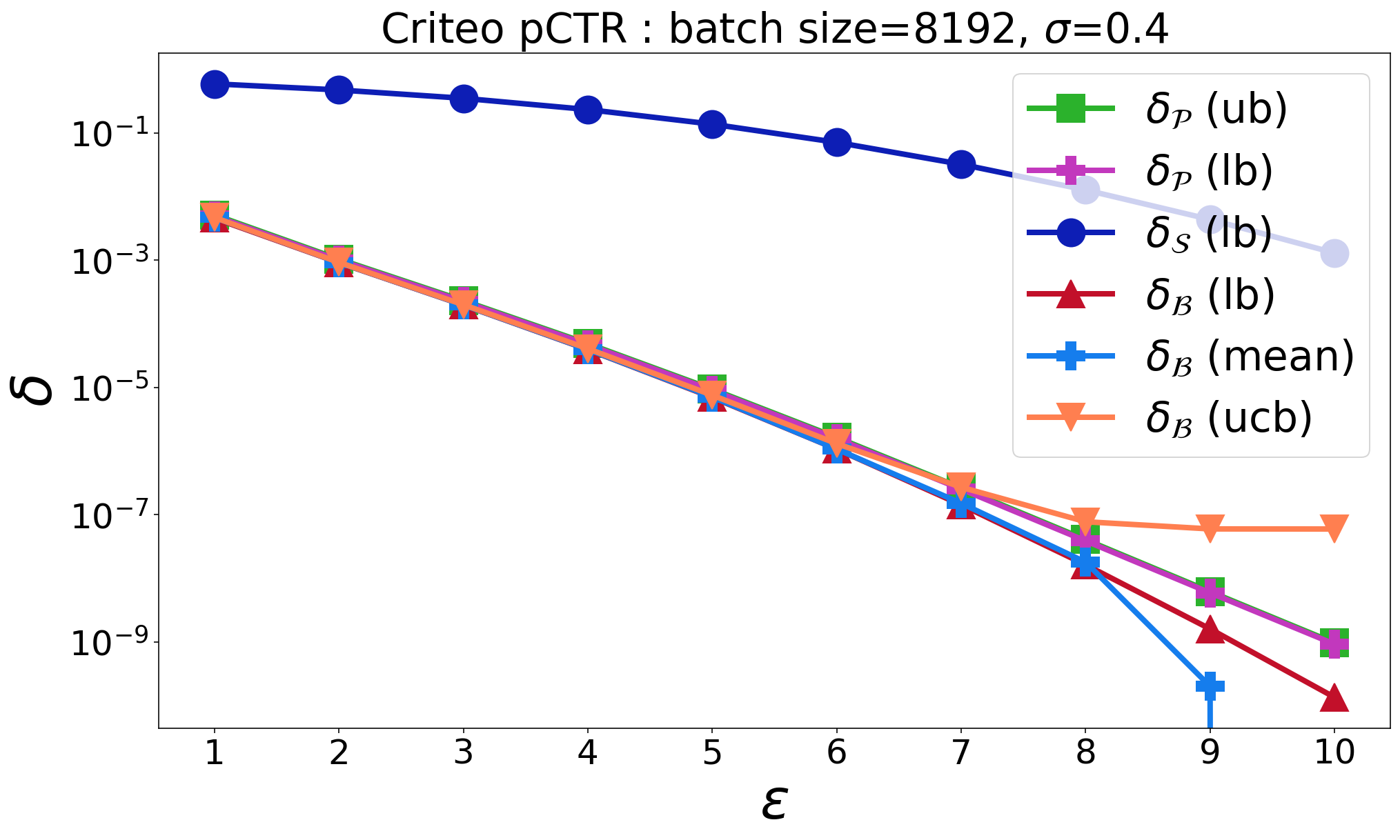} \\
\includegraphics[height=\figheight\linewidth]{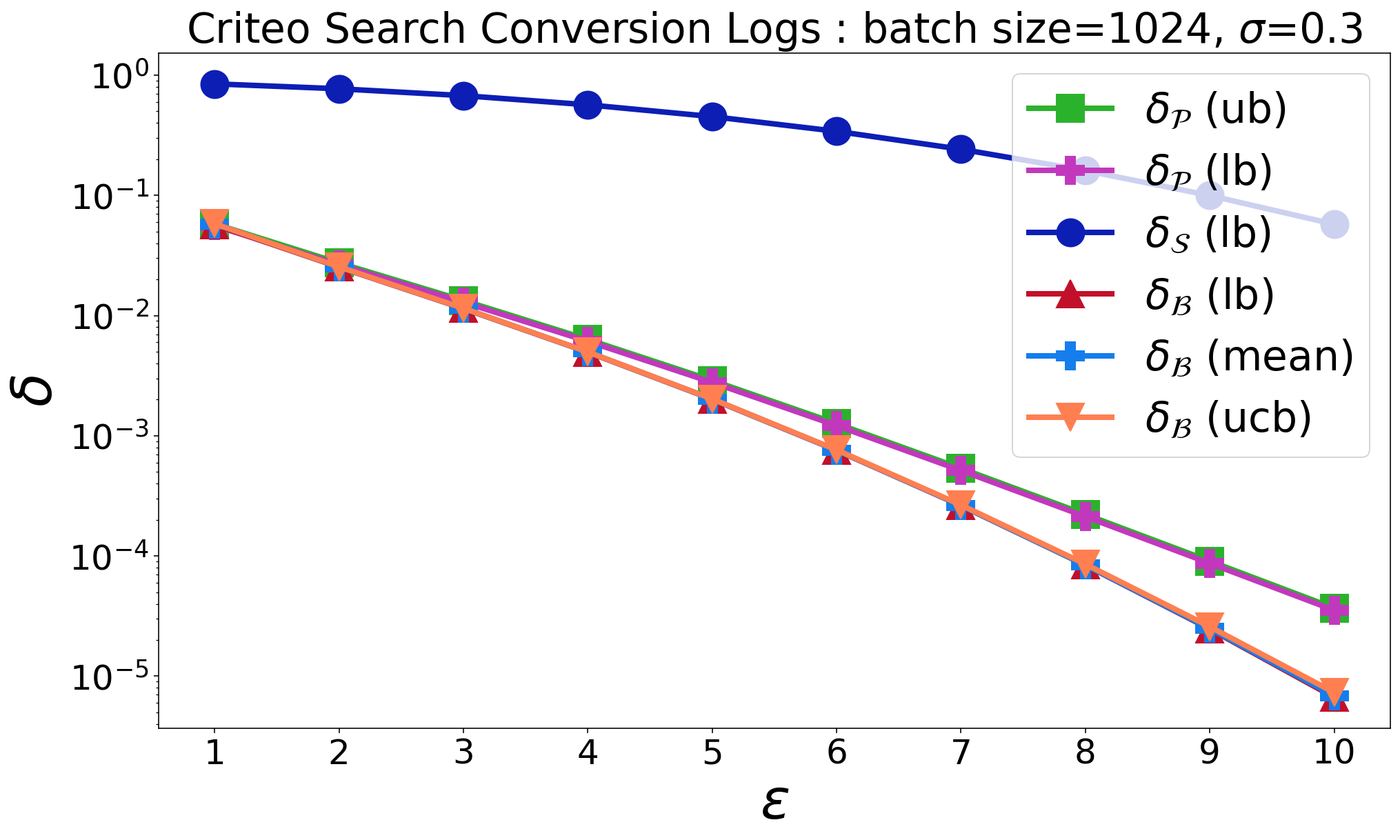} & 
\includegraphics[height=\figheight\linewidth]{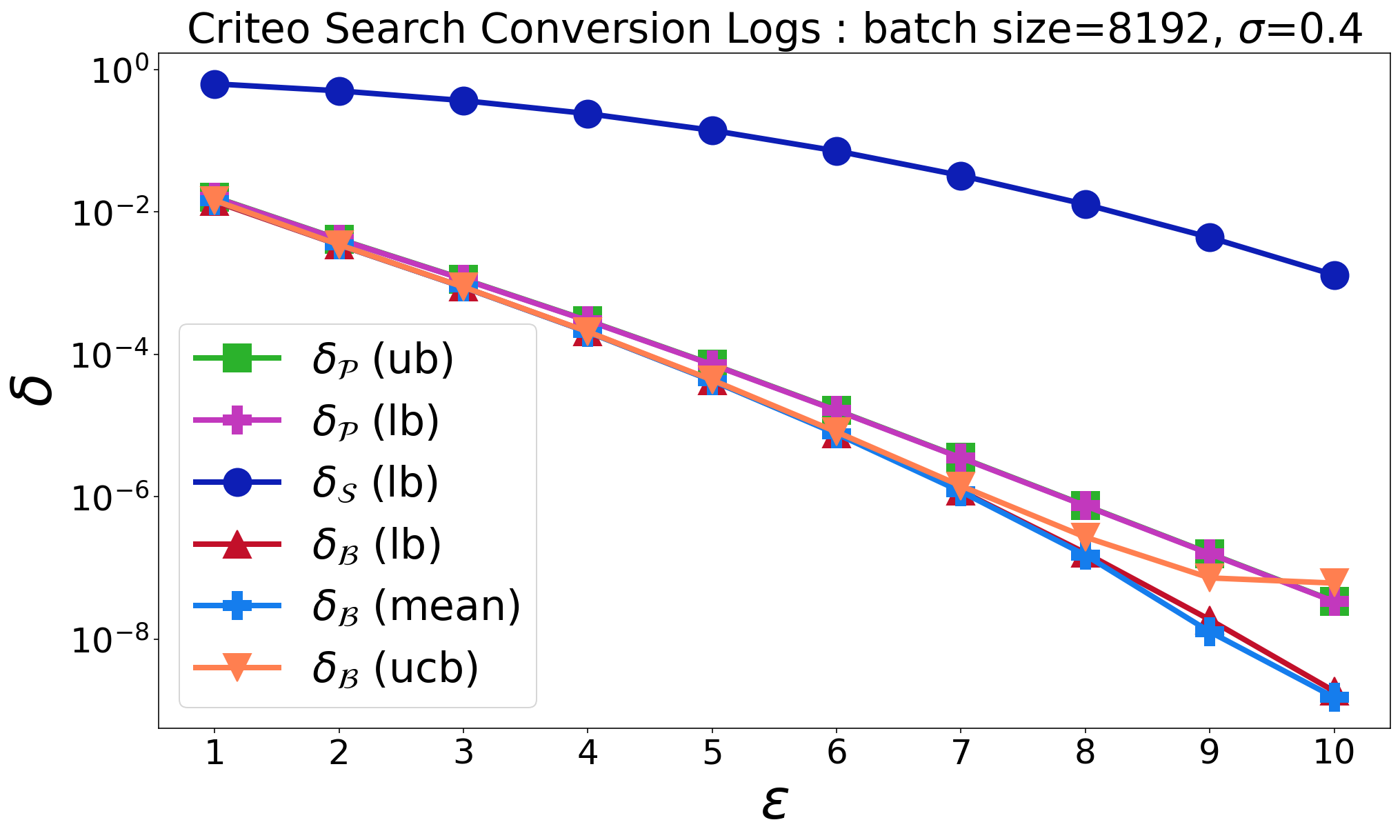} \\
\end{tabular}
\caption{Bounds on $\deltaP$, $\deltaS$,  and $\deltaB$ are plotted for various values of $\eps$ for different (expected) batch size and $\sigma$. These mean and upper confidence bounds for $\deltaB$ were obtained using order statistics sampling (specific orders and sample complexity specified in \Cref{app:training}).}
\label{fig:privacy}
\end{figure*}

\vspace{-2mm}
\section{DISCUSSION}\label{sec:discussion}
\vspace{-2mm}

We introduce the \BnB sampler for $\DPSGD$, and showed that it enjoys similar model utility as $\DPSGD$ with shuffling, and enjoys privacy amplification that is similar to Poisson subsampling in practical regimes. In order to do so efficiently, we developed the techniques of importance sampling and order statistics sampling. While in our paper we primarily considered a single epoch of training, our approaches also extend to multiple epochs as discussed in \Cref{subapp:multiple-epochs}.

Our work leaves several directions open for future investigation. The main open problem is to obtain a tight provable privacy accounting for $\ABLQB$, unlike the high probability bounds that we establish or to establish through some approximation that it is no worse than $\ABLQP$ in relevant regimes. An efficient method for tight accounting will also be useful to perform ``inverse'' accounting, namely to find $\sigma$ for a desired choice of $(\eps, \delta)$. Another alternative is to obtain R\'enyi DP guarantees~\citep{mironov17renyi}.

Subsequent to our work, \cite{feldmanshenfeld25} showed that the privacy guarantee of the balls-and-bins sampling is not worse than that of Poisson subsampling in a certain asymptotic sense. Furthermore, they also propose non-asymptotic bounds via decomposing the privacy loss distribution of Poisson subsampling, as well as via R\'enyi DP. These approaches are a promising avenue for overcoming the limitations of Monte Carlo sampling in this work.

\vspace{-2mm}
\subsubsection*{Acknowledgments}
\vspace{-2mm}
We thank anonymous reviewers for their thoughtful comments and suggestions that have improved the quality of this paper. We also thank the authors of \cite{choquettechoo24near} for helpful discussions regarding their concurrent work.

\newpage
\bibliographystyle{plainnat}
\bibliography{main.bbl}

\begin{thebibliography}{42}
\providecommand{\natexlab}[1]{#1}
\providecommand{\url}[1]{\texttt{#1}}
\expandafter\ifx\csname urlstyle\endcsname\relax
  \providecommand{\doi}[1]{doi: #1}\else
  \providecommand{\doi}{doi: \begingroup \urlstyle{rm}\Url}\fi

\bibitem[Abadi et~al.(2016)Abadi, Chu, Goodfellow, McMahan, Mironov, Talwar,
  and Zhang]{abadi16deep}
Mart{\'{\i}}n Abadi, Andy Chu, Ian~J. Goodfellow, H.~Brendan McMahan, Ilya
  Mironov, Kunal Talwar, and Li~Zhang.
\newblock Deep learning with differential privacy.
\newblock In \emph{CCS}, pages 308--318, 2016.

\bibitem[Anil et~al.(2022)Anil, Ghazi, Gupta, Kumar, and
  Manurangsi]{anil22dpbert}
Rohan Anil, Badih Ghazi, Vineet Gupta, Ravi Kumar, and Pasin Manurangsi.
\newblock Large-scale differentially private {BERT}.
\newblock In \emph{EMNLP (Findings)}, 2022.

\bibitem[Annamalai(2024)]{annamalai2024loss}
Meenatchi Sundaram Muthu~Selva Annamalai.
\newblock It's our loss: No privacy amplification for hidden state {DP-SGD}
  with non-convex loss.
\newblock In \emph{AISec}, pages 24--30, 2024.

\bibitem[Annamalai et~al.(2024)Annamalai, Balle, Cristofaro, and
  Hayes]{annamalai24shuffle}
Meenatchi Sundaram Muthu~Selva Annamalai, Borja Balle, Emiliano~De Cristofaro,
  and Jamie Hayes.
\newblock To shuffle or not to shuffle: Auditing {DP-SGD} with shuffling.
\newblock \emph{CoRR}, abs/2411.10614, 2024.

\bibitem[Balle and Wang(2018)]{balle18improving}
Borja Balle and Yu{-}Xiang Wang.
\newblock Improving the gaussian mechanism for differential privacy: Analytical
  calibration and optimal denoising.
\newblock In \emph{ICML}, 2018.

\bibitem[Balle et~al.(2020)Balle, Kairouz, McMahan, Thakkar, and
  Guha~Thakurta]{balle20checkin}
Borja Balle, Peter Kairouz, Brendan McMahan, Om~Thakkar, and Abhradeep
  Guha~Thakurta.
\newblock Privacy amplification via random check-ins.
\newblock In \emph{NeurIPS}, pages 4623--4634, 2020.

\bibitem[Balle et~al.(2022)Balle, Berrada, De, Ghalebikesabi, Hayes, Pappu,
  Smith, and Stanforth]{jax-privacy2022github}
Borja Balle, Leonard Berrada, Soham De, Sahra Ghalebikesabi, Jamie Hayes,
  Aneesh Pappu, Samuel~L Smith, and Robert Stanforth.
\newblock {JAX}-{P}rivacy: Algorithms for privacy-preserving machine learning
  in {JAX}, 2022.
\newblock URL \url{http://github.com/google-deepmind/jax_privacy}.

\bibitem[Beltran et~al.(2024)Beltran, Tobaben, J\"alk\"o, Loppi, and
  Honkela]{beltran24towards}
Sebastian~Rodriguez Beltran, Marlon Tobaben, Joonas J\"alk\"o, Niki Loppi, and
  Antti Honkela.
\newblock Towards efficient and scalable training of differentially private
  deep learning.
\newblock In \emph{NeurIPS}, 2024.

\bibitem[Ben~Slimane(2001)]{slimane01bounds}
S.~Ben~Slimane.
\newblock Bounds on the distribution of a sum of independent lognormal random
  variables.
\newblock \emph{IEEE Trans. Comm.}, 49\penalty0 (6):\penalty0 975--978, 2001.

\bibitem[Bradbury et~al.(2018)Bradbury, Frostig, Hawkins, Johnson, Leary,
  Maclaurin, Necula, Paszke, Vander{P}las, Wanderman-{M}ilne, and
  Zhang]{jax2018github}
James Bradbury, Roy Frostig, Peter Hawkins, Matthew~James Johnson, Chris Leary,
  Dougal Maclaurin, George Necula, Adam Paszke, Jake Vander{P}las, Skye
  Wanderman-{M}ilne, and Qiao Zhang.
\newblock {JAX}: composable transformations of {P}ython+{N}um{P}y programs,
  2018.
\newblock URL \url{http://github.com/jax-ml/jax}.

\bibitem[Choquette{-}Choo et~al.(2025)Choquette{-}Choo, Ganesh, Haque, Steinke,
  and Thakurta]{choquettechoo24near}
Christopher~A. Choquette{-}Choo, Arun Ganesh, Saminul Haque, Thomas Steinke,
  and Abhradeep Thakurta.
\newblock Near exact privacy amplification for matrix mechanisms.
\newblock In \emph{ICLR}, 2025.

\bibitem[Chua et~al.(2024{\natexlab{a}})Chua, Ghazi, Kamath, Kumar, Manurangsi,
  Sinha, and Zhang]{chua24private}
Lynn Chua, Badih Ghazi, Pritish Kamath, Ravi Kumar, Pasin Manurangsi, Amer
  Sinha, and Chiyuan Zhang.
\newblock How private are {DP-SGD} implementations?
\newblock In \emph{ICML}, 2024{\natexlab{a}}.

\bibitem[Chua et~al.(2024{\natexlab{b}})Chua, Ghazi, Kamath, Kumar, Manurangsi,
  Sinha, and Zhang]{chua24scalable}
Lynn Chua, Badih Ghazi, Pritish Kamath, Ravi Kumar, Pasin Manurangsi, Amer
  Sinha, and Chiyuan Zhang.
\newblock Scalable {DP-SGD}: Shuffling vs. {P}oisson subsampling.
\newblock In \emph{NeurIPS}, 2024{\natexlab{b}}.

\bibitem[David and Nagaraja(2004)]{david04order}
Herbert~A David and Haikady~N Nagaraja.
\newblock \emph{Order Statistics}.
\newblock John Wiley \& Sons, 2004.

\bibitem[De et~al.(2022)De, Berrada, Hayes, Smith, and Balle]{de22unlocking}
Soham De, Leonard Berrada, Jamie Hayes, Samuel~L. Smith, and Borja Balle.
\newblock Unlocking high-accuracy differentially private image classification
  through scale.
\newblock \emph{CoRR}, abs/2204.13650, 2022.

\bibitem[Dean and Ghemawat(2004)]{dean04mapreduce}
Jeffrey Dean and Sanjay Ghemawat.
\newblock Mapreduce: Simplified data processing on large clusters.
\newblock In \emph{OSDI}, pages 137--150, 2004.

\bibitem[Desfontaines(2021)]{desfontainesblog2021list}
Damien Desfontaines.
\newblock A list of real-world uses of differential privacy.
\newblock
  \url{https://desfontain.es/blog/real-world-differential-privacy.html}, Oct
  2021.
\newblock Ted is writing things (personal blog).

\bibitem[Dockhorn et~al.(2023)Dockhorn, Cao, Vahdat, and
  Kreis]{dockhorn2022differentially}
Tim Dockhorn, Tianshi Cao, Arash Vahdat, and Karsten Kreis.
\newblock Differentially private diffusion models.
\newblock \emph{TMLR}, 2023.

\bibitem[Dwork and Rothblum(2016)]{dwork16concentrated}
Cynthia Dwork and Guy~N. Rothblum.
\newblock Concentrated differential privacy.
\newblock \emph{CoRR}, abs/1603.01887, 2016.

\bibitem[Feldman and Shenfeld(2025)]{feldmanshenfeld25}
Vitaly Feldman and Moshe Shenfeld.
\newblock Privacy amplification by random allocation.
\newblock \emph{CoRR}, abs/2502.08202, 2025.

\bibitem[{Google's DP Library.}(2020)]{GoogleDP}
{Google's DP Library.}
\newblock D{P} {A}ccounting {L}ibrary, 2020.
\newblock URL
  \url{https://github.com/google/differential-privacy/tree/main/python/dp_accounting}.

\bibitem[He et~al.(2023)He, Li, Yu, Zhang, Kulkarni, Lee, Backurs, Yu, and
  Bian]{he2022exploring}
Jiyan He, Xuechen Li, Da~Yu, Huishuai Zhang, Janardhan Kulkarni, Yin~Tat Lee,
  Arturs Backurs, Nenghai Yu, and Jiang Bian.
\newblock Exploring the limits of differentially private deep learning with
  group-wise clipping.
\newblock In \emph{ICLR}, 2023.

\bibitem[Hoeffding(1963)]{hoeffding63probability}
Wassily Hoeffding.
\newblock Probability inequalities for sums of bounded random variables.
\newblock \emph{J. ASA}, 58\penalty0 (301):\penalty0 13--30, 1963.

\bibitem[Igamberdiev et~al.(2024)Igamberdiev, Vu, K{\"u}nnecke, Yu, Holmer, and
  Habernal]{igamberdiev2023dp}
Timour Igamberdiev, Doan Nam~Long Vu, Felix K{\"u}nnecke, Zhuo Yu, Jannik
  Holmer, and Ivan Habernal.
\newblock {DP-NMT}: Scalable differentially-private machine translation.
\newblock In \emph{EACL (Demonstrations)}, pages 94--105, 2024.

\bibitem[Jean-Baptiste~Tien(2014)]{tien14criteokaggle}
Olivier~Chapelle Jean-Baptiste~Tien, joycenv.
\newblock Display advertising challenge, 2014.
\newblock URL
  \url{https://kaggle.com/competitions/criteo-display-ad-challenge}.

\bibitem[Kairouz et~al.(2015)Kairouz, Oh, and Viswanath]{kairouz15composition}
Peter Kairouz, Sewoong Oh, and Pramod Viswanath.
\newblock The composition theorem for differential privacy.
\newblock In \emph{ICML}, pages 1376--1385, 2015.

\bibitem[Kairouz et~al.(2021)Kairouz, McMahan, Song, Thakkar, Thakurta, and
  Xu]{kairouz21practical}
Peter Kairouz, Brendan McMahan, Shuang Song, Om~Thakkar, Abhradeep Thakurta,
  and Zheng Xu.
\newblock Practical and private (deep) learning without sampling or shuffling.
\newblock In \emph{ICML}, pages 5213--5225, 2021.

\bibitem[Koskela et~al.(2020)Koskela, J{\"a}lk{\"o}, and
  Honkela]{koskela2020computing}
Antti Koskela, Joonas J{\"a}lk{\"o}, and Antti Honkela.
\newblock Computing tight differential privacy guarantees using {FFT}.
\newblock In \emph{AISTATS}, pages 2560--2569, 2020.

\bibitem[Koskela et~al.(2023)Koskela, Heikkil{\"{a}}, and
  Honkela]{koskela23numerical}
Antti Koskela, Mikko~A. Heikkil{\"{a}}, and Antti Honkela.
\newblock Numerical accounting in the shuffle model of differential privacy.
\newblock \emph{TMLR}, 2023, 2023.

\bibitem[Lebeda et~al.(2024)Lebeda, Regehr, and Kamath]{lebeda2024avoiding}
Christian~Janos Lebeda, Matthew Regehr, and Gautam Kamath.
\newblock Avoiding pitfalls for privacy accounting of subsampled mechanisms
  under composition.
\newblock \emph{CoRR}, abs/2405.20769, 2024.

\bibitem[Marsaglia and Tsang(2000)]{marsaglia00simple}
George Marsaglia and Wai~Wan Tsang.
\newblock A simple method for generating gamma variables.
\newblock \emph{ACM Trans. Math. Softw.}, 26\penalty0 (3):\penalty0 363–372,
  2000.

\bibitem[McMahan et~al.(2022)McMahan, Rush, and Thakurta]{mcmahan22dpmf}
Brendan McMahan, Keith Rush, and Abhradeep~Guha Thakurta.
\newblock Private online prefix sums via optimal matrix factorizations.
\newblock \emph{CoRR}, abs/2202.08312, 2022.

\bibitem[Microsoft.(2021)]{MicrosoftDP}
Microsoft.
\newblock A fast algorithm to optimally compose privacy guarantees of
  differentially private ({DP}) mechanisms to arbitrary accuracy., 2021.
\newblock URL \url{https://github.com/microsoft/prv_accountant}.

\bibitem[Mironov(2017)]{mironov17renyi}
Ilya Mironov.
\newblock R{\'{e}}nyi differential privacy.
\newblock In \emph{CSF}, pages 263--275, 2017.

\bibitem[Ponomareva et~al.(2023)Ponomareva, Hazimeh, Kurakin, Xu, Denison,
  McMahan, Vassilvitskii, Chien, and Thakurta]{ponomareva23dpfy}
Natalia Ponomareva, Hussein Hazimeh, Alex Kurakin, Zheng Xu, Carson Denison,
  H.~Brendan McMahan, Sergei Vassilvitskii, Steve Chien, and Abhradeep~Guha
  Thakurta.
\newblock How to dp-fy {ML:} {A} practical guide to machine learning with
  differential privacy.
\newblock \emph{J. AIR}, 77:\penalty0 1113--1201, 2023.

\bibitem[Prediger and Koskela(2020)]{DPBayes}
Lukas Prediger and Antti Koskela.
\newblock Code for computing tight guarantees for differential privacy., 2020.
\newblock URL \url{https://github.com/DPBayes/PLD-Accountant}.

\bibitem[Tallis and Yadav(2018)]{tallis2018reacting}
Marcelo Tallis and Pranjul Yadav.
\newblock Reacting to variations in product demand: An application for
  conversion rate {(CR)} prediction in sponsored search.
\newblock \emph{CoRR}, abs/1806.08211, 2018.

\bibitem[Tang et~al.(2024)Tang, Panda, Nasr, Mahloujifar, and
  Mittal]{tang2024private}
Xinyu Tang, Ashwinee Panda, Milad Nasr, Saeed Mahloujifar, and Prateek Mittal.
\newblock Private fine-tuning of large language models with zeroth-order
  optimization.
\newblock \emph{CoRR}, abs/2401.04343, 2024.

\bibitem[Tensorflow Privacy()]{tf_privacy}
Tensorflow Privacy, 2024.
\newblock URL
  \url{https://www.tensorflow.org/responsible_ai/privacy/api_docs/python/tf_privacy}.

\bibitem[Wang et~al.(2023)Wang, Mahloujifar, Wu, Jia, and
  Mittal]{wang23randomized}
Jiachen~(Tianhao) Wang, Saeed Mahloujifar, Tong Wu, Ruoxi Jia, and Prateek
  Mittal.
\newblock A randomized approach to tight privacy accounting.
\newblock In \emph{NeurIPS}, pages 33856--33893, 2023.

\bibitem[Yousefpour et~al.(2021)Yousefpour, Shilov, Sablayrolles, Testuggine,
  Prasad, Malek, Nguyen, Ghosh, Bharadwaj, Zhao, Cormode, and
  Mironov]{yousefpour21opacus}
Ashkan Yousefpour, Igor Shilov, Alexandre Sablayrolles, Davide Testuggine,
  Karthik Prasad, Mani Malek, John Nguyen, Sayan Ghosh, Akash Bharadwaj,
  Jessica Zhao, Graham Cormode, and Ilya Mironov.
\newblock Opacus: User-friendly differential privacy library in {PyTorch}.
\newblock \emph{CoRR}, abs/2109.12298, 2021.

\bibitem[Zhu et~al.(2022)Zhu, Dong, and Wang]{zhu22optimal}
Yuqing Zhu, Jinshuo Dong, and Yu{-}Xiang Wang.
\newblock Optimal accounting of differential privacy via characteristic
  function.
\newblock In \emph{AISTATS}, pages 4782--4817, 2022.

\end{thebibliography}

\newpage
\section*{Checklist}

\newcommand{\YES}{{\bf Yes}}
\newcommand{\NO}{{\bf No}}
\newcommand{\NA}{{\bf Not Applicable}}

\begin{enumerate}
\item For all models and algorithms presented, check if you include:
\begin{enumerate}
\item A clear description of the mathematical setting, assumptions, algorithm, and/or model. \YES
\item An analysis of the properties and complexity (time, space, sample size) of any algorithm. \YES
\item (Optional) Anonymized source code, with specification of all dependencies, including external libraries. \YES : An implementation of the privacy accounting algorithms introduced in this work are available at \href{https://github.com/google-research/google-research/tree/master/dpsgd_batch_sampler_accounting}{\small github.com/google-research/google-research/ tree/master/dpsgd\_batch\_sampler\_accounting}.
\end{enumerate}

\item For any theoretical claim, check if you include:
\begin{enumerate}
\item Statements of the full set of assumptions of all theoretical results. \YES
\item Complete proofs of all theoretical results. \YES
\item Clear explanations of any assumptions. \YES
\end{enumerate}

\item For all figures and tables that present empirical results, check if you include:
\begin{enumerate}
\item The code, data, and instructions needed to reproduce the main experimental results (either in the supplemental material or as a URL). \YES : An implementation of the privacy accounting algorithms introduced in this work are available at \href{https://github.com/google-research/google-research/tree/master/dpsgd_batch_sampler_accounting}{\small github.com/google-research/google-research/ tree/master/dpsgd\_batch\_sampler\_accounting}. We do not include the code needed to evaluate our experiments with DP-SGD on neural network architectures, as they are orthogonal to the main contributions of this work.
\item All the training details (e.g., data splits, hyperparameters, how they were chosen). \YES : Experimental details are presented in \Cref{app:training}.
 \item A clear definition of the specific measure or statistics and error bars (e.g., with respect to the random seed after running experiments multiple times). \YES
 \item A description of the computing infrastructure used. (e.g., type of GPUs, internal cluster, or cloud provider). \YES: We include details in \Cref{app:training}.
\end{enumerate}

\item If you are using existing assets (e.g., code, data, models) or curating/releasing new assets, check if you include:
\begin{enumerate}
\item Citations of the creator If your work uses existing assets. \NA
\item The license information of the assets, if applicable. \NA
\item New assets either in the supplemental material or as a URL, if applicable. \NA
\item Information about consent from data providers/curators. \NA
\item Discussion of sensible content if applicable, e.g., personally identifiable information or offensive content. \NA
\end{enumerate}

\item If you used crowdsourcing or conducted research with human subjects, check if you include:
\begin{enumerate}
\item The full text of instructions given to participants and screenshots. \NA
\item Descriptions of potential participant risks, with links to Institutional Review Board (IRB) approvals if applicable. \NA
\item The estimated hourly wage paid to participants and the total amount spent on participant compensation. \NA
\end{enumerate}
\end{enumerate}

\newpage

\onecolumn
\thispagestyle{plain}\thispagestyle{empty}
\aistatstitle{Balls-and-Bins Sampling for DP-SGD: Supplementary Material}

\appendix

\section{Batch Generators}\label{app:batch-gen}

We formally describe the Deterministic ($\cD$), Shuffle ($\cS$), and Poisson ($\cP$) batch generators considered in this work, as formalized in \cite{chua24private}. Let $n$ be the number of datapoints.

\begin{itemize}%
	\item Deterministic $\cD_{b, T}$, formalized in \Cref{alg:deterministic-batch}, generates $T$ batches each of size $b$ in the given sequential order of the dataset. This method requires that $n = b \cdot T$.
	\item Shuffle $\cS_{b,T}$, formalized in \Cref{alg:shuffle-batch}, is similar to $\cD_{b, T}$, but first applies a random permutation to the dataset. This method also requires $n = b \cdot T$.
	\item Poisson $\cP_{b,T}$, formalized in \Cref{alg:poisson-batch}, samples each batch independently by including each example with probability $\frac bn$. This method works for any $n$ and results in an expected batch size of $b$.
\end{itemize}

{
\newcommand{\algtextfont}{\fontsize{8.5pt}{10pt}\selectfont}
\newcommand{\highlight}[1]{\colorbox{blue!10}{\textcolor{black}{#1}}}

\begin{figure}[H]
\begin{minipage}[H]{0.48\textwidth}
\begin{algorithm}[H]
\caption{$\cD_{b,T}$: Deterministic Batch Generator}
\label{alg:deterministic-batch}
\begin{algorithmic}
\PARAMETERS Batch size $b$, number of batches $T$.
\REQUIRE Number of datapoints $n = b \cdot T$.
\ENSURE Seq. of disjoint batches $S_1, \ldots, S_T \subseteq [n]$.
\FOR{$t = 0, \ldots, T-1$}
    \STATE $S_{t+1} \gets \{tb + 1, \ldots, tb + b\}$
\ENDFOR
\RETURN $S_1, \ldots, S_T$
\end{algorithmic}
\end{algorithm}
\end{minipage}
\hspace*{3pt}
{\vrule width 0.5pt}
\hspace*{3pt}
\begin{minipage}[H]{0.48\textwidth}
\begin{algorithm}[H]
\caption{$\cS_{b,T}$: Shuffle Batch Generator}
\label{alg:shuffle-batch}
\begin{algorithmic}
\PARAMETERS Batch size $b$, number of batches $T$.
\REQUIRE Number of datapoints $n = b \cdot T$.
\ENSURE Seq. of disjoint batches $S_1, \ldots, S_T \subseteq [n]$.
\STATE $\pi \gets$ random permutation over $[n]$
\FOR{$t = 0, \ldots, T-1$}
    \STATE $S_{t+1} \gets \{\pi(tb + 1), \ldots, \pi(tb + b)\}$
\ENDFOR
\RETURN $S_1, \ldots, S_T$
\end{algorithmic}
\end{algorithm}
\end{minipage}
\vspace*{-1mm}
\end{figure}

\begin{algorithm}[H]
\caption{$\cP_{b,T}$: Poisson Batch Generator}
\label{alg:poisson-batch}
\begin{algorithmic}
\PARAMETERS Expected batch size $b$, num. of batches $T$.
\REQUIRE Number of datapoints $n$.
\ENSURE Seq. of batches $S_1, \ldots, S_T \subseteq [n]$.
\FOR{$t = 1, \ldots, T$}
    \STATE $S_{t} \gets \emptyset$
    \FOR{$i = 1, \ldots, n$}
        \STATE $S_{t} \gets \begin{cases}
            S_{t} \cup \{i\} & \text{ with probability } b/n\\
            S_{t} & \text{ with probability } 1 - b/n\\
        \end{cases}$
    \ENDFOR
\ENDFOR
\RETURN $S_1, \ldots, S_T$
\end{algorithmic}
\end{algorithm}

}

We recall that since we are using the ``zeroing-out'' adjacency, $n$ is known, and not protected under DP.

\section{Importance and Order Statistics Sampling}\label{app:sampling}

We describe how to efficiently perform importance sampling as described in \Cref{alg:importance-mce} for the pair $(\PB, \QB)$ as well as the proof that \Cref{alg:order-sampling} samples from the joint distribution of order statistics. In order to do so, we use the connection between the Beta distribution and order statistics~\citep[see, e.g.,][]{david04order}.

First, we establish some notation that we use throughout this section. Let $\Unif[a, b]$ denote the uniform distribution over the interval $[a, b]$. For any distribution $P$ over $\R$, let $\CDF_P(x) := \Pr_{z \sim P}[z \le x]$ denote the cumulative density function, and let $\CDF_P^{-1}(\cdot)$ denote its inverse.\footnote{In cases where $\CDF_P(\cdot)$ is not a continuous function, the inverse is defined as $\CDF_P^{-1}(y) := \min_{x \in \R : \CDF_P(x) \ge y} x$; the minimum always exists since $\CDF_P$ is right continuous. However, since we only deal with distributions with continuous $\CDF$s, this detail is not going to be important.}
For any event (measurable set) $E$, let $P|_{E}$ denote the distribution of $P$ conditioned on event $E$. In this work, we only use distributions with probability measures that are continuous with respect to the Lebesgue measure. Even though the following techniques extend to the non-continuous distributions, we assume that distributions are continuous below.

\begin{definition}[Beta Distribution]\label{def:beta-dist}
The $\Beta(\alpha, \beta)$ distribution over $[0, 1]$ is defined by the density function
\[
f(x; \alpha, \beta) := \frac{\Gamma(\alpha+\beta)}{\Gamma(\alpha) \Gamma(\beta)} x^{\alpha-1} (1 - x)^{\beta-1}.
\]
\end{definition}

\begin{fact}[Order Statistics and Beta Distribution]\label{fact:order-beta}
The random variable $y^{(k)}$ that is the $k$th largest element among $x_1, \ldots, x_R \sim \Unif[0, 1]$ is distributed as $\Beta(R-k+1, k)$.
\end{fact}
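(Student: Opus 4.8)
The plan is to prove \Cref{fact:order-beta} by the standard CDF-based computation relating order statistics to binomial tail probabilities, and then recognize the resulting density as a Beta density via the identity $\Gamma(R+1)/(\Gamma(R-k+1)\Gamma(k)) = \binom{R-1}{k-1} R = \binom{R}{k} k$. First I would fix $y \in [0,1]$ and compute $\Pr[y^{(k)} \le y]$: since the $x_i$ are i.i.d.\ $\Unif[0,1]$, the event $\{y^{(k)} \le y\}$ — the $k$th \emph{largest} is at most $y$ — is exactly the event that at most $k-1$ of the $R$ samples exceed $y$, i.e.\ at least $R-k+1$ of them are $\le y$. Each sample is $\le y$ independently with probability $y$, so $\Pr[y^{(k)} \le y] = \sum_{j=R-k+1}^{R} \binom{R}{j} y^j (1-y)^{R-j}$.

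Next I would differentiate this CDF in $y$ to obtain the density of $y^{(k)}$. Differentiating the binomial tail is the classic telescoping computation: $\frac{d}{dy}\sum_{j=m}^{R}\binom{R}{j} y^j(1-y)^{R-j}$ collapses to the single term $R\binom{R-1}{m-1} y^{m-1}(1-y)^{R-m}$ (all other contributions cancel in pairs). With $m = R-k+1$ this gives density $R\binom{R-1}{R-k} y^{R-k}(1-y)^{k-1}$ on $[0,1]$. Rewriting the constant as $\frac{\Gamma(R+1)}{\Gamma(R-k+1)\Gamma(k)}$ and comparing with \Cref{def:beta-dist} with $\alpha = R-k+1$, $\beta = k$, this is precisely the $\Beta(R-k+1,k)$ density, completing the proof.

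Alternatively — and perhaps cleaner to write up — I would invoke the well-known Dirichlet/spacings fact that the $R$ \emph{ascending} order statistics $x_{(1)} \le \dots \le x_{(R)}$ of i.i.d.\ $\Unif[0,1]$ variables are distributed so that $x_{(j)} \sim \Beta(j, R-j+1)$; since $y^{(k)}$, the $k$th \emph{largest}, equals $x_{(R-k+1)}$, this immediately gives $y^{(k)} \sim \Beta(R-k+1, k)$. If I take this route I would still include a one-line justification of the ascending-order statement (integrating out the other coordinates of the uniform density on the simplex, or citing \cite{david04order}), so the argument is self-contained.

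I do not anticipate a genuine obstacle here — the statement is a textbook fact and both routes are routine. The only point requiring a modicum of care is bookkeeping the direction convention: the paper defines $y^{(k)}$ as the $k$th \emph{largest} (consistent with the sorting $y^{(1)} \ge y^{(2)} \ge \cdots$ used earlier in \eqref{eq:slimane-ub}--\eqref{eq:slimane-lb} and in \Cref{alg:order-sampling}), so I must be consistent that the parameters are $(R-k+1, k)$ and not $(k, R-k+1)$; a quick sanity check at $k=1$ (the maximum should be $\Beta(R,1)$, i.e.\ CDF $y^R$) and $k=R$ (the minimum should be $\Beta(1,R)$) confirms the orientation.
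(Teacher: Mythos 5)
Your proposal is correct. Note that the paper does not prove this statement at all: it is stated as a Fact and attributed to the standard order-statistics literature (the citation to David and Nagaraja), so there is no in-paper argument to compare against. Your first route --- writing $\Pr[y^{(k)} \le y]$ as the binomial tail $\sum_{j=R-k+1}^{R}\binom{R}{j} y^j(1-y)^{R-j}$, differentiating with the telescoping cancellation to get the density $R\binom{R-1}{k-1} y^{R-k}(1-y)^{k-1}$, and matching it to the $\Beta(R-k+1,k)$ density of \Cref{def:beta-dist} --- is exactly the classical derivation and is carried out correctly, including the constant identification $\frac{\Gamma(R+1)}{\Gamma(R-k+1)\Gamma(k)} = R\binom{R-1}{k-1}$. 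Your alternative route (the $k$th largest equals the $(R-k+1)$st smallest, whose law is $\Beta(R-k+1, k)$ by the standard ascending-order result) is equally fine and essentially matches how the paper treats the fact, namely as citable background; if you take that route, the one-line justification you promise is what keeps it from being circular. Your attention to the largest-versus-smallest convention, with the sanity checks at $k=1$ and $k=R$, addresses the only place where an error could realistically creep in, and it is consistent with the descending sort $y^{(1)} \ge \cdots \ge y^{(R)}$ used in \eqref{eq:slimane-ub}--\eqref{eq:slimane-lb} and in \Cref{alg:order-sampling}. No gaps.
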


An important primitive we use in our sampling methods is the ability to efficiently sample from $\Beta(\alpha, \beta)$ distributions~\citep[see, e.g.,][]{marsaglia00simple}, with efficient implementations available, for example in Python, using the class \texttt{scipy.stats.beta}.

\begin{fact}[Probability Integral Transform]\label{fact:prob-int-transform}
Let $P$ be any distribution over $\R$.
For $x \sim P$, $\CDF_P(x)$ is distributed as $\Unif[0, 1]$. Conversely, for $y \sim \Unif[0, 1]$, $\CDF_P^{-1}(y)$ is distributed as $P$.%

Furthermore it follows that, for any interval $[a, b] \in \R$, the distribution of $\CDF_P(x)$ for $x \sim P|_{[a, b]}$ is $\Unif[\CDF_P(a), \CDF_P(b)]$, and conversely for $y \sim \Unif[\CDF_P(a), \CDF_P(b)]$, $\CDF_P^{-1}(y)$ is distributed as $P|_{[a, b]}$.
\end{fact}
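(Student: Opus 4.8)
The plan is to verify each of the four assertions by directly computing cumulative distribution functions, using only two elementary facts about the generalized inverse $\CDF_P^{-1}(y) := \min\{x : \CDF_P(x) \ge y\}$: the equivalence $\CDF_P^{-1}(y) \le x \iff y \le \CDF_P(x)$ (valid for any distribution, since right-continuity guarantees the minimizing set is closed from the left), and the identity $\CDF_P(\CDF_P^{-1}(y)) = y$ for $y \in (0,1)$ (valid because $P$ is assumed absolutely continuous, hence $\CDF_P$ is continuous). Since $P$ has no atoms, I may also freely replace strict by non-strict inequalities in probabilities of the form $\Pr[x \le c]$.

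First I would prove the unconditional forward direction. For $x \sim P$ and $u \in (0,1)$, a short argument using monotonicity shows $\{x : \CDF_P(x) < u\} = \{x : x < \CDF_P^{-1}(u)\}$, so $\Pr[\CDF_P(x) < u] = \Pr[x < \CDF_P^{-1}(u)] = \CDF_P(\CDF_P^{-1}(u)) = u$; as the law of $\CDF_P(x)$ is supported on $[0,1]$, this identifies it as $\Unif[0,1]$. For the unconditional converse, for $y \sim \Unif[0,1]$ and $x \in \R$, the equivalence above gives $\Pr[\CDF_P^{-1}(y) \le x] = \Pr[y \le \CDF_P(x)] = \CDF_P(x)$, so $\CDF_P^{-1}(y) \sim P$.

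Next, the conditional statements follow by the same two computations restricted to the relevant range. For $x \sim P|_{[a,b]}$ and $c \in [\CDF_P(a), \CDF_P(b)]$, conditioning on $\{x \in [a,b]\}$ and reusing the first computation gives $\Pr[\CDF_P(x) \le c \mid x \in [a,b]] = (\CDF_P(\CDF_P^{-1}(c)) - \CDF_P(a)) / (\CDF_P(b) - \CDF_P(a)) = (c - \CDF_P(a))/(\CDF_P(b) - \CDF_P(a))$, which is precisely the CDF of $\Unif[\CDF_P(a), \CDF_P(b)]$. Symmetrically, for $y \sim \Unif[\CDF_P(a), \CDF_P(b)]$ one checks that $\CDF_P^{-1}(y) \in [a,b]$ almost surely and that for $x \in [a,b]$, $\Pr[\CDF_P^{-1}(y) \le x] = \Pr[y \le \CDF_P(x)] = (\CDF_P(x) - \CDF_P(a))/(\CDF_P(b) - \CDF_P(a))$, matching $\CDF_{P|_{[a,b]}}(x)$; hence $\CDF_P^{-1}(y) \sim P|_{[a,b]}$.

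This is the classical probability integral transform, so an equally acceptable route is simply to cite a standard reference on order statistics (such as the one the paper already uses). There is no genuine obstacle here: the only place requiring any care is the use of the inverse identities in the presence of flat segments of $\CDF_P$, and the absolute-continuity assumption stated at the start of the appendix removes even that concern.
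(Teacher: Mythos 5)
Your proposal is correct. Note, though, that the paper does not prove this statement at all: it is stated as a classical \emph{Fact} (the probability integral transform), relied upon without argument, with only a footnote clarifying the definition of the generalized inverse and the standing assumption that all distributions used have continuous CDFs. Your blind write-up therefore does more than the paper: the direct verification via the Galois-type equivalence $\CDF_P^{-1}(y) \le x \iff y \le \CDF_P(x)$, together with $\CDF_P(\CDF_P^{-1}(u)) = u$ under continuity, is exactly the standard route and handles all four assertions cleanly; your handling of flat segments of $\CDF_P$ (they carry zero $P$-mass, so strict versus non-strict inequalities do not matter) is the right observation, and your closing remark that citing a standard order-statistics reference would suffice matches how the paper actually treats the statement. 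The only point worth making explicit is the implicit assumption $\CDF_P(a) < \CDF_P(b)$ in the conditional parts, which is needed anyway for $P|_{[a,b]}$ to be well defined, so there is no gap of substance.
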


Thus, \Cref{fact:prob-int-transform} implies that for any distribution $P$ over $\R$ for which $\CDF_P$ and $\CDF_P^{-1}$ are efficiently computable, it is possible to sample from $P$ conditioned on the sample being in any specified range $[a, b]$.
Since $\CDF_{\Beta(\alpha, \beta)}$, $\CDF_{\cN(0, \sigma^2)}$ and their inverses are efficiently computable, for example in Python using the classes \texttt{scipy.stats.beta} and \texttt{scipy.stats.norm} respectively, we can sample from the conditional $\Beta(\alpha, \beta)$ and $\cN(0, \sigma^2)$ distributions.

\Cref{fact:order-beta} and \ref{fact:prob-int-transform} together suggest the following approach to sample a single order statistics for sampling $R$ i.i.d. samples from $P$ or $P|_{[a, b]}$.
\begin{proposition}\label{prop:sample-single-order}
Let $P$ be any distribution over $\R$.
The random variable $y^{(k)}$ that is the $k$th largest element among $x_1, \ldots, x_R \sim P$, $\CDF_P(y^{(k)})$ is distributed as $\Beta(R-k+1, k)$. Conversely, for $z \sim \Beta(R-k+1, k)$, $\CDF_P^{-1}(z)$ has the same distribution as $y^{(k)}$.%

Furthermore it follows that, for any interval $[a, b] \in \R$, the distribution of $\CDF_P(y^{(k)})$ for $y^{(k)}$ being the $k$th largest element among $x_1, \ldots, x_R \sim P|_{[a, b]}$ is distributed as $\CDF_P(a) + (\CDF_P(b) - \CDF_P(a)) \cdot z$ for $z \sim \Beta(R-k+1, k)$, and conversely for $z \sim \Beta(R-k+1, k)$, $\CDF_P^{-1}(\CDF_P(a) + (\CDF_P(b) - \CDF_P(a)) \cdot z)$ has the same distribution as $y^{(k)}$.
\end{proposition}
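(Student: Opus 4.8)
The plan is to derive both claims directly from \Cref{fact:order-beta} and \Cref{fact:prob-int-transform}, exploiting that $\CDF_P$ and $\CDF_P^{-1}$ are (weakly) monotone and hence commute with the operation of extracting the $k$th largest element of a finite list.

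First I would handle the unconditional statement. Given $x_1,\dots,x_R \sim P$ i.i.d., set $u_i := \CDF_P(x_i)$. By the first part of \Cref{fact:prob-int-transform} each $u_i \sim \Unif[0,1]$, and independence is preserved. Since $\CDF_P$ is nondecreasing and (as $P$ is assumed continuous) the $x_i$ are a.s. distinct, the rank order of the $x_i$'s is a.s. preserved under $\CDF_P$; thus $\CDF_P(y^{(k)})$ is a.s. equal to the $k$th largest among $u_1,\dots,u_R$, which by \Cref{fact:order-beta} is distributed as $\Beta(R-k+1,k)$. For the converse, note that for continuous $P$ we have $\CDF_P^{-1}(\CDF_P(x)) = x$ for $P$-almost every $x$, so $\CDF_P^{-1}(\CDF_P(y^{(k)})) = y^{(k)}$ a.s.; since $\CDF_P(y^{(k)}) \sim \Beta(R-k+1,k)$, it follows that $\CDF_P^{-1}(z)$ for $z \sim \Beta(R-k+1,k)$ has the same law as $y^{(k)}$.

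Next I would treat the conditional (``Furthermore'') statement, which reduces to the unconditional one after a linear reparametrization. Write $c := \CDF_P(a)$ and $d := \CDF_P(b)$. By the second part of \Cref{fact:prob-int-transform}, for $x_i \sim P|_{[a,b]}$ we have $\CDF_P(x_i) \sim \Unif[c, d]$, i.e.\ $\CDF_P(x_i) = c + (d-c) w_i$ with $w_i \sim \Unif[0,1]$ i.i.d. Both $\CDF_P$ (restricted to $[a,b]$) and the affine map $w \mapsto c + (d-c)w$ are increasing, so, arguing as above, the $k$th largest among the $x_i$'s maps to $c + (d-c)\cdot(\text{$k$th largest among the }w_i\text{'s})$; by \Cref{fact:order-beta} the latter order statistic is $\Beta(R-k+1,k)$, giving $\CDF_P(y^{(k)}) = c + (d-c) z$ for $z \sim \Beta(R-k+1,k)$. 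Applying $\CDF_P^{-1}$ to both sides and using $\CDF_P^{-1}(\CDF_P(x)) = x$ on the support of $P|_{[a,b]}$ yields the stated converse.

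The one delicate point I would be most careful about is the claim that taking order statistics commutes with the monotone maps $\CDF_P$ and $\CDF_P^{-1}$: a merely nondecreasing map need not preserve \emph{strict} order when two arguments coincide. This is resolved by the blanket continuity assumption on $P$ stated just before the proposition, under which the $R$ i.i.d.\ samples are a.s.\ pairwise distinct and $\CDF_P$ is continuous and strictly increasing on the support of $P$, so that $\CDF_P$ is a.s.\ order-preserving on the sample and $\CDF_P^{-1}\circ\CDF_P$ is a.s.\ the identity there. Everything else is a direct substitution into \Cref{fact:order-beta} and \Cref{fact:prob-int-transform}.
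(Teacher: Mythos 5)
Your proof is correct and follows the same route the paper intends: it states \Cref{prop:sample-single-order} as a direct consequence of \Cref{fact:order-beta} and \Cref{fact:prob-int-transform}, which is exactly what you spell out, with the monotone-reparametrization and a.s.-distinctness details carefully justified under the standing continuity assumption. Nothing is missing.
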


\subsection{Efficient Importance Sampling}\label{subapp:importance}

In this section we describe how to efficiently estimate $\Deps(\QB ~\|~ \PB)$ and $\Deps(\PB ~\|~ \QB)$ using \Cref{alg:importance-mce}. We use $\Phi_\sigma(\cdot)$ to denote $\CDF_{\cN(0, \sigma^2)}$ for short.

\paragraph{\boldmath Estimating $\Deps(\QB ~\|~ \PB)$.} Recall that in this case, we wish to estimate $\Ex_{x \sim \QB |_{E_\eps}} \max\{0, 1 - e^{\eps - L_{\QB ~\|~ \PB}(x)}\}$ where $\QB = \cN(0, \sigma^2 I)$ and $E_\eps := \{x :\in \R^T : \max_{t \in [T]} x_t \le C_{\eps}\}$ for $C_\eps := \frac12 - \eps \sigma^2$. In order to sample from $\QB|_{E_\eps}$, we observe that this is equivalent to sampling $T$ coordinates i.i.d. from $\cN(0, \sigma^2)|_{\{x \,:\, x \le C_{\eps}\}}$. This can be done using \Cref{fact:prob-int-transform}, by sampling $y_t \sim \Unif[0, \Phi_\sigma(C_\eps)]$ and returning $x_t = \Phi_\sigma^{-1}(y_t)$ for each $t \in [T]$.

\paragraph{\boldmath Estimating $\Deps(\PB ~\|~ \QB)$.} Recall that in this case, we wish to estimate $\Ex_{x \sim P_1 |_{E_\eps}} \max\{0, 1 - e^{\eps - L_{\PB ~\|~ \QB}(x)}\}$ where $P_1 = \cN(e_1, \sigma^2 I)$ and $E_\eps := \{ x : \max\{ x_1 - 1, \max_{t > 1} x_t\} \ge C_\eps \}$ for
$C_{\eps} ~=~ \frac{1}{2} + \sigma^2 \cdot \prn{\eps - \log\prn{1 + \frac{e^{1/\sigma^2} - 1}{T}}}$.
The choice of $E_{\eps}$ is such that for $x \sim P_1 |_{E_\eps}$, the distribution of $x - e_1$ is the same as $\cN(0, \sigma^2 I_T)|_{\{x\,:\,\max_t x_t \ge C_\eps\}}$.
In \Cref{alg:max-of-samples-from-P}, we provide a generic algorithm that for any distribution $P$ over $\R$, samples from the distribution $P^{\otimes T}|_{\{x\, :\, \max_t x_t \ge C\}}$, i.e.,, samples from $T$ i.i.d. samples from $P$ conditioned on the maximum value being at least $C$.
Thus, we can sample from $P_1|_{E_\eps}$ by sampling $x' \sim \cN(0, \sigma^2 I_T)|_{\{x' \,:\,\max_t x'_t \ge C_{\eps}\}}$ using \Cref{alg:max-of-samples-from-P}, and returning $x = x' + e_1$.

\begin{algorithm}[t]
\caption{Sampling from $P^{\otimes T}$, conditioned on the maximum value being at least $C$}
\label{alg:max-of-samples-from-P}
\begin{algorithmic}
\REQUIRE Distribution $P$ over $\R$, lower bound $C \in \R$ on the maximum value.
\ENSURE Sample $x \sim P|_{\max_t x_t \ge C}$
\STATE $y_* \sim \Beta(T, 1)|_{[\CDF_P(C), 1]}$ (using \Cref{fact:prob-int-transform})
\STATE $t_* \sim$ uniformly random coordinate in $[T]$
\FOR{$t \in \{1, \ldots, T\}$}
    \IF{$t = t_*$}
        \STATE $z_t \gets y_*$
    \ELSE
        \STATE $z_t \sim \Unif[0, y_*]$
    \ENDIF
\ENDFOR
\RETURN $(\CDF_P^{-1}(z_t) : t \in [T])$
\end{algorithmic}
\end{algorithm}

\paragraph{Numerical Evaluation.}
To demonstrate the usefulness of our importance sampling method, in \Cref{fig:importance-advantage}, we plot the upper confidence bound on $\deltaB(\eps)$ as obtained via \Cref{alg:mce} (i.e., without importance sampling) and via \Cref{alg:importance-mce} (i.e., with importance sampling) along the lower bound obtained via \eqref{eq:bnb-lb}. The upper confidence bounds are obtained for error probability $\beta = 10^{-3}$. For a similar running time, we see that \Cref{alg:importance-mce} is able to get significantly tighter upper confidence bounds in each setting. This is made possible because the importance sampling is able to ``zoom in'' into events of tiny probability. For example, in the left part of \Cref{fig:importance-advantage} for $T = 5000$ and $\sigma = 0.4$, at $\eps = 12$, the importance sampler using $m = 200,000$ samples is considering an event $E_\eps$ such that $\PB(E_{\eps}) \approx 3.75 \cdot 10^{-3}$, and on the right for $T = 10\,000$ and $\sigma = 0.35$, at $\eps = 12$, the importance sampler using $m = 100,000$ samples is considering an event $E_\eps$ such that $\PB(E_\eps) \approx 1.66 \cdot 10^{-4}$.
Recall that the reduction in sample complexity due to our use of importance sampling is by a factor of $1 / \PB(E_\eps)$.

\def\figheight{0.27}
\begin{figure*}
\centering
\begin{tabular}{cc}
\includegraphics[height=\figheight\linewidth]{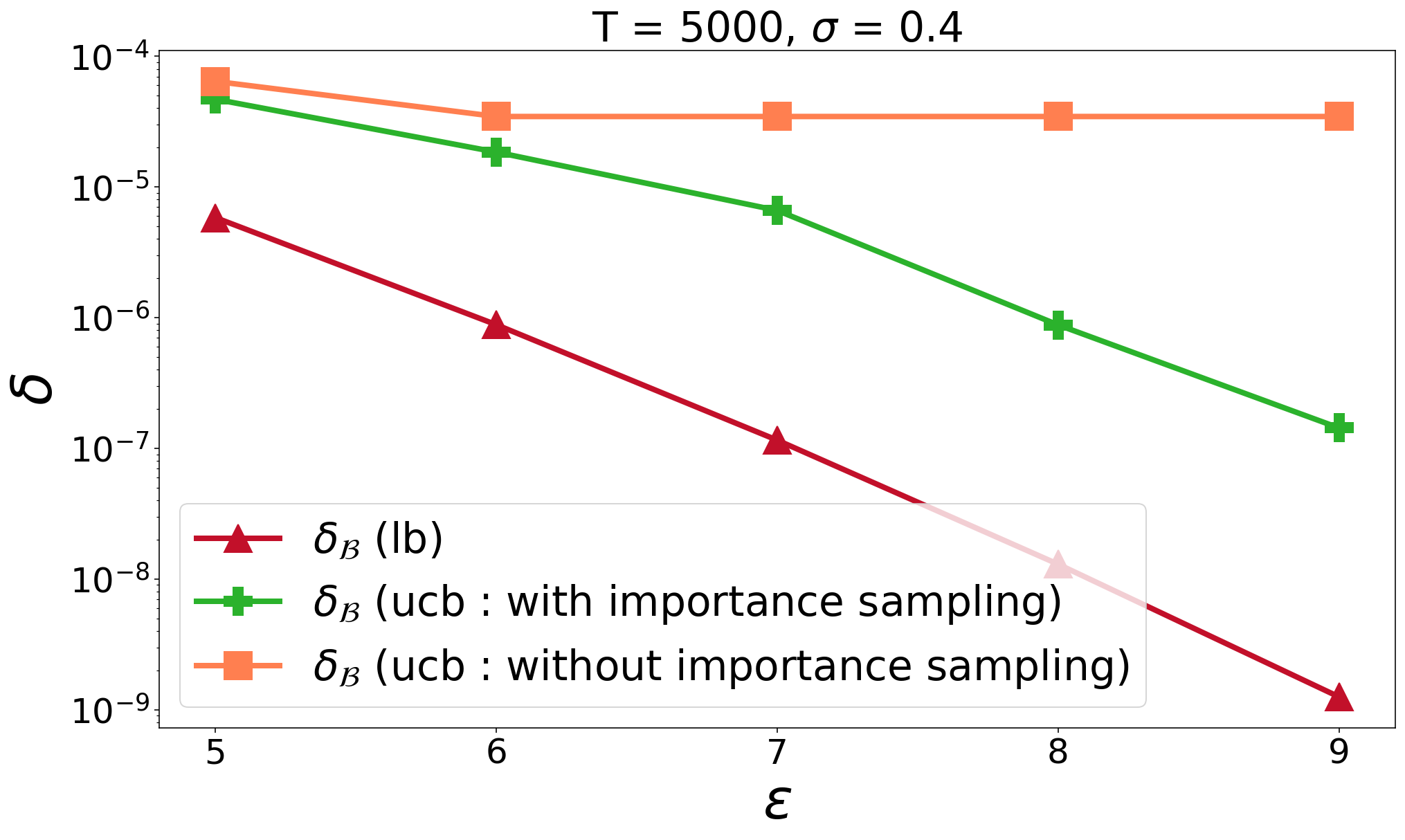} & 
\includegraphics[height=\figheight\linewidth]{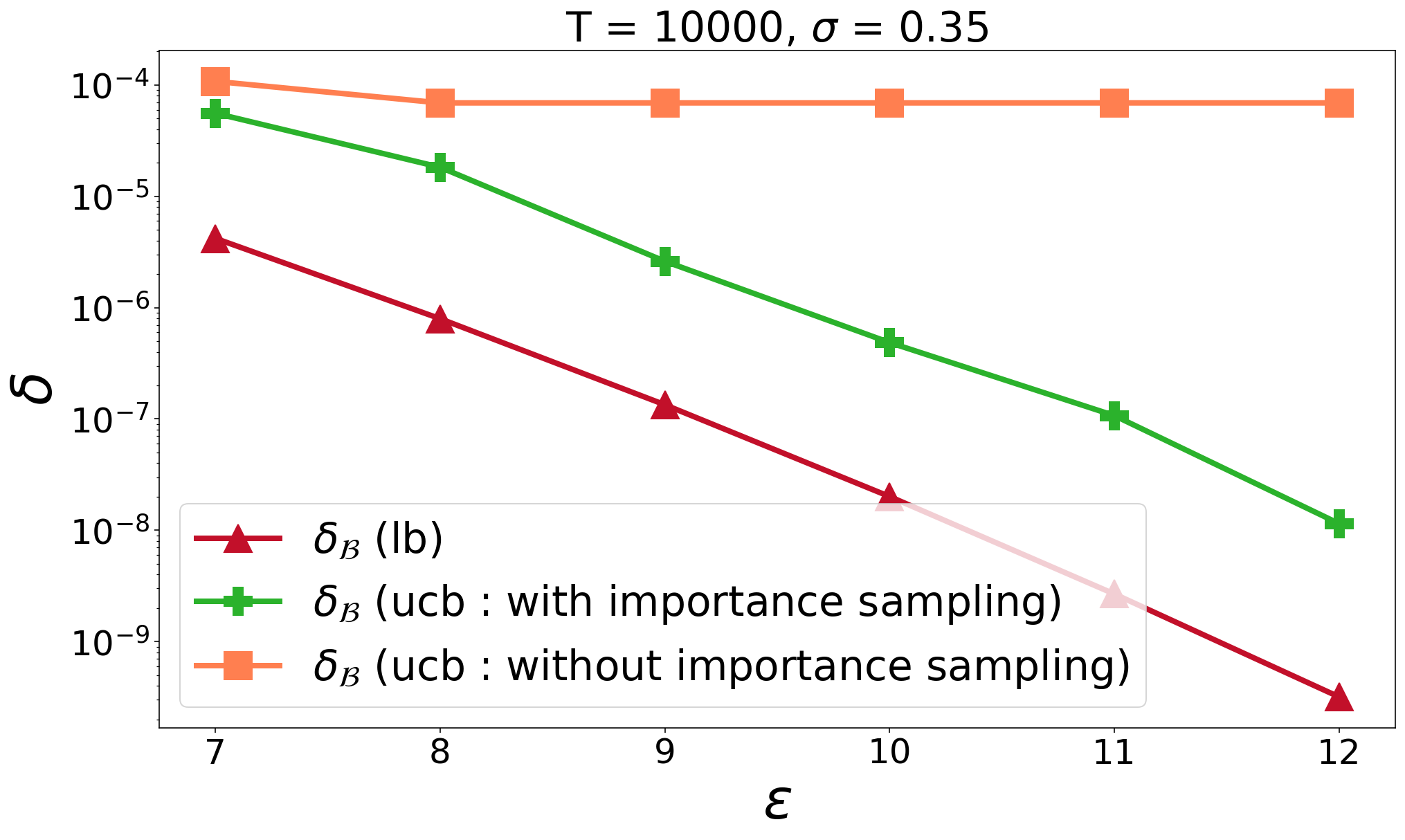}
\end{tabular}
\caption{Upper confidence bounds on $\deltaB(\eps)$ against various values of $\eps$ for two settings of $T$ and $\sigma$, with and without importance sampling. Additionally, lower bounds on $\deltaB(\eps)$ are included.}
\label{fig:importance-advantage}
\end{figure*}

\subsection{Order Statistics Sampling}\label{subapp:order-stats}

We show that \Cref{alg:order-sampling} indeed samples from the joint distribution of order statistics of $P$.
\begin{theorem}\label{thm:order-sampling}
For any distribution $P$ over $\R$, and number of random variables $R$ and order statistic indices $k_1, \ldots, k_r$, the values $(y^{(k_1)}, \ldots, y^{(k_r)})$ returned by \Cref{alg:order-sampling} are distributed as the $k_1, \ldots, k_r$ largest elements among $x_1, \ldots, x_R \sim P$.
\end{theorem}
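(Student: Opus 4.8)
The plan is to first reduce the statement to the special case $P=\Unif[0,1]$ via the probability integral transform, and then to settle the uniform case by induction on the number $r$ of requested order statistics, using the classical description of the conditional law of the order statistics given one of them.

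For the reduction, I would take $x_1,\dots,x_R\sim P$ i.i.d., set $u_\ell:=\CDF_P(x_\ell)$ so that $u_1,\dots,u_R$ are i.i.d.\ $\Unif[0,1]$ by \Cref{fact:prob-int-transform}, and note that $x_\ell=\CDF_P^{-1}(u_\ell)$ almost surely (using continuity of $P$). Since $\CDF_P^{-1}$ is non-decreasing, it carries the descending order statistics $u^{(1)}\ge\cdots\ge u^{(R)}$ of the $u_\ell$ to those of the $x_\ell$, i.e.\ $x^{(k)}=\CDF_P^{-1}(u^{(k)})$ for all $k$. Meanwhile \Cref{alg:order-sampling} outputs exactly $\bigl(\CDF_P^{-1}(\textstyle\prod_{j=1}^{i}z_j)\bigr)_{i=1}^r$ with $z_i\sim\Beta(R-k_i+1,k_i-k_{i-1})$ independent. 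Hence it suffices to prove that in the uniform case the vector $\bigl(\prod_{j=1}^{i}z_j\bigr)_{i=1}^r$ has the same joint law as $(u^{(k_1)},\dots,u^{(k_r)})$; applying the fixed coordinatewise map $\CDF_P^{-1}$ to both sides then yields the claim for general (continuous) $P$.

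For the uniform case I would induct on $r$. The base case $r=1$ is exactly \Cref{fact:order-beta}: $u^{(k_1)}\sim\Beta(R-k_1+1,k_1)=\Beta(R-k_1+1,k_1-k_0)$, which is the law of $z_1$. For the inductive step I would condition on $u^{(k_1)}=a$ and invoke the standard fact that, for $R$ i.i.d.\ $\Unif[0,1]$ variables, conditionally on $u^{(k)}=a$ the smaller order statistics $(u^{(k+1)},\dots,u^{(R)})$ are distributed as the descending order statistics of $R-k$ i.i.d.\ $\Unif[0,a]$ variables---immediate from the joint density $R!\,\mathbf 1[1\ge u^{(1)}\ge\cdots\ge u^{(R)}\ge 0]$ by integrating out $u^{(1)},\dots,u^{(k-1)}$ (see, e.g., \cite{david04order}). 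Rescaling by $a$, the tuple $(u^{(k_2)}/a,\dots,u^{(k_r)}/a)$ is then distributed as the $(k_2-k_1,\dots,k_r-k_1)$-th largest among $R-k_1$ i.i.d.\ $\Unif[0,1]$ variables, a law not depending on $a$ and hence independent of $u^{(k_1)}$. Applying the induction hypothesis to these $r-1$ order statistics with total count $R-k_1$ and indices shifted by $k_1$, the Beta parameters collapse to $\Beta(R-k_i+1,k_i-k_{i-1})$ for $i=2,\dots,r$, giving $(u^{(k_2)}/a,\dots,u^{(k_r)}/a)\stackrel{d}{=}\bigl(\prod_{j=2}^{i}z_j\bigr)_{i=2}^r$ with $z_2,\dots,z_r$ independent. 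Combining with $u^{(k_1)}\stackrel{d}{=}z_1$ and the just-established independence gives $(u^{(k_1)},\dots,u^{(k_r)})\stackrel{d}{=}\bigl(\prod_{j=1}^{i}z_j\bigr)_{i=1}^r$, completing the induction.

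I expect the only delicate point to be the bookkeeping of the Beta parameters and the index shift $k_i\mapsto k_i-k_1$ in the inductive step; the conditional-law fact for uniform order statistics is entirely standard, and the reduction is routine. An alternative, which I would fall back on if the induction gets awkward, is to compute the joint density of $\bigl(\prod_{j=1}^{i}z_j\bigr)_{i=1}^r$ directly by a change of variables and match it against the classical closed-form joint density of $(u^{(k_1)},\dots,u^{(k_r)})$, but the inductive route seems cleaner to write.
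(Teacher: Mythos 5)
Your proposal is correct and takes essentially the same route as the paper's proof: an induction on $r$ that exploits the Markov property of order statistics together with the Beta law of a single order statistic (\Cref{fact:order-beta}, \Cref{prop:sample-single-order}), with all Beta parameters bookkept correctly. The only cosmetic differences are that you reduce to the uniform case up front and peel off the top index $k_1$ in the inductive step, whereas the paper works with general $P$ directly and instead appends the conditional law of $y^{(k_r)}$ given the first $r-1$ sampled order statistics.
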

\begin{proof}
We prove the statement via induction on $r$. When $r = 1$, we have from \Cref{fact:order-beta}, that $\CDF_P^{-1}(z_1)$ for $z_1 \sim \Beta(R-k_1+1, k_1)$ has the same distribution as the $k_1$th order statistic.

For $r > 1$, suppose we inductively assume that $(y^{(k_1)}, \ldots, y^{(k_{r-1})})$ are jointly distributed as the $(k_1, \ldots, k_{r-1})$ order statistics. Note that $\CDF_P(y^{(k_i)}) = \prod_{j=1}^i z_j$ for all $i$. The conditional distribution of $y^{(k_r)}$ given $(y^{(k_1)}, \ldots, y^{(k_{r-1})})$ is the same as the $(k_r - k_{r-1})$th order statistic among $R - k_r$ random variables drawn from $P_{(-\infty, y^{(k_{r-1})}]}$. Using \Cref{prop:sample-single-order}, we have that for $z_r \sim \Beta(R - k_{r-1} + 1, k_r - k_{r-1})$, $\CDF_P^{-1}(\CDF_P(y^{(k_{r-1})}) \cdot z_r))$ is distributed as per this conditional distribution. Since $\CDF_P(y^{(k_{r-1})}) = \prod_{j=1}^{r-1} z_j$ the induction argument is complete.
\end{proof}

\paragraph{Numerical evaluation.}
To demonstrate the usefulness of our order statistics sampling method, in \Cref{fig:order-stats-advantage}, we plot the upper confidence bound on $\deltaB(\eps)$ as obtained via \Cref{alg:mce} as is (i.e., without order statistics sampling) and with order statistics sampling \Cref{alg:order-sampling} (i.e., with an upper bound on the loss function) along the lower bound obtained via \eqref{eq:bnb-lb}. The sub-figures in \Cref{fig:order-stats-advantage} were generated as follows.
\begin{itemize}
\item The figure on the left used $\sigma = 0.32$ and number of steps $T = 100,000$. The estimates without order statistics used $m=10,000$ samples, whereas, the estimates with order statistics used $m=100,000$ samples, using the order statistics of $(1, 2, \ldots, 400, 410, \ldots, 1000, 1100, \ldots, 10\,000, 11\,000, \ldots, 50\,000)$ (a total of $590$ orders) were used. Despite using $10$ times more samples, the estimation with order statistics ran in $\sim 66$ seconds, which is $\approx 25\%$ of the time needed without order statistics sampling ($\sim 268$ seconds).
	
\item The figure on the right used $\sigma = 0.25$ and number of steps $T = 1,000,000$. The estimates without order statistics used $m=1000$ samples, whereas, the estimates with order statistics used $m=300,000$ samples, using the order statistics of $(1, 2, \ldots, 300, 310, \ldots, 1000, 1100, \ldots, 10\,000, 11\,000, \ldots, 100\,000, 110\,000, \ldots, 500\,000)$ (a total of $590$ orders) were used. Despite using $300$ times more samples, the estimation with order statistics ran in $\sim 203$ seconds, which is $\approx 82\%$ of the time needed without order statistics sampling ($\sim 245$ seconds).
\end{itemize}

Running times can vary significantly depending on the machine; these figures are offered as a rough guide only.  The running time scales linearly with sample size and the number of order statistics and thus, these times are indicative of performance with more samples or varied order statistics.

\def\figheight{0.27}
\begin{figure*}
\centering
\begin{tabular}{cc}
\includegraphics[height=\figheight\linewidth]{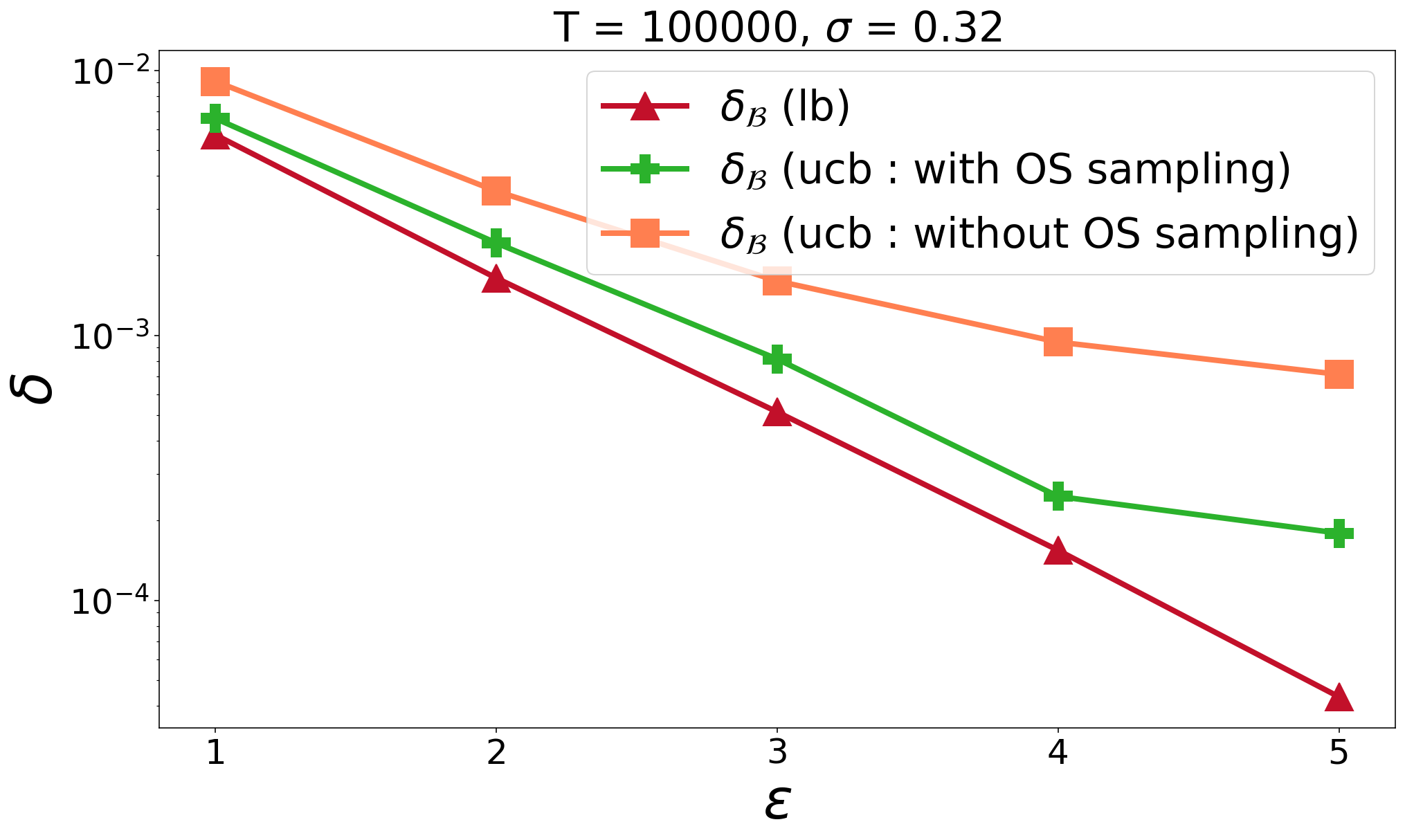} & 
\includegraphics[height=\figheight\linewidth]{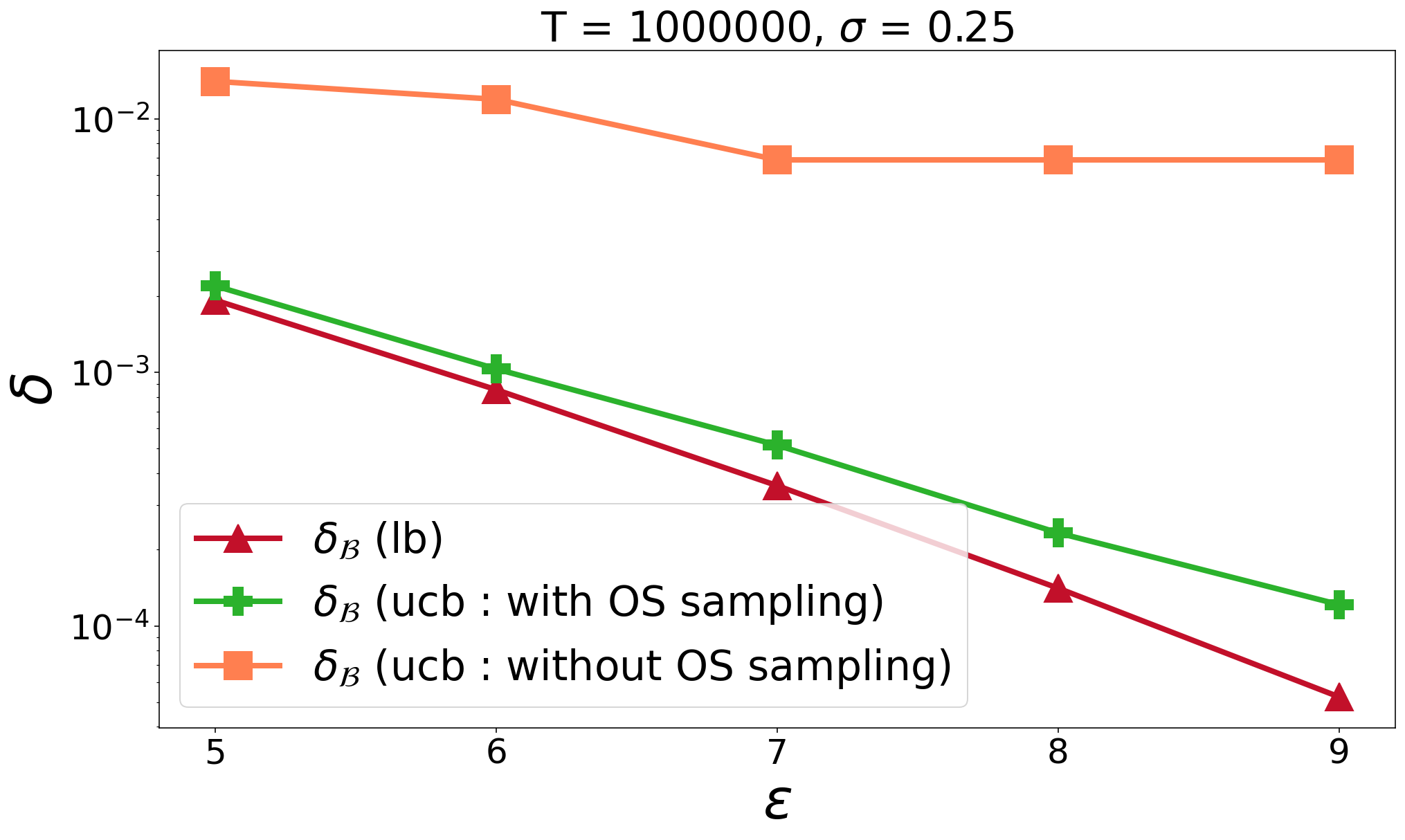}
\end{tabular}
\caption{Upper confidence bounds on $\deltaB(\eps)$ against various values of $\eps$ for two settings of $T$ and $\sigma$, with and without order statistics sampling for roughly the same running time complexity. Since order statistics sampling offers a significant speed up, it affords a larger sample complexity. Additionally, lower bounds on $\deltaB(\eps)$ are included.}
\label{fig:order-stats-advantage}
\end{figure*}

\subsection{Combining Importance and Order Statistics Sampling}\label{subapp:importance-order-combined}

We sketch how the techniques of importance sampling and order statistics sampling can be used together.

\paragraph{\boldmath Estimating $\Deps(\QB ~\|~ \PB)$.} In this case, we wish to estimate $\Ex_{x \sim \QB |_{E_\eps}} \max\{0, 1 - e^{\eps - L_{\QB ~\|~ \PB}(x)}\}$ where $\QB = \cN(0, \sigma^2 I)$ and $E_\eps := \{x :\in \R^T : \max_{t \in [T]} x_t \le C_{\eps}\}$ for $C_\eps := \frac12 - \eps \sigma^2$. We can sample the order statistics $y^{(k_1)}, \ldots, y^{(k_r)}$ for $x_1, \ldots, x_T \sim \QB|_{E_\eps}$ by using a small variant of \Cref{alg:order-sampling} wherein we set $y^{(k_i)} \gets \Phi_{\sigma}^{-1}(\Phi_\sigma(C_\eps) \cdot \prod_{j=1}^i z_j)$, where the term $\Phi_\sigma(C_\eps)$ essentially implements the conditioning on $x_1, \ldots, x_T \le C_\eps$, via \Cref{prop:sample-single-order}. And finally, we use \Cref{alg:importance-mce} where we replace $L_{\QB ~\|~ \PB}(x)$ by an upper bound in terms of the order statistics given as,
\[\textstyle
\log T + \frac{1}{2\sigma^2} - \log\prn{\sum_{i=1}^r (k_i - k_{i-1}) \cdot e^{y^{(k_i)} / \sigma^2}}.
\]

\paragraph{\boldmath Estimating $\Deps(\PB ~\|~ \QB)$.} In this case, we wish to estimate $\Ex_{x \sim P_1 |_{E_\eps}} \max\{0, 1 - e^{\eps - L_{\PB ~\|~ \QB}(x)}\}$ where $P_1 = \cN(e_1, \sigma^2 I)$ and $E_\eps := \{ x : \max\{ x_1 - 1, \max_{t > 1} x_t\} \ge C_\eps \}$ for
$C_{\eps} ~=~ \frac{1}{2} + \sigma^2 \cdot \prn{\eps - \log\prn{1 + \frac{e^{1/\sigma^2} - 1}{T}}}$. Recall that for $x \sim P_1 |_{E_\eps}$, the distribution of $x - e_1$ is the same as $\cN(0, \sigma^2 I_T)|_{\{x\,:\,\max_t x_t \ge C_\eps\}}$.
We follow the first two steps of \Cref{alg:max-of-samples-from-P} and sample $y_* \sim \Beta(T, 1)|_{[\Phi_\sigma(C_\eps), 1]}$, and sample $t_*$ uniformly at random in $[T]$.
There are two cases to handle:
\begin{itemize}
\item If $t_* = 1$, then we set $x_1 = y_* + 1$ and use \Cref{alg:order-sampling} to sample the order statistics $y^{(k_1)}, \ldots, y^{(k_r)}$ with $R = T-1$ and use the following upper bound on $L_{\PB ~\|~ \QB}$:
\[
\log\prn{e^{x_1/\sigma^2} + \sum_{i=1}^r (k_{i+1} - k_i) \cdot e^{y^{(k_i)}/\sigma^2}} - \log T - \frac{1}{2\sigma^2}.
\]
\item If $t_* \ne 1$, we can assume without loss of generality, that $t_* = 2$. In this case, we set $x_2 = y_*$. We sample $x_1 \sim \cN(1, \sigma^2)|_{(-\infty, y_*]}$ using \Cref{fact:prob-int-transform}, and use a small variant of \Cref{alg:order-sampling} to sample the order statistics $y^{(k_1)}, \ldots, y^{(k_r)}$ with $R = T-2$, wherein we set $y^{(k_i)} \gets \Phi_\sigma^{-1}(\Phi_\sigma(y_*) \cdot \prod_{j=1}^i z_j)$ and use the following upper bound on $L_{\PB ~\|~ \QB}(x)$:
\[
\log\prn{e^{x_1/\sigma^2} + e^{x_2/\sigma^2} + \sum_{i=1}^r (k_{i+1} - k_i) \cdot e^{y^{(k_i)}/\sigma^2}} - \log T - \frac{1}{2\sigma^2}.
\]
\end{itemize}

\subsection{\boldmath Privacy Accounting of \texorpdfstring{$\ABLQB$}{ABLQ\_B} for Multiple Epochs}\label{subapp:multiple-epochs}

While the focus in this work was on the case of a {\em single epoch} of training (i.e., with a single pass over the training dataset), the Monte Carlo sampling approach extends to the case of $k$ epochs since the $\deltaB(\eps) := \max\{\Deps(\PB^{\otimes k} ~\|~ \QB^{\otimes k}), \Deps(\QB^{\otimes k} ~\|~ \PB^{\otimes k})\}$. This can be estimated using \Cref{alg:mce-multiple-epochs}, which relies on the simple observation that
\[
L_{P^{\otimes k} ~\|~ Q^{\otimes k}}(x^{(1)}, \ldots, x^{(k)}) ~=~ \sum_{i=1}^k L_{P ~\|~ Q}(x^{(i)})\,.
\]
The order statistics sampling technique can be extended to this case by applying it independently to sample an upper bound on $L_{P ~\|~ Q}(x^{(i,j)})$ for each $j \in [k]$.  Importance sampling is not directly applicable though, and we leave it to future work to construct importance samplers for the multi-epoch case.

\begin{algorithm}[t]
\caption{Monte Carlo Estimation of $\Deps(P^{\otimes k} ~\|~ Q^{\otimes k})$.}
\label{alg:mce-multiple-epochs}
\begin{algorithmic}
\REQUIRE Distributions $P$ and $Q$; sample access to $P$, Number of epochs $k$, Sample size $m$, Error probability $\beta$.
\ENSURE An upper confidence bound on $\Deps(P^{\otimes k} ~\|~ Q^{\otimes k})$.
\STATE Sample $x^{(i,j)} \sim P$ for $i \in [m]$ and $j \in [k]$
\STATE $q \gets \frac1m \sum_{i=1}^m \max\{ 0, 1 - e^{\eps - \sum_{j=1}^k L_{P ~\|~ Q}(x^{(i,j)})}\}$
\STATE $p \gets$ smallest value in $[q, 1]$ such that $\KL(q ~\|~ p) \ge \log(1/\beta) / m$, or $1$ if no such value exists
\RETURN $p$
\end{algorithmic}
\end{algorithm}

\section{Training Details} \label{app:training}
We use a neural network with five layers as the model, with around 85M parameters for the Criteo pCTR dataset and 57M parameters for the Criteo Search Conversion Logs dataset. The first layer consists of feature transforms. Categorical features are mapped into dense feature vectors using an embedding layer, with embedding dimension of $48$ each. For the Criteo Search dataset, we treat all features as categorical features, whereas for the Criteo pCTR dataset, we apply a log transform for the integer features. We concatenate all the features together, and feed them into three fully connected layers with $598$ hidden units each and a ReLU activation function. The last layer is a fully connected layer that gives a scalar (\texttt{logit}) prediction.

We use the Adam optimizer with a base learning rate in $\{0.0001, 0.0005, 0.001, 0.005, 0.01\}$, which is scaled with a cosine decay. We use batch sizes that are powers of $2$ between $1024$ and $262\,144$, and we tune the norm bound $C\in\{1, 5, 10, 15, 30, 50, 100, 500, 1000\}$. We run the training using NVIDIA Tesla P100 GPUs, where each run takes up to an hour on a single GPU.

Since our implementation of $\DPSGD$ in JAX works with fixed batch sizes, for each set of parameters we pick a maximum batch size $B$ and truncate the batches to have size at most $B$, and batches with size smaller than $B$ are padded with dummy examples with zero weight.
For Poisson subsampling and \BnB sampling, the batch sizes are (marginally) distributed as the binomial distribution $\mathsf{Bin}(n, b/n)$.
\citet[Proposition 3.2, Theorem 3.3]{chua24scalable} showed that for a given expected batch size $b$, the total number of examples $n$, the number of training steps $T$, and a maximum batch size of $B$, $\ABLQP$ satisfies $(\eps, \deltaP(\eps) + \delta')$-DP for $\delta' = (1 + e^\eps) T \cdot \Pr_{r \sim \mathrm{Bin}(n, b/n)}[r > B]$. The same argument also applies in the case of \BnB sampling.
In our experiments, we choose $B$ such that this quantity is at most
$\delta' \le 10^{-10}$ even at $\eps = 10$, so the change in the $\delta$ values is negligible relative to the values of $\deltaP(\eps)$ and $\deltaB(\eps)$ we consider.
In particular, we use maximum batch sizes in $\{1328, 2469, 4681, 9007, 17\,520, 34\,355, 67\,754, 134\,172, 266\,475\}$ for the Criteo pCTR dataset and $\{1320, 2458, 4665, 8984, 17\,488, 34\,309, 67\,687, 134\,071, 266\,317\}$ for the Criteo Search dataset, corresponding to the expected batch sizes of $\{1024, 2048, 4096, 8192, 16\,384, 32\,768, 65\,536, 131\,072, 262\,144\}$.

For the privacy accounting in \Cref{fig:privacy}, we use order statistics sampling with the following set of order indices:
\begin{itemize}
\item For Criteo pCTR dataset, there are a total of $37\,000\,000$ examples in the training set.
	\begin{itemize}[label=$\triangleright$]
	\item For expected batch size $1024$, there are total of $T = 36\,133$ steps. We use the order statistics of $(1, 2, \ldots, 500, 510, 520, \ldots, 1000, 1100, 1200, \ldots, 19\,900)$, which involves a total of $739$ orders, which is about $2\%$ of the number of steps.
	\item For expected batch size $8192$, there are a total of $T = 4517$ steps. We use the order statistics of $(1, 2, \ldots, 500, 510, 520, \ldots, 1000, 1050, 1100, \ldots, 2950)$, which involves a total of $589$ orders, which is about $13\%$ of the number of steps.
	\end{itemize}
\item For Criteo Sponsored Search Conversion Log dataset, there are a total of $12\,796\,151$ examples in the training set.
	\begin{itemize}[label=$\triangleright$]
	\item For expected batch size $1024$, there are total of $T = 12\,497$ steps. We use the order statistics of $(1, 2, \ldots, 500, 510, 520, \ldots, 1000, 1100, 1200, \ldots, 6900)$, which involves a total of $609$ orders, which is about $4.8\%$ of the number of steps.
	\item For expected batch size $8192$, there are a total of $T = 1563$ steps. We use the order statistics of $(1, 2, \ldots, 500, 510, 520, \ldots, 990)$, which involves a total of $589$ orders, which is about $35\%$ of the number of steps.
	\end{itemize}
\end{itemize}
For efficiency, instead of applying Monte Carlo estimation using independent samples for each $\eps$, we instead generate $5 \cdot 10^8$ samples of upper bounds on $L_{\PB ~\|~ \QB}(x)$ (resp. $L_{\QB ~\|~ \PB}(x)$) using the order statistics sampling~(\Cref{alg:order-sampling}), and subsequently use them to estimate $\Deps(\PB ~\|~ \QB)$ (resp. $\Deps(\QB ~\|~ \PB)$). For this reason, we do not use importance sampling here since that depends on each $\eps$. The computation was performed in parallel on a cluster of 60 CPU machines.

\section{\boldmath Incomparability of Dominating Pairs for \texorpdfstring{$\ABLQB$}{ABLQ\_B} and \texorpdfstring{$\ABLQP$}{ABLQ\_P}}\label{app:bnb-vs-poisson-incomparable}

We elaborate on \Cref{rem:bnb-poisson-incomparable} showing that $\ABLQB$ and $\ABLQP$ have incomparable privacy guarantees.

\begin{theorem}\label{thm:bnb-vs-poisson-2}
For all $\sigma > 0$ and $T > 1$, there exists $\eps_0, \eps_1 \in \R$ such that
\begin{enumerate}[label=(\alph*),topsep=0pt,itemsep=0pt]
\item $D_{e^{\eps_0}}(\PB ~\|~ \QB) > D_{e^{\eps_0}}(\PP ~\|~ \QP)$, and
\item $D_{e^{\eps_1}}(\PB ~\|~ \QB) < D_{e^{\eps_1}}(\PP ~\|~ \QP)$.
\end{enumerate}
\end{theorem}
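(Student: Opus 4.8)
The plan is to exploit a qualitative asymmetry between the two privacy loss functions. From \Cref{prop:P-dominating}, for every $x \in \R^T$,
\[
L_{\PP \| \QP}(x) ~=~ \sum_{t=1}^T \log\prn{1 - \tfrac1T + \tfrac1T\, e^{(x_t - 1/2)/\sigma^2}} ~\ge~ T \log\prn{1 - \tfrac1T} ~=:~ c_T ~<~ 0,
\]
since each summand is at least $\log(1 - 1/T)$; equivalently $L_{\QP \| \PP}(x) \le -c_T$ for all $x$. By contrast, \eqref{eq:loss_pb_qb} shows $L_{\PB \| \QB}(x) \to -\infty$ as every coordinate of $x$ tends to $-\infty$, so $L_{\PB \| \QB}$ is unbounded below. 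I would feed these facts into the identity $\Deps(P \| Q) = (1 - e^\eps) + \sup_{\Delta}\prn{e^\eps Q(\Delta) - P(\Delta)}$ (obtained by taking $\Gamma = \Delta^c$ in the definition of $\Deps$), together with $e^\eps Q(\Delta) - P(\Delta) = \int_\Delta P(x)\,(e^{\eps - L_{P \| Q}(x)} - 1)\,dx$.

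For part (a), I would take $\eps_0$ to be any real number below $c_T$; note $\eps_0 < 0$, which is permitted since \Cref{def:dominating-pair} quantifies over all $\eps \in \R$. Because $L_{\PP \| \QP}(x) \ge c_T > \eps_0$ pointwise, the integrand $e^{\eps_0 - L_{\PP \| \QP}(x)} - 1$ is everywhere negative, so the supremum above is attained at $\Delta = \emptyset$ and $D_{e^{\eps_0}}(\PP \| \QP) = 1 - e^{\eps_0}$. For $(\PB, \QB)$ I would instead produce a set on which the supremum is strictly positive: let $\Delta := \set{x : L_{\PB \| \QB}(x) < \eps_0 - 1}$, which is nonempty and open, hence of positive measure under the full-support Gaussian $\QB$; then $\PB(\Delta) = \int_\Delta e^{L_{\PB \| \QB}(x)}\,d\QB \le e^{\eps_0 - 1}\QB(\Delta) < e^{\eps_0}\QB(\Delta)$, so $D_{e^{\eps_0}}(\PB \| \QB) \ge \PB(\Delta^c) - e^{\eps_0}\QB(\Delta^c) = (1 - e^{\eps_0}) + \prn{e^{\eps_0}\QB(\Delta) - \PB(\Delta)} > 1 - e^{\eps_0}$. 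Combining the two gives $D_{e^{\eps_0}}(\PB \| \QB) > D_{e^{\eps_0}}(\PP \| \QP)$.

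For part (b), I would first note that $L_{\QP \| \PP} \le -c_T$ forces $\Deps(\QP \| \PP) = 0$ for all $\eps \ge -c_T$ (the $[\,\cdot\,]_+$ in \eqref{eq:hsd-as-expectation} vanishes pointwise), so that $\deltaP(\eps) = D_{e^\eps}(\PP \| \QP)$ in that range. Then \Cref{thm:bnb-vs-poisson} supplies an $\eps'$ with $\deltaB(\eps) < \deltaP(\eps)$ for all $\eps > \eps'$, and since $(\PB, \QB) \equiv \ABLQB$ (\Cref{thm:bnb-dominating-pair}) we have $D_{e^\eps}(\PB \| \QB) \le \deltaB(\eps)$. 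Taking $\eps_1 > \max\set{\eps', -c_T}$ then yields $D_{e^{\eps_1}}(\PB \| \QB) \le \deltaB(\eps_1) < \deltaP(\eps_1) = D_{e^{\eps_1}}(\PP \| \QP)$. (Alternatively, one can bypass \Cref{thm:bnb-vs-poisson} and chain $D_{e^\eps}(\PB \| \QB) \le \deltaB(\eps) \le \deltaD(\eps) = D_{e^\eps}(\PD \| \QD)$ via \Cref{prop:B-vs-S-and-D} and the reflection symmetry of the Gaussian mechanism, then quote \citet[Theorem 4.2]{chua24private} for $\deltaD(\eps) < \deltaP(\eps)$ at large $\eps$.)

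I do not expect a serious obstacle: once the boundedness dichotomy is isolated, both parts are a few lines. The one step needing slight care is the strict inequality in part (a), namely $D_{e^{\eps_0}}(\PB \| \QB) > 1 - e^{\eps_0}$, which relies on $L_{\PB \| \QB}$ being unbounded below on a set of \emph{positive} $\QB$-measure (immediate from continuity of $L_{\PB \| \QB}$ and the full support of $\QB$) rather than merely on a null set. I would also flag that the witness $\eps_0$ produced this way is negative; obtaining a positive-$\eps_0$ witness would instead require a finer comparison of the upper tails of the privacy-loss random variables of $\ABLQB$ and $\ABLQP$, which is not needed for the theorem as stated.
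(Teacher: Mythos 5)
Your proof is correct, and part (a) takes a genuinely different route from the paper. The paper argues non-constructively: it notes $\QP=\QB$ and that $\PP$ is the product of the marginals of $\PB$, invokes the strict inequality $\KL(\PB\|\QB)>\KL(\PP\|\QP)$ for a non-product joint (\Cref{lem:kl-inequality}), and then uses the data-processing inequality for KL together with the Blackwell-type converse (\Cref{lem:converse-post-process-and-domination}) to conclude $(\PP,\QP)\not\dominates(\PB,\QB)$, hence \emph{some} $\eps_0$ exists. You instead isolate the boundedness dichotomy of the privacy loss functions: $L_{\PP\|\QP}\ge T\log(1-1/T)=:c_T$ pointwise while $L_{\PB\|\QB}$ is unbounded below, so any $\eps_0<c_T$ gives $D_{e^{\eps_0}}(\PP\|\QP)=1-e^{\eps_0}$ exactly, whereas the set $\{L_{\PB\|\QB}<\eps_0-1\}$ has positive $\QB$-mass and forces $D_{e^{\eps_0}}(\PB\|\QB)>1-e^{\eps_0}$; your computations here check out. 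What each buys: your argument is elementary and constructive (an explicit negative witness $\eps_0$, no appeal to the Blackwell converse), while the paper's argument exposes the structural reason for incomparability (joint vs.\ product under a common reference measure) and would apply even without an explicit loss-function bound. Both produce only a negative (or sign-uncontrolled) $\eps_0$, consistent with the paper's closing remark that no $\eps\ge 0$ separation in this direction is implied. Part (b) is essentially the paper's proof (quote \Cref{thm:bnb-vs-poisson}); your added observation that $\Deps(\QP\|\PP)=0$ for $\eps\ge -c_T$, so that $\deltaP(\eps)=\Deps(\PP\|\QP)$ there, is a detail the paper leaves implicit and is a welcome bit of care.
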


We use the following lemma regarding KL divergence, defined for probability distributions $P$, $Q$ over the space $\Omega$ as $\KL(P ~\|~ Q) := \int_{\Omega} \log \frac{dP}{dQ}(\omega) \cdot dP(\omega)$.
\begin{lemma}\label{lem:kl-inequality}
Let $P$ be a joint distribution over $\Omega := \Omega_1 \times \cdots \times \Omega_n$, and let $Q = Q_1 \otimes \cdots \otimes Q_T$ be a product distribution over $\Omega$. Then, for $P_1, \ldots, P_T$ being the marginal distributions of $P$ over $\Omega_1, \ldots, \Omega_T$ respectively, it holds that
$$\KL(P ~\|~ Q) \ge \KL(P_1 \otimes \dots \otimes P_T ~\|~ Q)\,.$$
Moreover, equality holds if and only if $P$ is a product distribution.
\end{lemma}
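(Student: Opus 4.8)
If $\KL(P \| Q) = \infty$ there is nothing to prove, so assume it is finite; then $P$ is absolutely continuous with respect to $Q$, and hence each marginal $P_t$ is absolutely continuous with respect to $Q_t$, and $P$ is absolutely continuous with respect to the product $\bar{P} := P_1 \otimes \cdots \otimes P_T$. (As elsewhere in the paper, I argue with densities.) The plan is to establish the ``Pythagorean'' identity
\[
\KL(P \| Q) ~=~ \KL(P \| \bar{P}) ~+~ \KL(\bar{P} \| Q),
\]
from which the lemma is immediate: $\KL(P \| \bar{P}) \ge 0$ gives the inequality, and since $\KL(P \| \bar{P}) = 0$ holds iff $P = \bar{P}$ (i.e.\ $P$ equals the product of its own marginals, which is exactly the condition that $P$ be a product distribution — any product distribution must have the $P_t$ as its factors), this also yields the ``moreover'' part.

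To prove the identity, the key observation is that because $Q$ is a product distribution, $\log \tfrac{d\bar{P}}{dQ}(x) = \sum_{t=1}^T \log \tfrac{dP_t}{dQ_t}(x_t)$ is a sum of functions each depending on a single coordinate. Taking expectations under $P$ and using that the law of the $t$th coordinate under $P$ is exactly $P_t$,
\[
\E_{x \sim P}\!\left[\log \tfrac{d\bar{P}}{dQ}(x)\right] ~=~ \sum_{t=1}^T \E_{x_t \sim P_t}\!\left[\log \tfrac{dP_t}{dQ_t}(x_t)\right] ~=~ \sum_{t=1}^T \KL(P_t \| Q_t) ~=~ \KL(\bar{P} \| Q),
\]
the last equality being the standard additivity of $\KL$ over product distributions. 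On the other hand, the chain rule for Radon--Nikodym derivatives gives $\log \tfrac{dP}{dQ}(x) = \log \tfrac{dP}{d\bar{P}}(x) + \log \tfrac{d\bar{P}}{dQ}(x)$ for $P$-almost every $x$; taking $\E_{x \sim P}$ of both sides yields $\KL(P \| Q) = \KL(P \| \bar{P}) + \E_{x \sim P}[\log \tfrac{d\bar{P}}{dQ}(x)]$, and combining with the previous display gives the identity. (Equivalently, $\KL(P \| \bar{P})$ is the ``total correlation'' of $P$, which vanishes precisely for product $P$.)

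The only care needed is the routine measure-theoretic bookkeeping: the absolute-continuity chain $P \ll Q \Rightarrow P_t \ll Q_t$ and $P \ll \bar{P}$, and the justification of the density chain rule — all of which may be taken for granted in the density setting adopted by the paper. The conceptual content sits entirely in the one-line observation that $Q$ being a product makes $\log \tfrac{d\bar{P}}{dQ}$ additive across coordinates, so that its $P$-expectation depends on $P$ only through its marginals; the tightness of Gibbs' inequality then supplies the equality characterization. I therefore expect no real obstacle beyond writing these steps carefully.
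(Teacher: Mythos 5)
Your proof is correct, and it is the standard argument: the paper itself states \Cref{lem:kl-inequality} without proof (treating it as a known fact), so there is no in-paper argument to compare against; your Pythagorean decomposition $\KL(P\|Q)=\KL(P\|\bar P)+\KL(\bar P\|Q)$, valid because $\log\tfrac{d\bar P}{dQ}$ is coordinatewise additive when $Q$ is a product and hence has the same expectation under $P$ as under $\bar P$, is exactly the textbook route and yields both the inequality and the equality characterization via $\KL(P\|\bar P)=0 \iff P=\bar P$. Two small points worth making explicit if you write this up: (i) the absolute continuity $P\ll\bar P$ is not a completely generic consequence of $P\ll Q$ but does hold here because $Q$ is a product --- any $\bar P$-null set is, up to a $Q$-null set, covered by slabs $\{x: x_t\in B_t\}$ with $P_t(B_t)=0$, each of which is $P$-null; and (ii) your reduction to the finite case handles the inequality, but the ``if and only if'' as literally stated can degenerate when both divergences are $+\infty$ (a non-product $P$ can then satisfy equality trivially), so the equality characterization should be read under finiteness of $\KL(P\|Q)$ --- which is all the paper uses, since in \Cref{thm:bnb-vs-poisson-2} the relevant divergence $\KL(\PB\|\QB)$ is finite.
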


\begin{fact}[Post-processing inequality for KL-divergence]\label{fact:kl-post-processing}
For distributions $P, Q$ over $\Omega$, and distributions $A, B$ over $\Gamma$, if there exists $f : \Omega \to \Gamma$ such that $f(P) = A$ and $f(Q) = B$, then $\KL(P ~\|~ Q) \ge \KL(A ~\|~ B)$.
\end{fact}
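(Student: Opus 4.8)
This is the data-processing (post-processing) inequality for relative entropy, and the plan is to derive it from convexity of $t \mapsto t\log t$ via the log-sum inequality. First I would model the (possibly randomized) map $f$ by a Markov kernel $\kappa(\cdot \mid x)$ on $\Gamma$, so that $A = f(P)$ and $B = f(Q)$ have densities $A(y) = \int_\Omega \kappa(y\mid x)\,P(x)\,dx$ and $B(y) = \int_\Omega \kappa(y\mid x)\,Q(x)\,dx$. The key observation is that for a fixed output $y$, the two functions $x \mapsto \kappa(y\mid x)P(x)$ and $x \mapsto \kappa(y\mid x)Q(x)$ have the same pointwise likelihood ratio as $P$ and $Q$ do, namely $P(x)/Q(x)$, because the $\kappa(y\mid x)$ factors cancel.

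The main step is then to apply the integral form of the log-sum inequality --- itself an immediate consequence of Jensen's inequality for the convex function $\phi(t) = t\log t$ --- for each fixed $y$:
\[
\int_\Omega \kappa(y\mid x)\, P(x)\, \log\frac{P(x)}{Q(x)}\, dx
\;\ge\; A(y)\,\log\frac{A(y)}{B(y)}.
\]
Integrating this inequality over $y \in \Gamma$, the right-hand side becomes exactly $\KL(A\|B)$, while on the left-hand side Fubini--Tonelli together with $\int_\Gamma \kappa(y\mid x)\,dy = 1$ collapse the $y$-integral, leaving $\int_\Omega P(x)\log\frac{P(x)}{Q(x)}\,dx = \KL(P\|Q)$. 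This gives $\KL(P\|Q) \ge \KL(A\|B)$, as claimed.

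Equivalently --- and this is how I would phrase it if regular conditional distributions are invoked --- one forms the joint laws $\bar P(x,y) = P(x)\kappa(y\mid x)$ and $\bar Q(x,y) = Q(x)\kappa(y\mid x)$; decomposing $\KL(\bar P\|\bar Q)$ by conditioning on the first coordinate gives $\KL(\bar P\|\bar Q) = \KL(P\|Q)$ (the inner term vanishes since both joints use the same kernel), and decomposing by conditioning on the second coordinate gives $\KL(\bar P\|\bar Q) = \KL(A\|B) + \Ex_{y\sim A}[\,\KL(\bar P_{X\mid Y=y}\,\|\,\bar Q_{X\mid Y=y})\,] \ge \KL(A\|B)$ by nonnegativity of relative entropy. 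I expect the only real obstacle to be bookkeeping: justifying the interchange of integration order, handling the degenerate cases where $P(x)/Q(x)$ equals $0$ or $\infty$ (where the log-sum inequality still holds with the usual conventions and $\KL = +\infty$ whenever $P \not\ll Q$), and, on the chain-rule route, the existence of the conditional distributions --- all routine in the density setting assumed throughout the paper.
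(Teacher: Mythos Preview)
Your argument is correct: both the log-sum route and the chain-rule route are standard, valid proofs of the data-processing inequality for KL divergence, and your handling of the edge cases is appropriate. However, the paper does not actually prove this statement --- it is recorded as a \emph{Fact} and invoked without proof, so there is no ``paper's own proof'' to compare against. Your write-up simply supplies a (standard) proof where the paper appeals to common knowledge.
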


\begin{lemma}[{Converse to \Cref{lem:post-process-and-domination}; \cite[Theorem 2.5]{kairouz15composition}}]\label{lem:converse-post-process-and-domination}
For distributions $P, Q$ over $\Omega$, and distributions $A, B$ over $\Gamma$, if $(P, Q) \dominates (A, B)$ then there exists $f : \Omega \to \Gamma$ such that simultaneously $f(P) = A$ and $f(Q) = B$.
\end{lemma}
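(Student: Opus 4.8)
The plan is to establish, for pairs of distributions, the equivalence between domination in hockey‑stick divergence across \emph{all} $\eps \in \R$ and pointwise domination of trade‑off (ROC) functions, and then to invoke Blackwell's theorem on comparison of dichotomous experiments, which supplies exactly the desired post‑processing channel. \emph{First} I would normalize the hypothesis. Using the elementary identity $D_\gamma(P\|Q) = \gamma \, D_{1/\gamma}(Q\|P) + 1 - \gamma$ for every $\gamma > 0$ (obtained by replacing the optimizing event $E$ by its complement in $\Deps(P\|Q) = \sup_E P(E) - e^{\eps} Q(E)$), the assumption $(P,Q) \dominates (A,B)$ becomes equivalent to the two‑sided family of inequalities
\[
\Deps(P\|Q) \ge \Deps(A\|B) \quad\text{and}\quad \Deps(Q\|P) \ge \Deps(B\|A) \qquad \text{for all } \eps \ge 0 .
\]

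\emph{Second} I would pass to trade‑off functions. Writing $T(P,Q):[0,1]\to[0,1]$ for the optimal type‑II‑error curve of the test $H_0 = P$ vs.\ $H_1 = Q$, a short optimization of the linear functional $\phi \mapsto \E_P[\phi] - e^\eps\E_Q[\phi]$ over the (convex, centrally symmetric) set of achievable error pairs yields $\Deps(P\|Q) = 1 - \inf_{u\in[0,1]}\{\, u + e^\eps\, T(P,Q)(u)\,\}$, and symmetrically $\Deps(Q\|P) = 1 - \inf_{u\in[0,1]}\{\, u + e^\eps\, T(Q,P)(u)\,\}$. Since $T(P,Q)$ is convex, non‑increasing and continuous on $[0,1]$, the collection of these infima over $e^\eps\in(0,\infty)$ is precisely the (partial) Legendre data reconstructing $T(P,Q)$, and likewise for $T(Q,P)$; hence the two displayed families hold simultaneously \emph{iff} $T(P,Q)(\alpha) \le T(A,B)(\alpha)$ for every $\alpha\in[0,1]$. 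This step is the ``$\delta(\eps)$‑curve $\equiv$ trade‑off function'' correspondence underlying $f$‑DP, cf.\ \citet{dong22gaussian}.

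\emph{Third} I would invoke Blackwell's informativeness theorem \citep{blackwell1951comparison}, in the form stated for privacy by \citet{dong22gaussian}: for two dichotomies $(P,Q)$ over $\Omega$ and $(A,B)$ over $\Gamma$, one has $T(P,Q)\le T(A,B)$ pointwise if and only if there is a randomized map (Markov kernel) $f:\Omega\to\Gamma$ with $f(P)=A$ and $f(Q)=B$ simultaneously. Composing this with the first two steps gives the claim, and $f$ is exactly the post‑processing asserted to exist (the converse of \Cref{lem:post-process-and-domination}).

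The main obstacle is the ``only if'' direction of the Blackwell step --- actually constructing $f$ from the pointwise domination of trade‑off functions, since $\Omega$ and $\Gamma$ need not be finite or even discrete. The way I would handle this is to first reduce both pairs to canonical form by post‑processing to the privacy‑loss statistic: $(P,Q)$ is Blackwell‑equivalent to the pair of laws of $L_{P\|Q}(x)$ under $x\sim P$ and under $x\sim Q$, which are measures on $\R$ whose densities are tied by the explicit relation $dP/dQ = e^{x}$ (and similarly for $(A,B)$); on $\R$ the dominating kernel can be written down explicitly by ``thinning'' along the likelihood ratio, guided by the gap between the two trade‑off functions, with measurability handled by working in standard Borel spaces. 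As a sanity check I would first carry out the finite‑support case, where the existence of $f$ follows directly from LP duality / the minimax theorem applied to the linear program that searches for a channel mapping $(P,Q)$ to $(A,B)$.
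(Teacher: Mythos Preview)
The paper does not supply its own proof of this lemma; it simply records the statement and attributes it to \citet{blackwell1951comparison} and \citet{dong22gaussian}. Your proposal is precisely the natural unpacking of those two citations: the equivalence between domination in $\Deps$ for all $\eps\in\R$ and pointwise domination of trade--off functions is the ``$\delta(\eps)$ curve $\leftrightarrow$ $f$--DP'' correspondence of \citet{dong22gaussian}, and the passage from trade--off domination to the existence of a Markov kernel $f$ with $f(P)=A$, $f(Q)=B$ is exactly Blackwell's comparison theorem (stated for Polish spaces as Theorem~2.10 in \citet{dong22gaussian}). So your route matches what the paper intends.

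Two small remarks. First, your Step~1 is correct but not strictly necessary: once you have the identity $\Deps(P\|Q)=1-\inf_{u\in[0,1]}\{u+e^\eps\,T(P,Q)(u)\}$, the hypothesis over \emph{all} $\eps\in\R$ already ranges $e^\eps$ over all of $(0,\infty)$, and convex duality then yields $T(P,Q)\le T(A,B)$ directly without first splitting into the two one--sided families. Second, your ``main obstacle'' discussion is reasonable, but in practice one simply invokes the general--space form of Blackwell's theorem rather than reconstructing the kernel by hand; your reduction to the privacy--loss statistic does work (sufficiency guarantees a channel from the law of $L_{A\|B}$ back to $(A,B)$ via regular conditional distributions on standard Borel spaces), but it is more detail than the paper's one--line citation calls for.
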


\begin{proof}[Proof of \Cref{thm:bnb-vs-poisson-2}]
Part (a) follows from the observation that $\QP = \QB$ and $\PP$ can be obtained as simply the product of the marginal distributions of $\PB$. Thus, applying \Cref{lem:kl-inequality}, we get that $\KL(\PB ~\|~ \QB) > \KL(\PP ~\|~ \QP)$, with strict inequality because $\PB$ is not a product distribution. Thus, by the contrapositive of \Cref{fact:kl-post-processing}, we get that there does not exist a post-processing that simultaneously maps $\PP$ to $\PB$ and $\QP$ to $\QB$. Finally, by \Cref{lem:post-process-and-domination}, we conclude that $(\PP, \QP) \not\dominates (\PB, \QB)$ or in other words, there exists an $\eps_0 \in \R$ such that $D_{e^{\eps_0}}(\PB ~\|~ \QB) > D_{e^{\eps_0}}(\PP ~\|~ \QP)$.

Part (b) follows immediately from \Cref{thm:bnb-vs-poisson}.
\end{proof}

While \Cref{thm:bnb-vs-poisson} gives us that there exists an $\eps \ge 0$ such that $\deltaB(\eps) < \deltaP(\eps)$ for all $\sigma > 0$ and $T > 1$ (in fact, this holds for sufficiently large $\eps$), interestingly \Cref{thm:bnb-vs-poisson-2} does {\em  not} imply that there exists $\eps \ge 0$ such that $\deltaB(\eps) > \deltaP(\eps)$, since $\deltaB(\eps)$ corresponds to $\max\{\Deps(\PB ~\|~ \QB), \Deps(\QB ~\|~ \PB)\}$. Whether there always exists such an $\eps \ge 0$ for all $\sigma > 0$ and $T > 1$ is left open for future investigation.

\vfill %

\end{document}